\setlist[enumerate]{leftmargin=.5in}
\setlist[itemize]{leftmargin=.5in}
\def\R{\mathbb{R}}
\def\C{\mathbb{C}}
\def\der{\mathrm{d}}
\def\x{\mathbf{x}}
\def\O{\mathsf{O}}
\def\SO{\mathsf{SO}}
\def\E{\mathbb{E}}
\def\der{\mathrm{d}}
\newcommand{\T}{\mathsf{T}}
\def\1{\mathbf{1}}
\def\E{\mathbb{E}}
\def\ii{\mathrm{i}}
\DeclareMathOperator{\diag}{diag}
\DeclareMathOperator{\Tr}{Tr}
\DeclareMathOperator*{\argmin}{arg\,min}
\newcommand{\bigO}{\mathit{O}}
\newtheorem{theorem}{Theorem}[section]
\newtheorem{lemma}[theorem]{Lemma}
\newtheorem{proposition}[theorem]{Proposition}
\theoremstyle{definition}
\newtheorem{remark}[theorem]{Remark}
\newtheorem*{remark*}{Remark}
\newtheorem*{example*}{Example}
\newtheorem{assumption}[theorem]{Assumption}
\newtheorem{problem}[theorem]{Problem}
\numberwithin{equation}{section}
\numberwithin{figure}{section}
\title{Two datasets are better than one: Method of double moments for 3-D reconstruction in cryo-EM}
\author{
Joe Kileel \thanks{\href{mailto:jkileel@math.utexas.edu}{jkileel@math.utexas.edu}}\\
\small Department of Mathematics and Oden Institute, University of Texas at Austin, USA
\and
Oscar Mickelin \thanks{\href{mailto:oscarmi@tsinghua.edu.cn}{oscarmi@tsinghua.edu.cn}} \\
\small Yau Mathematical Sciences Center, Tsinghua University, China
\and
Amit Singer \thanks{\href{mailto:amits@math.princeton.edu}{amits@math.princeton.edu}} \\ 
\small Program in Applied and Computational Mathematics and Department of Mathematics,\\
\small Princeton University, USA
\and
Sheng Xu 
\thanks{Corresponding author. \href{mailto:sxu21@princeton.edu}{sxu21@princeton.edu}}\\
\small Program in Applied and Computational Mathematics, Princeton University, USA
}
\begin{document}

\maketitle

\begin{abstract}
Cryo-electron microscopy (cryo-EM) is a powerful imaging technique for reconstructing three-dimensional molecular structures from noisy tomographic projection images of randomly oriented particles. We introduce a new data fusion framework, termed the method of double moments (MoDM), which reconstructs molecular structures from two instances of the second-order moment of projection images obtained under distinct orientation distributions: one uniform, the other non-uniform and unknown. We prove that these moments generically uniquely determine the underlying structure, up to a global rotation and reflection, and we develop a convex-relaxation-based algorithm that achieves accurate recovery using only second-order statistics. Our results demonstrate the advantage of collecting and modeling multiple datasets under different experimental conditions, illustrating that leveraging dataset diversity can substantially enhance reconstruction quality in computational imaging tasks.
\end{abstract}

\let\thefootnote\relax\footnotetext{\textit{Keywords:} data fusion, cryo-electron microscopy, 
Kam's method, autocorrelation analysis, unique identification, alternating optimization}

\section{Introduction}\label{sec:introduction}

Cryo-electron microscopy has become an increasingly popular technique for single-particle three-dimensional (3-D) structure reconstruction \cite{frank2006three,kuhlbrandt2014resolution,bai2015cryo,callaway2020revolutionary}. It aims to recover the underlying molecular structure from a large collection of noisy two-dimensional (2-D) tomographic projection images taken at unknown and typically random orientations, with applications in structural biology, medicine, and drug discovery \cite{renaud2018cryo,van2020cryo,duan2024cryo,shi2023cryo,wrapp2020cryo}. 

A fundamental challenge in cryo-EM is the high noise level and the unknown random viewing directions of particles in the sample, which complicate the reconstruction process. This paper introduces a new statistical approach for 3-D reconstruction that relies solely on low-order statistics, specifically, two instances of the second-order moment of the observed projection images, each computed from images with different distributions of orientations. Crucially, we prove that under mild conditions, these moments uniquely determine the underlying structure (up to a global rotation and reflection), and we devise a computationally efficient algorithm based on convex relaxation to perform the reconstruction.

The sample complexity of our method scales as $\omega(\mathrm{SNR}^{-2})$, where $\mathrm{SNR}$ denotes the signal-to-noise ratio of the input images. This improves upon previous method-of-moments approaches that either require access to higher order moments, typically resulting in sample complexity scaling at least as $\omega(\mathrm{SNR}^{-3})$, or do not offer uniqueness guarantees. Furthermore, in contrast to earlier methods that often suffer from spurious local minima and stagnation, our algorithm demonstrates robust performance in numerical experiments. See Section~\ref{sec:existing_methods} for a detailed comparison with related work.

An important practical motivation for our method comes from the fact that projection orientations in cryo-EM are typically not uniformly distributed. This anisotropy in orientation distribution can result from experimental factors such as preferred particle orientations or sample preparation artifacts \cite{taylor2008retrospective, liu2013deformed, glaeser2017opinion, noble2018routine, carragher2019current, lyumkis2019challenges, baldwin2020non}. To address this issue, recent experimental advances have sought to manipulate particle orientation distributions to be more uniform, by, for example, tilting the specimen \cite{aiyer2024overcoming}, applying laser flash melting \cite{straub2025laser}, adding charged detergents \cite{li2021effect}, introducing ultrasonic excitation \cite{williams2025overcoming}, encapsulating molecules in liposomes \cite{zikai2025liposome} or in protein shells (``nanocrates'') \cite{jenkins2025overcoming}, or by high-speed droplet vitrification before protein diffusion \cite{gusach2025outrunning}.

Our proposed method, termed the method of double moments (MoDM), is naturally suited to such experimental scenarios. It leverages two datasets collected under distinct orientation distributions: one ideally uniform, made possible through experimental interventions, and the other non-uniform and unknown \textit{a priori}. Although this setting is increasingly relevant in practice, it has not been systematically explored in existing statistical reconstruction frameworks. Our results demonstrate that fusing such complementary datasets enables exact structure recovery using only second-order statistics, thereby opening new avenues for robust and efficient cryo-EM reconstruction despite variations in particle orientations.

While this advantage is illustrated here within the method-of-moments framework, the underlying principle is more general. We anticipate that maximum likelihood and Bayesian inference approaches to cryo-EM reconstruction can also benefit from specifying separate orientation-distribution priors for each dataset, rather than merging all images into a single dataset under a prior that is typically assumed to be isotropic. Moreover, this concept may extend beyond cryo-EM to other imaging modalities, such as X-ray Free Electron Lasers (XFEL) \cite{von2018structure, zhao2024structure}, or even to multimodal reconstruction scenarios, for example, those that combine anisotropic cryo-EM with isotropic XFEL data, or integrate cryo-EM with Small-Angle X-ray Scattering (SAXS) data \cite{kim2017cross,lyu2019cross}.

The remainder of this paper is organized as follows.
Section~\ref{sec:notation} introduces the notation used throughout the paper.
Section~\ref{sec:image_formation} describes the image formation model considered in cryo-EM, and Section~\ref{sec:existing_methods} reviews related work.
Section~\ref{sec:setup} details the assumptions and conventions adopted in this study and precisely states the main reconstruction problem.
Section~\ref{sec:algorithm} introduces the proposed algorithm for solving the problem.
In particular, Section~\ref{sec:numerics} provides numerical experiments showing the effectiveness of the method.
Section~\ref{sec:uniqueness} proves that the problem admits a unique solution. Section~\ref{sec:stability} establishes the stability of the corresponding
Cholesky decomposition under potentially non-uniform viewing distributions of the first dataset.
Finally, Section~\ref{sec:conclusion} concludes the paper.
The code implementing the proposed algorithm is available at \href{https://github.com/oscarmickelin/modm}{https://github.com/oscarmickelin/modm}.

\subsection{Notation}\label{sec:notation}

Denote by $\R$ the set of real numbers, by $\mathbb{Z}_{\ge 0}$ the set of nonnegative integers, and by $\1_{\mathcal{X}}$ the indicator function of a set $\mathcal{X}$. 
For a matrix $M$, denote by $M^{\T}$ and $M^{\mathsf{H}}$ its transpose and conjugate transpose, respectively. 
We denote by $\O(n)$ the set of real orthogonal matrices of dimension $n$, that is,
$\O(n) = \{ M \in \mathbb{R}^{n \times n} : M M^{\T} = M^{\T} M = I \}$.
Similarly, $\SO(n)$ denotes the special orthogonal group of dimension $n$, consisting of real orthogonal matrices with determinant $+1$, i.e.,
$\SO(n) = \{ M \in \O(n) : \det(M) = 1 \}$.
We denote by $\mathsf{U}(n)$ the set of unitary matrices of dimension $n$, that is, $\mathsf{U}(n)=\{M\in\mathbb{C}^{n\times n}:MM^{\mathsf{H}}=M^{\mathsf H}M=I\}$. Moreover, we use $\O(n,\mathbb{C})$ to denote the set of complex orthogonal matrices of dimension $n$, defined by
$\O(n,\mathbb{C}) = \{ M \in \mathbb{C}^{n \times n} : M M^{\T} = M^{\T} M = I \}$. Note that, unlike unitary matrices, the orthogonality condition here does not involve complex conjugation.

For matrices $M_1 \in \mathbb{C}^{n_1 \times n_1}, \ldots, M_N \in \mathbb{C}^{n_N \times n_N}$, 
let $\mathrm{blockdiag}_{i=1,\ldots,N}(M_i)$ denote the block-diagonal matrix of size 
$\big(\sum_{i=1}^N n_i\big) \times \big(\sum_{i=1}^N n_i\big)$ defined as
\begin{align*}
\text{blockdiag}_{i=1, \ldots,  N}(M_i) := \begin{bmatrix}
M_1 & 0  & \cdots & 0 \\
0 & M_2  & \cdots & 0 \\
\vdots & \vdots & \ddots & \vdots \\
0 & 0 & \cdots &  M_N
\end{bmatrix}.    
\end{align*}
Equivalently, for two matrices $M_1$ and $M_2$, we write $M_1 \oplus M_2$ to denote their block-diagonal concatenation:
\[
M_1 \oplus M_2 :=
\begin{bmatrix}
M_1 & 0 \\
0 & M_2
\end{bmatrix}.
\]
This binary operation can be extended recursively to $N$ matrices as
\[
M_1 \oplus M_2 \oplus \cdots \oplus M_N := \mathrm{blockdiag}_{i=1,\ldots,N}(M_i).
\]
For integrable functions $f, g : \mathbb{R}^d \to \mathbb{C}$, the convolution of $f$ and $g$ is defined as
\[
(f * g)(\mathbf{x})
:= \int_{\mathbb{R}^d} f(\mathbf{x} - \mathbf{y}) \, g(\mathbf{y}) \, \der \mathbf{y},
\qquad \mathbf{x} \in \mathbb{R}^d.
\]
For a square-integrable function $f:\R^d \to \R$, we use the following convention for the Fourier transform:
\[
\widehat{f}(\boldsymbol{\omega})
= \int_{\mathbb{R}^d} f(\mathbf{x}) \, e^{-2\pi i \, \mathbf{x}\cdot \boldsymbol{\omega}} \, \der \mathbf{x}, \qquad \boldsymbol{\omega} \in \mathbb{R}^d,
\]
where $\mathbf{x}\cdot \boldsymbol{\omega} = \sum_{j=1}^d x_j \omega_j$ denotes the Euclidean inner product. For a square-integrable function $f:\SO(3)\to\mathbb{C}$, we define the $L_2$ norm by
\[
\|f\|_{L_2(\SO(3))}
=
\Big(
\int_{\SO(3)} |f(R)|^2 \, \der R
\Big)^{1/2},
\]
where $\der R$ denotes the Haar measure on $\SO(3)$ satisfying
\[
\int_{\SO(3)} \der R = 1.
\]

\subsection{Image formation model} \label{sec:image_formation}
We denote the electrostatic potential of a target 3-D molecular structure by $\Phi^*:\mathbb{R}^3 \rightarrow \mathbb{R}$, and represent a random rotation by $R \in \SO(3)$. The action of $R$ on $\Phi^*$ is written as $R^\T \cdot \Phi^* : \mathbb{R}^3 \rightarrow \mathbb{R}$, and defined by
\begin{equation} \label{eq:def-rotated-3-D}
   (R^\T\cdot \Phi^*) (x_1,x_2,x_3) :=  \Phi^*\left(R(x_1,x_2,x_3)\right),~~~\text{for}~(x_1,x_2,x_3)\in\R^3,
\end{equation}
where $R^\T \cdot \Phi^*$ denotes the rotated potential via a change of coordinates by the rotation $R$. The corresponding tomographic projection image $I_R(x_1,x_2)$, acquired under viewing direction $R$, is modeled as
\begin{equation}\label{eq:def_proj_images}
I_R(x_1,x_2) = \int_{-\infty}^{\infty} (R^\T\cdot \Phi^*)(x_1,x_2,x_3)\mathrm{d} x_3 + \varepsilon(x_1,x_2),~~~\text{for}~(x_1,x_2)\in\R^2,
\end{equation}
where $\varepsilon$ is an additive Gaussian white noise term with variance $\sigma^2 I$, independent of the signal. The noise level $\sigma^2$ can typically be estimated in advance from the observed images. We further assume the random rotation $R$ follows an unknown probability distribution $\rho^*$ over the rotation group $\SO(3)$. In a standard cryo-EM experiment, one observes a collection of noisy 2-D projection images $I_1, \ldots, I_N$, where each image $I_i$ is associated with an independent unknown rotation $R_i\sim\rho^*$. For simplicity, we assume throughout the paper that all projection images are perfectly centered and ignore the effects of in-plane shifts and the contrast transfer function. In practice, both factors should be accounted for in a complete image formation model; however, neglecting them allows us to focus on the core statistical and geometric aspects of our analysis.  Further, there are established methods to account for these effects (as explained below). 

To better understand the relationship between the projections and the 3-D structure, it is useful to consider the 2-D Fourier transform of the projection images. By the Fourier slice theorem \cite{natterer2001mathematics}, the 2-D Fourier transform $\widehat{I_R}$ of a projection image $I_R$ corresponds to a central planar slice of the 3-D Fourier transform $\widehat{\Phi^*}$ of the volume, taken perpendicular to the viewing orientation $R$. Specifically,
taking the Fourier transform of \eqref{eq:def_proj_images} yields
\begin{align}\label{eq:Fourier_slice}
\widehat{I_R}(\omega_1,\omega_2) = (R^\T \cdot \widehat{\Phi^*})(\omega_1,\omega_2,0)+\widehat\varepsilon(\omega_1,\omega_2),~~~\text{for}~(\omega_1,\omega_2)\in\R^2.
\end{align}
This frequency-domain formulation plays a central role in many reconstruction algorithms and is particularly useful for statistical modeling of image formation under varying orientations.

The  goal in cryo-EM then is to recover the structure $\Phi^*$ from the noisy projection images, without prior knowledge of the distribution $\rho^*$ (see also Section \ref{sec:precise}). Due to the high noise level in individual images, achieving meaningful resolution in the reconstructed 3-D structure typically requires tens of thousands of projection images or more \cite{bai2015cryo,wrapp2020cryo}. 

A more refined and realistic image formation model than \eqref{eq:def_proj_images} also accounts for both in-plane shifts of the projection images and optical aberrations introduced by the microscope.  The latter is modeled via convolution with a point spread function $h_i(x_1,x_2)$ (see Section \ref{sec:notation} for the definition of convolution) whose Fourier transform is known as the contrast transfer function \cite{singer2018mathematics}. Each $h_i$ is an approximately radially symmetric, highly oscillatory function with frequent zero crossings, which complicates the inversion process. Under this model, the projection image $I_i$ corresponding to rotation $R_i$ and in-plane shift $\mathbf{t}_i$ is given by 
\begin{align}\label{eq:more-complex}
I_i(x_1,x_2) = h_i(x_1,x_2) * S_{\mathbf{t}_i} \Big(\int_{-\infty}^{\infty} (R_i^\T\cdot \Phi^*)(x_1,x_2,x_3)\mathrm{d} x_3\Big) + \varepsilon(x_1,x_2),~~~\text{for}~(x_1,x_2)\in\R^2,
\end{align}
where $S_{\mathbf{t}_i}$ denotes a two-dimensional shift of the image by the vector $\mathbf{t}_i\in\R^2$, which in Fourier-space becomes
\begin{align} \label{eq:more-complex2}
\widehat{I}_i(\omega_1,\omega_2) = \widehat{h}_i(\omega_1,\omega_2) \cdot e^{-2\ii\pi \mathbf{t}_i \cdot \boldsymbol{\omega}} \cdot \Big((R^\T \cdot \widehat{\Phi^*})(\omega_1,\omega_2,0)\Big) + \widehat{\varepsilon}(\omega_1,\omega_2),~~~\text{for}~\boldsymbol{\omega} =(\omega_1,\omega_2)\in\R^2.
\end{align}
In the proposed algorithm, the projection images $\{{I_i}\}_{i=1}^N$ are used solely to estimate the second-order moment of the underlying structure. The influence of the point spread function can be compensated for during this estimation, provided that the functions $h_i$, for $i = 1, \ldots, N$, exhibit sufficiently non-overlapping zero crossings \cite{shi2022ab,marshall2023fast}. Similarly, in-plane shifts can be corrected using established centering algorithms \cite{heimowitz2021centering}. We therefore assume that these corrections have been performed in advance and omit the point spread functions and shifts in the subsequent analysis.

\subsection{Existing methods}\label{sec:existing_methods}
\paragraph{Maximum likelihood-based approaches.}  
Maximum likelihood estimation provides a principled statistical framework for 3-D structure reconstruction. The goal is to estimate the unknown 3-D molecular structure by maximizing the likelihood of observing a given set of 2-D projection images $\{I_i\}_{i=1}^N$, as described by the formation model \eqref{eq:def_proj_images}. Typically, all cryo-EM projection images are amalgamated into a single dataset, regardless of possible differences in their orientation distributions. Under this setting, the structure is estimated by maximizing the marginal likelihood: 
\begin{align}
\mathcal{L}(\Phi\mid \{I_i\}_{i=1}^N)
= \sum_{i=1}^N \log
\int_{\SO(3)\times \R^2}
p(I_i \mid \Phi, R, \mathbf{t})\,
p(R)\,p(\mathbf{t})
\mathrm{d}R \mathrm{d}\mathbf{t}.
\end{align}
Here, $p(I_i\mid\Phi,R,\mathbf{t})$ denotes the likelihood of observing image $I_i$ given a rotation $R$, a 2-D shift $\mathbf{t}$, and the 3-D structure $\Phi$. The terms $p(R)$ and $p(\mathbf{t})$ represent the priors over rotations and translations, respectively. In practice, $p(R)$ is typically assumed to be uniform due to the lack of knowledge about the true rotation distribution $\rho^*$, which may lead to bias or deformation in the reconstructed volume \cite{xu2025misspecified}. This formulation marginalizes out the latent rotation variable $R$ and translation variable $\mathbf{t}$, reflecting the fact that they are unobserved during data acquisition. 

Since this is a classical example of an incomplete data problem, the expectation-maximization (EM) algorithm is commonly employed to maximize the likelihood \cite{dempster1977maximum}. Popular state-of-the-art methods resort to EM-based procedures (commonly referred to as 3-D iterative refinement) which alternate between estimating the posterior distribution over rotations (E-step) and updating the 3-D structure estimate by maximizing the expected log-likelihood (M-step) \cite{sigworth1998maximum, scheres2012relion, punjani2017cryosparc}.

Despite their empirical success, these approaches suffer from non-convexity, and no global convergence guarantees are known \cite{sigworth2015principles, singer2018mathematics}. As a result, the algorithm may converge to a local maximum, particularly if the initialization is not sufficiently close to the ground truth. This is problematic for downstream applications such as drug discovery and design, where reconstruction accuracy and reliability are critical \cite{merk2016breaking, renaud2018cryo}. Moreover, the methods are computationally intensive, requiring access to the entire dataset during each iteration, further limiting scalability.

\paragraph{Method of moments.} An alternative approach is based on the method of moments, using experimental data to compute empirical moments of the Fourier transforms of the projection images, at increasing orders. The $k$th-order empirical moment $\widetilde{m}_k : \mathbb{R}^{2k} \rightarrow \mathbb{C}$ is given by\footnote{ The complex conjugate in the last mode ensures  symmetry properties of the second-order moment; some related work uses different conventions.}
\begin{align}\label{eq:def_moments}
\widetilde{m}_k\left( \boldsymbol{\omega}_1, \ldots, \boldsymbol{\omega}_k  \right) &:= \frac{1}{N} \sum_{i=1}^N \widehat{I}_i(\boldsymbol{\omega}_1) \cdots  \widehat{I}_i(\boldsymbol{\omega}_{k-1})  \overline{\widehat{I}_i(\boldsymbol{\omega}_k)} - B_k(\boldsymbol{\omega}_1, \ldots, \boldsymbol{\omega}_k,\sigma), 
\end{align}
where $\boldsymbol{\omega}_j \in \mathbb{R}^2$, for $j=1, \ldots, k$, indexes a location in the 2-D Fourier domain of the projected images, and should be understood as part of the coordinate system of the moment tensor. Here, $\widehat{I}_i$ denotes the 2-D Fourier transform of the $i$-th projection image. The term $B_k(\boldsymbol{\omega}_1, \ldots, \boldsymbol{\omega}_k, \sigma)$ denotes a debias term that depends on the noise variance of the projection images (which we assume has been estimated from the data). Since $\widetilde{m}_k$ is computed from data and thus implicitly depends on the underlying molecular structure $\Phi^*$ and rotation distribution $\rho^*$, we sometimes write $\widetilde{m}_k[\Phi^*,\rho^*]$ to highlight its dependence on $\Phi^*$ and $\rho^*$. This notation helps clarify later sections where we consider multiple such distributions.

The method-of-moments approach attempts to reconstruct the molecular structure by matching the empirical moments, computed from the data, to the population moments evaluated at candidate parameters $\Phi$ and $\rho$. These population moments, denoted by $m_k[\Phi, \rho]: \mathbb{R}^{2k} \rightarrow \mathbb{C}$, are defined as
\begin{align}\label{eq:def_pop_moment}
   m_k[\Phi, \rho](\boldsymbol{\omega}_1, \ldots, \boldsymbol{\omega}_k) &:=\E_{\varepsilon}\Big[\int_{\SO(3)} \widehat{I}_{R}(\boldsymbol{\omega}_1)  \cdots  \widehat{I}_{R}(\boldsymbol{\omega}_{k-1})  \overline{\widehat{I}_{R}(\boldsymbol{\omega}_k)} \rho(R) \der R\Big]  - B_k(\boldsymbol{\omega}_1, \ldots, \boldsymbol{\omega}_k,\sigma),
\end{align}
where the expectation is taken over the noise term. Choosing a weighted squared loss, the reconstruction problem becomes 
\[
\min_{\Phi, \rho} \sum_{k=1}^d \lambda_k \|\widetilde{m}_k - m_k[\Phi,\rho] \|^2_F,
\]
for some suitably chosen weights $\lambda_k \in \mathbb{R}_{\geq 0}$, where $k=1, \ldots , d$. We often omit the dependence on $\Phi$ and $\rho$ from the notation when it is clear from context. In particular, the first and second-order population moments take the following form:
\begin{align}
m_1(\boldsymbol{\omega}_1) = & \E_{\varepsilon}\Big[\int_{\SO(3)} \widehat{I_R}(\boldsymbol{\omega}_1) \rho(R) \der R\Big],\label{eq:1st_moment_Cart}\\
m_2(\boldsymbol{\omega}_1,\boldsymbol{\omega}_2) = & \E_{\varepsilon}\Big[\int_{\SO(3)} \widehat{I}_{R}(\boldsymbol{\omega}_1) \overline{\widehat{I_R}(\boldsymbol{\omega}_2)} \rho(R) \der R\Big]- \E_{\varepsilon} [\widehat{\varepsilon}(\boldsymbol{\omega}_1) \overline{\widehat{\varepsilon}(\boldsymbol{\omega}_2)}],\label{eq:2nd_moment_Cart}
\end{align}
where the first expression follows from the assumption that the noise is zero-mean, and the second contains a debias term that depends only on the second-order statistics of the noise. These two moments are the primary focus of this work, as they form the basis for both the theoretical analysis and the proposed reconstruction algorithm.

The method of moments offers several advantages over competing approaches, such as maximum likelihood-based methods with 3-D iterative refinement. One key benefit is that empirical moments in \eqref{eq:def_moments} can be computed using only one or two passes over the data, after which the raw dataset no longer needs to be accessed \cite{zhao2016fast,bhamre2016denoising,shi2022ab,marshall2023fast}. This property significantly reduces computational costs for sufficiently large datasets, especially when compared to the repeated access and high iteration count required in likelihood-based refinement pipelines. Another advantage lies in its applicability to small molecular structures (e.g., below 40 kDa), where state-of-the-art software implementations of 3-D iterative refinement encounter challenges \cite{scheres2012relion,punjani2017cryosparc}. Even in these challenging regimes, particle locations can still be detected reliably from micrographs \cite[Figure 10f–h]{vinothkumar2016single}, which allows empirical moments to be formed and the method of moments to be applied, bypassing the limitations of traditional refinement.

Historically, the method of moments was introduced to the cryo-EM setting by Kam \cite{kam1980reconstruction}, who observed numerically that third-order population moments can uniquely determine bandlimited molecular structures (to be precisely defined in Section~\ref{sec:basis_for_structures}), under the assumption that the viewing directions are uniformly distributed over $\SO(3)$. These observations have since been rigorously justified under various technical assumptions \cite{bandeira2023estimation,fan2024maximum,edidin2024orbit}. Despite these strengths, significant challenges remain. Accurate estimation of the $d$th-order moment requires a number of samples that scales as $\omega(\sigma^{2d})$ \cite{bandeira2023estimation,abbe2018estimation}, which becomes prohibitive even at moderate moment orders such as $d=3$, particularly in high-noise settings. Furthermore, when moments are discretized as tensors, both storage and computational costs grow exponentially with the moment order, presenting serious practical limitations. Recent approaches attempt to mitigate these issues by exploiting compressed low-rank tensor formats \cite{hoskins2024subspace}.

To address these complexities, recent efforts have focused on the use of only second-order moments. Through prior work, reconstruction to a limited resolution is possible under certain structural assumptions or with sufficient side information. In detail, it has been shown that the second-order moment determines the molecular structure uniquely if the structure is sparse in either generic bases or as a Gaussian mixture model \cite{bendory2023autocorrelation,bendory2024sample}, if a homologous model is known \cite{bhamre2015orthogonal}, or if two clean projection images are known \cite{levin20183d}. This also enables the construction of pseudo-metrics for comparing cryo-EM images directly, without explicit reconstruction \cite{zhang2024moment}. Moreover, for generic orientation distributions, it has been shown that there exists a finite set of reconstructions consistent with a given second-order moment \cite{sharon2020method}. These studies, along with related works \cite{bhamre2017anisotropic, huang2023orthogonal}, have developed algorithms for \textit{ab initio} reconstruction that can subsequently be refined using iterative methods. However, obtaining high-resolution reconstructions directly from moments remains highly challenging. The reconstruction problem based solely on the second moment is inherently ill-posed, and even with sparsity priors or additional side information, achieving moderate resolutions is computationally demanding. Further difficulties arise from the nonconvex and high-dimensional nature of the associated optimization problem, which often causes iterative methods to stagnate or converge to spurious local minima, thereby failing to recover the ground-truth structure.

\subsection{Contributions}
This paper introduces a new data fusion approach that highlights how ``the whole (of a dataset) is greater than the sum of its parts'' and the central role of orientation distributions for reconstruction. More precisely, our work takes advantage of variations in the orientation distributions between two datasets to achieve increased performance, as compared to simply using one dataset or combining multiple datasets into one. The key insight is to exploit recent experimental advances in cryo-EM sample preparation that enable the collection of multiple datasets of the same underlying molecule, each associated with a different distribution over 3-D orientations.

Specifically, we assume access to empirical second-order moments of the form
\begin{equation}\label{eq:match_double_moments}
\widetilde{m}_2[\Phi^*,\rho_1^*], \qquad  \text{ and } \qquad \widetilde{m}_2[\Phi^*,\rho_2^*],
\end{equation}
where $\rho_1^*$ and $\rho_2^*$ denote two distinct underlying rotation distributions that generate the respective datasets. Throughout this paper, we assume that $\rho_1^*$ is uniform over $\SO(3)$\footnote{We use both $\rho_1$ and $\rho_1^*$ to denote the uniform/Haar distribution over $\SO(3)$.}, 
and $\rho_2^*$ is non-uniform over $\SO(3)$ but \textit{in-plane uniform}.   This means that the distribution of the resulting projection images is invariant to image-plane rotations; see \eqref{eq:def_in_plane_uniform}. Without loss of generality, this can be achieved by augmenting the dataset with randomly rotated copies of the projection images. Additionally, we assume that $\rho_2^*$ is invariant to chirality, meaning that the distribution of the resulting projection images is unchanged under reflection across a fixed line through the origin in the image plane (e.g., the vertical axis). This property can similarly be enforced via data augmentation by including reflected versions of the projection images with respect to that line. In combination with the in-plane uniformity assumption, this chirality invariance with respect to a single axis implies invariance under reflection across any line through the origin. See Section~\ref{sec:basis_for_rotation_distribution} for further structural consequences. 

We aim to relax the uniformity assumption on $\rho_1^*$ in future work to instead only require that both $\rho_1^*$ and $\rho_2^*$ are in-plane uniform. That said, recent experimental advances in cryo-EM (see Section~\ref{sec:introduction}) have introduced techniques that enable nearly uniform orientation distributions in practice.

\vspace{0.2cm}

The paper's specific contributions are summarized as follows:
\begin{itemize}[leftmargin=*]
    \item We introduce a new data fusion framework that leverages two second-order moments computed from datasets with distinct orientation distributions, made possible by recent experimental advances that facilitate collecting multiple datasets of the same molecule under different preparation protocols. To our knowledge, this is the first method to systematically exploit such complementary orientation datasets within the cryo-EM framework. It opens up new opportunities for future developments in multi-dataset reconstruction, orientation-aware inference, and integration with other imaging modalities.

    \item We prove that the molecular structure is \emph{uniquely} identified by the two second-order moments and their corresponding first-order moments (rather than up to a finite ambiguity), provided that the structure is bandlimited. See Section~\ref{sec:precise} for a precise statement. This result significantly improves prior identifiability guarantees that rely on higher-order moments or strong generative assumptions.

    \item We design a practical reconstruction algorithm based on convex relaxation, and demonstrate numerically that it accurately recovers bandlimited structures from the population moments. In our experiments, the algorithm consistently converges to the ground truth without stagnation or spurious local minima, enabling efficient and reliable reconstruction. 
\end{itemize}

\section{Preliminaries}\label{sec:setup}
\subsection{Basis representation of structure}\label{sec:basis_for_structures}

Since the image formation model \eqref{eq:def-rotated-3-D} involves rotations of a 3-D Fourier volume, it is convenient to represent any such Fourier transformed volume $\widehat{\Phi}$ as an element of a function space that is closed under rotations. A natural choice, by the Peter–Weyl theorem (see, e.g., \cite{chirikjian2016harmonic}), is the spherical harmonic basis. In particular, we assume $\widehat{\Phi}$ is square-integrable and supported on a ball of radius $r_{\max}$, and represent it in spherical coordinates $(r, \theta, \varphi)$ using a spherical harmonics expansion for each fixed radius (see, e.g., \cite{sharon2020method}):
\begin{equation}\label{eq:expand_phi_hat_sph_bessel}
\widehat{\Phi}(r,\theta,\varphi) = \sum_{\ell=0}^L\sum_{m=-\ell}^\ell A_{\ell}^m(r) Y_\ell^m(\theta, \varphi), \quad r \in [0, r_\text{max}], \quad \theta \in [0, \pi], \quad \varphi \in [0, 2\pi),
\end{equation}
where $Y_\ell^m$ are the complex-valued spherical harmonics \cite[Eq. 14.30.1]{dlmf}, the positive integer $L$ is a bandlimit parameter, and $A_{\ell }^m(r)$ is a scalar complex-valued function of $r$, serving as the expansion coefficients.

\begin{figure}
    \centering
    \includegraphics[width=\linewidth,
    trim={0cm 6cm 0cm 6cm},
    clip]{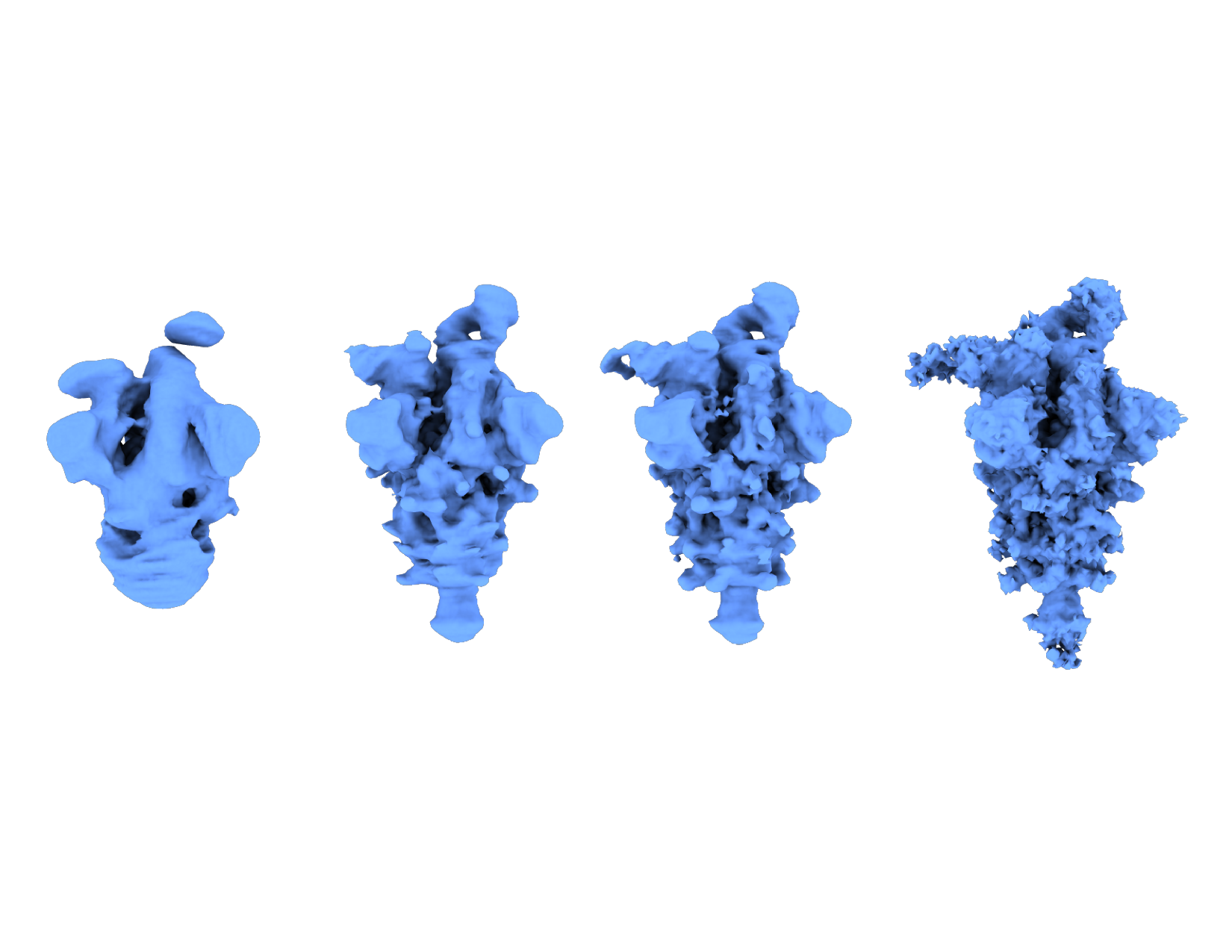}
    \caption{Illustration of the basis expansion of \eqref{eq:expand_phi_hat_sph_bessel}, using $L = 10,20,30$ from leftmost to second-most right figure, and the rightmost figure as the ground-truth.}
    \label{fig:lowpass_structure}
\end{figure}

To motivate the validity of \eqref{eq:expand_phi_hat_sph_bessel}, note that the Dirichlet Laplacian on the ball of radius $r_{\text{max}}$ has eigenfunctions $\psi_{k\ell m}(r,\theta,\phi) := j_\ell(\lambda_{\ell k}\tfrac{r}{r_\text{max}}) Y_\ell^m(\theta, \varphi)$, where $j_\ell$ is the $\ell$th spherical Bessel function of the first kind and $\lambda_{\ell k}$ is its $k$th positive root \cite[\S3.3]{grebenkov2013geometrical}, with $k \in \mathbb{Z}_{>0}, \ell \in \mathbb{Z}_{\geq 0}$ and $m \in \{-\ell , \ldots , \ell\}$. The $\psi_{k\ell m}$'s therefore form a complete orthonormal basis for the space of square-integrable functions on the ball. By summing the $\psi_{k\ell m}$'s over the index $k$, it follows that any square-integrable function supported on the ball of radius $r_\text{max}$ can be represented in the form in \eqref{eq:expand_phi_hat_sph_bessel} as we let $L\to \infty$.

Here we assume that $\widehat{\Phi}$ is bandlimited, in the sense that $\widehat{\Phi}$ can be represented exactly in the form in \eqref{eq:expand_phi_hat_sph_bessel} for a finite value of $L$. Although this is an idealization, should this not hold, higher components $A_{\ell }^m(r) Y_\ell^m(\theta, \Phi)$ for $\ell > L$ can be treated as an additional source of approximation error. An illustration of this bandlimiting assumption is provided in Figure~\ref{fig:lowpass_structure}.

Section~\ref{sec:moment_expressions} presents explicit analytical expressions for the moment $m_2[\Phi, \rho]$ for bandlimited functions of the form \eqref{eq:expand_phi_hat_sph_bessel}, using the complex-valued spherical harmonics introduced above. When describing our algorithm, it will however be convenient to also use real-valued spherical harmonics.  We therefore introduce notation for converting complex-valued spherical harmonics basis coefficients into coefficients in the real-valued spherical harmonics basis.

For each $\ell \in \mathbb{Z}_{\geq 0}$, write  $Q_\ell \in \mathbb{C}^{(2\ell+1)\times(2\ell+1)}$ as a unitary matrix that has non-zero entries only on the main diagonal and anti-diagonal, with non-zero components defined explicitly by
\begin{equation} \label{eq:Q-def}
    (Q_\ell)_{mm} = \begin{cases}
        \ii/\sqrt{2},&  \text{ if } m < 0, \\
        1 & \text{ if } m = 0, \\
        (-1)^m/\sqrt{2},&  \text{ if } m > 0, \\
    \end{cases}  
    \qquad
        (Q_\ell)_{-m,m} = \begin{cases}
        1/\sqrt{2},&  \text{ if } m < 0, \\
        1  & \text{ if } m = 0, \\
        -(-1)^m \ii/\sqrt{2},&  \text{ if } m > 0, \\
    \end{cases}  
\end{equation}
for all $-\ell \leq m\leq \ell$. We denote by $Y_{\ell m}(\theta,\varphi)$ the real-valued spherical harmonics defined by the convention 
\begin{equation}
Y_{\ell m}(\theta,\varphi) = \sum_{m'=-\ell}^{\ell} (Q_{\ell})_{m,m'}Y_\ell^{m'}(\theta,\varphi),
\end{equation}
where the real-valuedness of $Y_{\ell m}$ follows from \cite[Eq.~14.30.1, Eq.~14.9.3]{dlmf}. By changing the angular basis to $Y_{\ell m}(\theta, \varphi)$, we can equivalently write \eqref{eq:expand_phi_hat_sph_bessel} as
\begin{align}
\widehat\Phi(r,\theta,\varphi)=\sum_{\ell=0}^L \sum_{m=-\ell}^\ell A_{\ell m}(r) Y_{\ell m}(\theta,\varphi),
\end{align}
where $A_{\ell m}(r)$ is the corresponding expansion coefficient in the real-valued spherical harmonics basis, obtained by linearly transforming the complex coefficients using the unitary matrix $Q_{\ell}$:
\begin{equation*}
A_{\ell m}(r)=\sum_{m'=-\ell}^{\ell} A_{\ell}^{m'}(r) (\overline{Q}_\ell)_{mm'}.
\end{equation*}
It is equivalent and more convenient to write in the opposite direction that
\begin{equation}\label{eq:def_A_A_tilde}
A_{\ell}^m(r) = \sum_{m'=-\ell}^{\ell} A_{\ell m'}(r) (Q_\ell)_{m'm},
\end{equation}
which can explicitly be written out as
\begin{align*}
A_{\ell}^m(r)= \begin{cases}
        \frac{\ii}{\sqrt{2}}A_{\ell m}(r)+\frac{1}{\sqrt{2}}A_{\ell,-m}(r),&  \text{ if } m < 0, \\
        A_{\ell 0},& \text{ if } m = 0, \\
        \frac{-(-1)^m \ii}{\sqrt{2}}A_{\ell,-m}(r)+\frac{(-1)^m}{\sqrt{2}}A_{\ell m}(r),&  \text{ if } m > 0. \\
    \end{cases}  
\end{align*}
\begin{remark}\label{rmk:real_coeff}
For real-valued structures $\Phi$, symmetry properties of the Fourier transform imply that the coefficients $\{A_{\ell m}(r):-\ell\leq m\leq \ell\}$ are real for even $\ell$ and purely imaginary for odd $\ell$ \cite{bhamre2015orthogonal}.
\end{remark}

\subsection{Basis representation of rotation distribution}\label{sec:basis_for_rotation_distribution}

We model the distribution of viewing orientations as having a density function $\rho(R)$ with respect to the uniform/Haar measure over the rotation group $\mathsf{SO}(3)$. To facilitate analysis and to reflect symmetries commonly present (or enforceable via data augmentation) in cryo-EM experiments, we work under the following two structural assumptions on $\rho(R)$:
\begin{enumerate}
    \item \textbf{In-plane uniformity:} $\rho$ is invariant under right multiplication by any rotation about the $z$-axis, i.e.,
    \begin{align}\label{eq:def_in_plane_uniform}
    \rho(R) = \rho(R z(\alpha)),
    \end{align}
    for all $R\in \mathsf{SO}(3)$ and rotations $z(\alpha)$ of $\alpha\in\R$ radians around the $z$-axis. This implies that the distribution of projection images is invariant to image-plane 2-D rotations.
    
    \item \textbf{Chirality invariance:} $\rho$ is invariant under conjugation by the reflection matrix $J = \mathrm{diag}(1,1,-1)$, i.e.,
    \begin{align}\label{eq:def_invariant_chirality}
    \rho(R) = \rho(J R J),
    \end{align}
    for all $R \in \mathsf{SO}(3)$. This implies that the projection image distribution is invariant under reflection across a fixed axis (e.g., the vertical axis) in the image plane; combined with in-plane uniformity, this further implies invariance under reflection across \emph{any} line through the origin.
\end{enumerate}

We show in Appendix~\ref{app:rotational_distribution} that any square-integrable density $\rho(R)$ on $\mathsf{SO}(3)$ inducing an in-plane uniform distribution of viewing orientations (of the tomographic projection images) can be effectively represented using a bandlimited expansion of order $P$:
\begin{align}\label{eq:def_rho_in_plane_uniform}
\rho(R) \mathrm{d}R =\sum_{p=0}^{P} \sum_{u=-p}^p B_{p,u} U_{u0}^p(R)\mathrm{d}R,
\end{align}
where $\mathrm{d}R$ is the uniform/Haar measure on $\mathsf{SO}(3)$, and $U_{u0}^p(R)$ are Wigner $U$-matrix entries given explicitly by \cite[Eq. 9.42]{chirikjian2016harmonic}
\begin{align}\label{eq:explicit_U_expression}
U_{u0}^p(R)=(-1)^u\sqrt{\frac{4\pi}{2p+1}} \overline{Y_p^u \left(\theta(R), \varphi(R)\right)},
\end{align}
with $(\theta(R), \varphi(R))$ the spherical angles of the third column of $R$. The cutoff $P$ is the bandlimit of the distribution $\rho$; its relationship with the volume bandlimit $L$ will be discussed further in Section~\ref{sec:precise} and Section~\ref{sec:uniqueness}. Figure~\ref{fig:lowpass_dist} illustrates this bandlimited representation of the orientation distribution.

The complex-valued basis coefficients $B_{p,u}$ satisfy constraints from the real-valuedness and normalization of $\rho$:
\begin{align}\label{eq:constraints_on_B}
    \overline{B_{p,u}}=(-1)^u B_{p,-u}, \quad \text{and} \quad B_{0,0} =1,
\end{align}
and chirality invariance further implies
\begin{align}
B_{p,u} = 0, \qquad \text{for all odd } p.
\end{align}
These conditions yield a finite-dimensional representation of $\rho$ whose coefficients lie in a structured subset of complex-valued vectors.

\begin{figure}
    \centering
    \includegraphics[width=\linewidth]{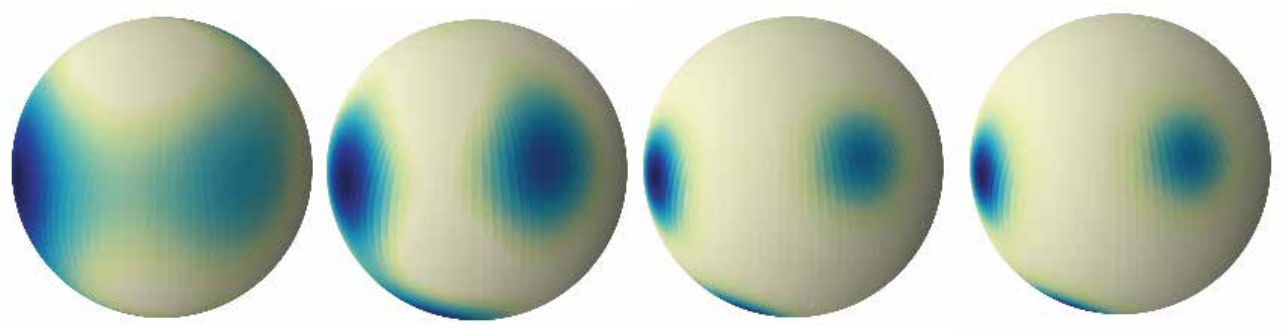}
    \caption{Illustration of the basis expansion of \eqref{eq:def_rho_in_plane_uniform} as function of $\theta(R)$ and $\varphi(R)$, using $P = 3,5,10$ from leftmost to second-most right figure, and the rightmost figure as the ground-truth.}
    \label{fig:lowpass_dist}
\end{figure}

\subsection{Analytic expressions for moments}\label{sec:moment_expressions}
We show in Appendix~\ref{sec:2nd_moment_derivation} that the second-order population moment \eqref{eq:def_pop_moment} for a bandlimited structure of the form \eqref{eq:expand_phi_hat_sph_bessel} with a rotation distribution of the form \eqref{eq:def_rho_in_plane_uniform}, has a succinct form. To present it, denote by $A_{\ell}: [0, r_\text{max}] \rightarrow \C^{1\times (2\ell+1)}$,  for  $\ell \in \{0,\ldots , L\}$, $n \in \{-L, \ldots, L\}$ the row-vector-valued function  defined by 
\begin{align}\label{eq:def_Al}
A_{\ell}(r) = \begin{bmatrix} A_{\ell}^{ -\ell}(r), &  A_{\ell}^{-\ell + 1}(r), & \cdots ,& A_{\ell}^{\ell}(r)\end{bmatrix},
\end{align}
 and define the matrix $\mathcal{B}^n_{\ell,\ell'}\in\C^{(2\ell+1)\times (2\ell'+1)}$, for  $\ell, \ell' \in \{0,\ldots , L\}$, $n \in \{-L, \ldots, L\}$, by
\begin{align}\label{eq:def_calB}
(\mathcal{B}^n_{\ell,\ell'})_{m,m'}&= (-1)^{m+n}\mathcal{N}^n_{\ell} \mathcal{N}^n_{\ell'} \sum_{\ell''=\max\{|m-m'|, |\ell-\ell'|\}}^{\min\{\ell+\ell',P\}} 
\!\!\!\!\!\! \frac{\mathcal{C}_{\ell''} (\ell,\ell',m,-m',n,-n)} {2\ell''+1}B_{\ell'',m'-m} ,
\end{align}
for $-\ell \leq m\leq \ell$ and $-\ell' \leq m'\leq \ell'$. Here, the scalar $\mathcal{N}_\ell^n \in \mathbb{R}$ is defined by
\begin{align}\label{eq:def_calN}
    \mathcal{N}^n_{\ell} &= N^n_{\ell}\cdot \1_{\{\ell\equiv n ~ (\text{mod} \ 2)\}}\cdot \1_{\{\ell\geq |n|\}},
\end{align}
where
\begin{align}\label{eq:formula_N}
N_\ell^n = \sqrt{\frac{2\ell +1}{4\pi}}\sqrt{\frac{(\ell-n)!}{(\ell+n)!}} \cdot P_\ell^n(0),
\end{align}
with $P_\ell^n(x)$ denoting the associated Legendre functions, \cite[Eq.~(14.3.6)]{dlmf}.
The $\mathcal{C}_{\ell''}$ are septuply indexed constants defined (for general indices) by
\begin{equation}\label{eq:myCG0}
\mathcal{C}_{\ell''} (\ell,\ell',m,m',n,n') := C(\ell,m;\ell',m'|\ell'',m+m') C(\ell,n;\ell',n'|\ell'',n+n'),
\end{equation}
where the $C(\cdot, \cdot; \cdot, \cdot  | \cdot, \cdot)$ are Clebsch-Gordan coefficients \cite[Section~9.9]{chirikjian2016harmonic}.

With these definitions in place, we are now ready to state the following proposition giving the expression for the second-order moment. The proof is deferred to Appendix~\ref{sec:2nd_moment_derivation}.
\begin{proposition}\label{prop:2nd_moment} 
The second-order population moment \eqref{eq:def_pop_moment} for a bandlimited structure of the form \eqref{eq:expand_phi_hat_sph_bessel} with a rotation distribution of the form \eqref{eq:def_rho_in_plane_uniform} can be written as
\begin{align}\label{eq:formula_for_m2}
m_2(r,\varphi,r',\varphi') &= \sum_{n=-L}^{L} e^{\ii n(\varphi-\varphi')} \sum_{\ell=0}^L  \sum_{\ell'=0}^L   A_{\ell}(r) \mathcal{B}^n_{\ell,\ell'} (A_{\ell'}(r'))^{\mathsf{H}}.
\end{align}
In particular, $\mathcal{B}^n_{\ell,\ell'}$ satisfies the Hermitian property $\mathcal{B}^n_{\ell,\ell'} = (\mathcal{B}^n_{\ell',\ell} )^{\mathsf{H}}$. Also, $\mathcal{B}^n_{\ell,\ell'}=0$ for any $-L\leq n\leq L$ whenever $\ell\not\equiv \ell' ~(\text{mod}~2)$.
\end{proposition}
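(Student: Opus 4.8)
The plan is to reduce \eqref{eq:2nd_moment_Cart} to a clean rotational autocorrelation and then to diagonalize the $\SO(3)$ integral using the representation theory of the rotation group. First I would take the expectation over the noise $\varepsilon$ in \eqref{eq:2nd_moment_Cart} and substitute the Fourier slice theorem \eqref{eq:Fourier_slice}; since $\varepsilon$ is independent zero-mean Gaussian noise, the only surviving noise contribution is $\E_\varepsilon[\widehat{\varepsilon}(\boldsymbol{\omega}_1)\overline{\widehat{\varepsilon}(\boldsymbol{\omega}_2)}]$, which is exactly the debias term, so the two cancel and leave the noiseless integral
\[
m_2(r,\varphi,r',\varphi') = \int_{\SO(3)} (R^\T\cdot\widehat{\Phi})(r,\tfrac{\pi}{2},\varphi)\,\overline{(R^\T\cdot\widehat{\Phi})(r',\tfrac{\pi}{2},\varphi')}\,\rho(R)\,\der R,
\]
where the central $z=0$ slice is written in polar image-plane coordinates as the equatorial spherical slice $\theta=\pi/2$.

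Next I would expand the rotated slice. Substituting \eqref{eq:expand_phi_hat_sph_bessel} and using that rotations act on each degree-$\ell$ block by the Wigner matrices $D^\ell(R)$ (whose entries with second, i.e.\ column, index $0$ are exactly the $U^p_{u0}$ of \eqref{eq:explicit_U_expression}), the rotated harmonic evaluated on the equator produces a factor $Y_\ell^{n}(\pi/2,\varphi)=N_\ell^{n}e^{\ii n\varphi}$, with $N_\ell^n$ as in \eqref{eq:formula_N}. Crucially, $P_\ell^n(0)=0$ whenever $\ell\not\equiv n\pmod 2$, which is precisely the parity indicator folded into $\mathcal{N}_\ell^n$ in \eqref{eq:def_calN}. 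In the convention where the second Wigner index carries the summed harmonic label, the slice becomes a sum over $\ell$, the coefficient index $m$, and the equatorial index $n$ of $A_\ell^m(r)\,D^\ell_{mn}(R)\,N_\ell^n e^{\ii n\varphi}$.

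The heart of the computation is the resulting triple product of Wigner entries integrated over $\SO(3)$. Forming the product of the two slices, conjugating the second through $\overline{D^{\ell'}_{m'n'}(R)}=(-1)^{m'-n'}D^{\ell'}_{-m',-n'}(R)$, and expanding $\rho$ via \eqref{eq:def_rho_in_plane_uniform}, I would apply the Clebsch--Gordan series to decompose the product of the two $D$'s into a single $D^{\ell''}$ weighted by two Clebsch--Gordan coefficients, and then invoke orthogonality of Wigner matrices against the single $D^{p}$ coming from $\rho$. Because \eqref{eq:def_rho_in_plane_uniform} involves only entries with second index $0$, orthogonality forces the two equatorial indices to coincide, $n'=n$, which yields the single azimuthal factor $e^{\ii n(\varphi-\varphi')}$ and the diagonal-in-$n$ structure of \eqref{eq:formula_for_m2}, while simultaneously pinning $\ell''=p$ and the $B$-index to $m'-m$. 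Collecting the normalizations $N_\ell^n N_{\ell'}^n$, the weights $1/(2\ell''+1)$, the two Clebsch--Gordan coefficients assembled into $\mathcal{C}_{\ell''}(\ell,\ell',m,-m',n,-n)$ via \eqref{eq:myCG0}, the coefficients $B_{\ell'',m'-m}$, and the accumulated sign $(-1)^{m+n}$ reproduces the entries \eqref{eq:def_calB}, and hence \eqref{eq:formula_for_m2}. I expect this bookkeeping --- reconciling the Wigner-matrix convention and index ordering, the $U$-to-$D$ identification in \eqref{eq:explicit_U_expression}, the conjugation and orthogonality constants, and the various signs --- to be the main obstacle; the rest of the structure is essentially forced.

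Finally, the two stated properties follow quickly. The Hermitian relation $\mathcal{B}^n_{\ell,\ell'}=(\mathcal{B}^n_{\ell',\ell})^{\mathsf{H}}$ is cleanest to read off from the functional symmetry $m_2(r,\varphi,r',\varphi')=\overline{m_2(r',\varphi',r,\varphi)}$, which holds termwise for the autocorrelation integrand (using that $\rho$ is real); matching coefficients of $A_\ell^m(r)\overline{A_{\ell'}^{m'}(r')}e^{\ii n(\varphi-\varphi')}$ on both sides gives the claim. Alternatively it can be verified directly from \eqref{eq:def_calB} using the reality of the Clebsch--Gordan coefficients, their interchange symmetry under swapping the two angular momenta, and the reality constraint $\overline{B_{p,u}}=(-1)^u B_{p,-u}$ from \eqref{eq:constraints_on_B}. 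The parity statement is then immediate from \eqref{eq:def_calN}: every entry of $\mathcal{B}^n_{\ell,\ell'}$ carries the factor $\mathcal{N}_\ell^n\mathcal{N}_{\ell'}^n$, which is nonzero only when $\ell\equiv n$ and $\ell'\equiv n\pmod 2$, so if $\ell\not\equiv\ell'\pmod 2$ no value of $n$ satisfies both and $\mathcal{B}^n_{\ell,\ell'}=0$ for every $n$.
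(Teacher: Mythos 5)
Your proposal is correct and follows essentially the same route as the paper's derivation in Appendix~A: expand the equatorial Fourier slice in Wigner matrix entries with the factor $N_\ell^n e^{\ii n\varphi}$, reduce the $\SO(3)$ integral of the triple product via the Clebsch--Gordan series and Wigner orthogonality (which forces $n'=n$, $\ell''=p$, $u=m'-m$), and read off the parity claim from the indicator built into $\mathcal{N}_\ell^n$. For the Hermitian property the paper uses the direct verification from \eqref{eq:def_calB} that you list as your alternative; your primary route via $m_2(r,\varphi,r',\varphi')=\overline{m_2(r',\varphi',r,\varphi)}$ also works, but to match coefficients one should note that \eqref{eq:def_calB} does not depend on $\Phi$, so the identity can be extracted by choosing linearly independent radial functions.
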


\begin{remark}
From the expression of $\mathcal{B}^n_{\ell,\ell'}$ in \eqref{eq:def_calB} and the expansion of $m_2$ in \eqref{eq:formula_for_m2}, we observe that the coefficients $B_{p,u}$ with $p > 2L$ do not appear in $m_2$, due to the constraint $\ell'' \leq \min\{\ell + \ell', P\}$ with $\ell, \ell' \leq L$. Similarly, a direct inspection of the first moment $m_1$ (see Appendix \ref{sec:1st_moment_derivation}) reveals that it depends only on $B_{p,u}$ with $p\leq L$. Therefore, the coefficients $B_{p,u}$ for all $p > 2L$ do not influence the population statistics considered in this work.
\end{remark}

\begin{remark}
In particular, for the uniform distribution of rotations, $B_{\ell,m} = 0$ unless $\ell = 0$ and $m = 0$. Applying \eqref{eq:formula_N}, the parity property of $P_{\ell}^n$, and the explicit formula for Clebsch-Gordan coefficients
\begin{align*}
C(\ell,m;\ell,-m|0,0)=\frac{(-1)^{\ell-m}}{\sqrt{2\ell+1}},~~~C(\ell,n;\ell,-n|0,0)=\frac{(-1)^{\ell-n}}{\sqrt{2\ell+1}},
\end{align*}
then \eqref{eq:formula_for_m2} simplifies to 
\begin{align*}
m_2(r, r', \psi) =\sum_{\ell = 0}^L \sum_{m=-\ell}^\ell A_{\ell}^m(r) \overline{A_{\ell}^m(r')} \sum_{n=-\ell}^\ell \frac{1}{4\pi} \frac{(\ell-n)!}{(\ell+n)!}(P_\ell^n(0))^2 e^{\ii n\psi}.
\end{align*}
Further using the addition theorem for spherical harmonics \cite[Eqn. 4.37]{chirikjian2016harmonic}, we reach
\begin{align}\label{eq:formula_for_uniform_m2}
m_2(r, r', \psi) =\frac{1}{4\pi}\cdot \sum_{\ell = 0}^L \sum_{m=-\ell}^\ell A_{\ell}^m(r) \overline{A_{\ell}^m(r')} P_\ell(\cos\psi),
\end{align}
where the $P_\ell$'s are the Legendre polynomials \cite[\S18.3]{dlmf}, and $\psi := \varphi - \varphi'$. 
\end{remark}

\subsection{Formal problem statement} \label{sec:precise}

We formally formulate the problem of recovering the ground-truth structure $\Phi^*$ from its associated moment observations. We make the following assumptions.

\begin{assumption}(Assumptions on the structure $\Phi$ and the distributions $\rho_1$, $\rho_2$)\label{assumption:distributions}
\begin{enumerate}
\item $\Phi$ is bandlimited with respect to its angular variables, i.e., $\Phi$ can be expressed as in \eqref{eq:expand_phi_hat_sph_bessel} for some $L \geq 3$.
\item The radial functions $\left(A_{\ell}^m(r) : 0 \leq \ell \leq L, - \ell \leq m\leq \ell \right)$ in \eqref{eq:expand_phi_hat_sph_bessel} are linearly independent.
\item The distribution $\rho_1$ for the first dataset is the uniform/Haar measure on $\SO(3)$.
\item The distribution $\rho_2$ for the second dataset is in-plane uniform and invariant to chirality (see Section~\ref{sec:basis_for_rotation_distribution} for precise definitions). Moreover, $\rho_2$ is bandlimited with cutoff $P \geq 2L$, and its expansion coefficients from \eqref{eq:def_rho_in_plane_uniform}
\[
\left(B_{p,u} : 1 \leq p \leq 2L,\ p \text{ even},\ -p \leq u \leq p \right) \in \mathbb{C}^{2L^2 + 3L}
\]
are Zariski-generic.
\end{enumerate}
\end{assumption}

The 3-D reconstruction problem can be formally stated as follows.
\begin{problem}\label{prob:main}
Let $\Phi^*, \rho_1^*, \rho_2^*$ satisfy Assumption~\ref{assumption:distributions}. Given samples of $\widetilde{m}_1[\Phi^*, \rho_1^*]$, $\widetilde{m}_2[\Phi^*, \rho_1^*]$, and $\widetilde{m}_2[\Phi^*, \rho_2^*]$ on a grid in polar coordinates defined by
\begin{equation}\label{eq:def_grid}
\{r_1,\ldots,r_{M_r}\} \subseteq [0, r_\text{max}], \qquad \{\varphi_1, \ldots , \varphi_{M_\varphi}\} \subseteq [0,2\pi),
\end{equation}
recover the structure $\Phi^*$.
\end{problem}

\begin{remark}
    In cryo-EM, datasets with non-uniform viewing directions are readily obtained.  Experimental or data processing techniques are generally required to obtain uniform distributions. In techniques like XFEL, distributions close to uniform are more easily available \cite{saldin2011reconstructing,kurta2013solution,donatelli2015iterative,kurta2017correlations}.
\end{remark}

\section{Algorithm: Method of Double Moments}\label{sec:algorithm}

We present the main computational result: an algorithm for solving Problem~\ref{prob:main} given access only to sample moments evaluated on a discrete polar grid. 
The algorithm consists of three steps:
\begin{enumerate}
\item \textbf{Kam's method (cf. \cite{kam1980reconstruction}):} Use the uniform sample moment $\widetilde{m}_2[\Phi^*, \rho_1^*]$ to recover the structure coefficients $A_\ell(r_i)$ for $i = 1, \ldots , M_r$, up to the action of unknown orthogonal matrices $O_\ell \in \mathsf{O}(2\ell + 1)$ for $\ell=0, \ldots , L$, i.e., the problem reduces to estimating the set $\{O_\ell\}$.

\item \textbf{Formulate optimization:} Use the non-uniform sample moment $\widetilde{m}_2[\Phi^*, \rho_2^*]$ to set up a non-linear least-squares problem in the variables $\{O_\ell\}$ and $\{B_{p,u}\}$.

\item \textbf{Convex relaxation and alternating refinement:} Solve the optimization problem by iteratively updating $\{O_\ell\}$ and $\{B_{p,u}\}$, 
while regularizing the constraint using information from the first-order moment 
$\widetilde{m}_1[\Phi^*, \rho_1^*]$.
\end{enumerate}
These steps are detailed in Sections~\ref{sec:method_step_1} -- \ref{sec:method_step_3}, respectively. Algorithm~\ref{alg:modm} presents the complete procedure in summary form. Here, all quantities with a tilde $\,\widetilde{\cdot}\,$ are empirically accessible from the observed samples.

\begin{algorithm}
\caption{Method of double moments (MoDM)}\label{alg:modm}
\DontPrintSemicolon
\KwIn{Sample moments $\widetilde{m}_1[\Phi^*, \rho_1^*]$, $\widetilde{m}_2[\Phi^*, \rho_1^*]$, and $\widetilde{m}_2[\Phi^*, \rho_2^*]$ satisfying Assumption~\ref{assumption:distributions} and sampled on the grid \eqref{eq:def_grid}, maximum number of iterations $K$.}
\KwResult{Approximation $\Phi^{(K)}$ to the underlying molecular structure $\Phi^*$.}

Compute $\widetilde{C}$ in \eqref{eq:def_C_tilde} through numerical quadrature, and obtain $\widetilde{A}_\ell$ from $\widetilde{C}_\ell$ via Cholesky decomposition. \Comment*[r]{Step $1$}

Compute $\widetilde{G}^n$ in \eqref{eq:def_Gn_and_Mn} through numerical quadrature, and obtain $\widetilde{\mathcal{M}}^n$ in \eqref{eq:2nd_moment_info} by left- and right-multiplying $\widetilde{G}^n$ by $\widetilde{A}^\dagger$ and $\widetilde{A}^{\mathsf{H},\dagger}$, respectively. \Comment*[r]{Step $2$}

Compute iterative solutions $O_\ell^{(k)}$ and $B_{p,u}^{(k)}$ to the update equations \eqref{eq:updateB} -- \eqref{eq:updateO}, for $k = 1, \ldots , K$. \Comment*[r]{Step $3$}

Return the inverse discrete Fourier transform of
$$
\widehat{\Phi}^{(K)}(r_i,\theta,\varphi) := \sum_{\ell = 0}^L\sum_{m=-\ell}^{\ell} (\widetilde{A}_\ell O_\ell^{(K)})_{i,m} Y_{\ell m}(\theta, \varphi),
$$
sampled at the radial points $r_i$ from \eqref{eq:def_grid} for $i = 1, \ldots , M_r$.
\end{algorithm}

\subsection{Step 1: Kam’s method via uniform sample moment}\label{sec:method_step_1}

Kam \cite{kam1980reconstruction} showed, via the Fourier slice theorem, that if the viewing directions are uniformly distributed over $\SO(3)$, then the autocorrelation function of the 3-D volume with itself can be computed directly from the second-order moment of 2-D projection images. In particular, for a uniform distribution $\rho_1^*$, it follows from \eqref{eq:formula_for_uniform_m2} and the orthogonality of the Legendre polynomials that, for each $\ell = 0,\ldots,L$ and $r,r'\in[0,r_\text{max}]$,
\begin{align}\label{eq:def_Kam}
C_\ell(r,r') \!&:=\! 2\pi(2\ell+1)\!\cdot\!\int_0^{\pi} m_2[\Phi, \rho_1^*](r, r', \psi) P_\ell(\cos \psi) \sin (\psi )d\psi \notag\\
&= \sum_{m=-\ell}^{\ell}A_{\ell}^m(r)\overline{A_{\ell}^m(r')}=\sum_{m=-\ell}^{\ell}A_{\ell m}(r) A_{\ell m}(r'),
\end{align}
where the last equation comes from the unitarity of $Q_\ell$ in Section \ref{sec:basis_for_structures}.  This relation provides partial information about the unknown coefficients $A_{\ell}^m$ or $A_{\ell m}$.

Assume we have access to samples on the grid in \eqref{eq:def_grid}. 
We approximate the integral in \eqref{eq:def_Kam} by firstly using the sample moment $\widetilde{m}_2[\Phi^*,\rho_1^*]$ instead of the population moment, and secondly by using numerical quadrature with quadrature points $\varphi_s, \varphi_q$ and quadrature weights $w_{s,q}$, for $s,q = 0, \ldots , M_\varphi$, i.e., we compute 
\begin{align}\label{eq:def_C_tilde}
    \widetilde{C}_\ell(r_i,r_j) \!:=\! 2\pi(2\ell+1)\!\cdot\!\sum_{s,q=0}^{M_\varphi} w_{s,q} \widetilde{m}_2[\Phi^*, \rho_1^*](r_i, r_j, \varphi_s - \varphi_q) P_\ell(\cos (\varphi_s - \varphi_q ))\sin (\varphi_s - \varphi_q ) .
\end{align}
Next, for each $\ell=0,\ldots,L$, we introduce the matrices 
$A_\ell \in \C^{M_r\times (2\ell+1)}$, 
$\widecheck{A}_\ell \in \C^{M_r\times (2\ell+1)}$, 
$C_\ell\in \R^{M_r\times M_r}$, and $\widetilde{C}_\ell \in \C^{M_r\times M_r}$. They are defined entrywise as
\begin{equation}\label{eq:defAl}
   (A_\ell)_{k,m} = A_\ell^m(r_k), \quad 
   (\widecheck{A}_\ell)_{k,m} = A_{\ell m}(r_k), \quad 
   (C_\ell)_{i,j} = C_\ell(r_i,r_j), \quad 
   (\widetilde{C}_\ell)_{i,j} = \widetilde{C}_\ell(r_i,r_j),
\end{equation}
respectively, where 
$\{r_1,\ldots,r_{M_r}\}$ denote the radial sampling points. 
Inserting these definitions into \eqref{eq:def_Kam} and \eqref{eq:def_C_tilde} gives
\begin{equation}\label{eq:Cl_tilde_and_AlAlH}
    \widetilde{C}_\ell \approx C_\ell = A_\ell A_\ell^{\mathsf{H}} = \widecheck{A}_\ell \widecheck{A}_\ell^{\mathsf{H}},
\end{equation}
where the approximation reflects discretization and sampling errors.

Recall from Remark~\ref{rmk:real_coeff} that $\widecheck{A}_\ell$ is real for even~$\ell$ and purely imaginary for odd~$\ell$. 
Applying the Cholesky decomposition of the matrix $C_\ell$, we then obtain factors
\[
\breve{A}_\ell \in \C^{M_r\times (2\ell+1)},
\]
which can be chosen real for even~$\ell$ and purely imaginary for odd~$\ell$. Moreover, there exists a unique real orthogonal matrix $O_\ell \in \mathsf{O}(2\ell+1)$ such that
\begin{equation}\label{eq:Aring_relation}
\breve{A}_\ell = \widecheck{A}_\ell O_\ell^\T = A_\ell Q_\ell^{\mathsf{H}} O_\ell^\T.
\end{equation}

In practice, we replace $C_\ell$ with its sample counterpart $\widetilde{C}_\ell$, and perform a Cholesky decomposition, to obtain an estimate $\widetilde{A}_\ell$ for $\breve{A}_\ell$. The remaining task is to recover the orthogonal matrices $\{O_\ell\}$, which would in turn allow us to fully estimate $\{A_\ell\}$.

\subsection{Step 2: Formulate optimization with the non-uniform sample moment}\label{sec:method_step_2}
The remainder of the computational method attempts to recover the matrices $O_\ell$, for $0\leq \ell \leq L$ by matching the additional non-uniform sample moment $\widetilde{m}_2[\Phi^*,\rho_2^*]$ to the corresponding population moment. We therefore re-express $m_2[\Phi, \rho_2]$ in terms of the matrices $\breve{A}_\ell$ computed in Section~\ref{sec:method_step_1}. Inserting \eqref{eq:Aring_relation} into \eqref{eq:formula_for_m2} gives
\begin{align}
m_2(r,\varphi,r',\varphi')
&= \sum_{n=-L}^{L} e^{\ii n(\varphi-\varphi')} \sum_{\ell = 0}^{L }  \sum_{\ell' = 0}^ {L }   \breve{A}_{\ell}(r) O_\ell Q_\ell \mathcal{B}^n_{\ell,\ell'} Q_{\ell'}^{\mathsf{H}}O_{\ell'}^\T (\breve{A}_{\ell'}(r'))^{\mathsf{H}},\label{eq:2nd_moment_matrix_O}
\end{align}
where we denote by $\breve{A}_\ell(r)$ the row of $\breve{A}_\ell$ corresponding to the radius $r$.

Note that performing the integration over $\varphi$ and $\varphi'$ yields
\begin{align*}
    &\frac{1}{(2\pi)^2}\int_0^{2\pi}\!\!\!\int_0^{2\pi} 
    m_2[\Phi, \rho_2](r,\varphi,r',\varphi') \,
    e^{-i n(\varphi-\varphi')} \, \mathrm{d}\varphi \, \mathrm{d}\varphi' = \sum_{\ell = 0}^{L }  \sum_{\ell' = 0}^{L }   
    \breve{A}_{\ell}(r) O_\ell Q_\ell
    \mathcal{B}^n_{\ell,\ell'}Q_{\ell'}^{\mathsf{H}} O_{\ell'}^\T 
    (\breve{A}_{\ell'}(r'))^{\mathsf{H}}.
\end{align*}
Thus, the integral can be approximated numerically by first replacing 
$m_2$ with the sample moment $\widetilde{m}_2[\Phi^*,\rho_2^*]$ and then 
applying numerical quadrature, which yields
\begin{equation}
\begin{split}\label{eq:gn}
   \widetilde{G}^n(r_i,r_j) &:= \frac{1}{(2\pi)^2}\sum_{s,q=0}^{M_\varphi} w_{s,q}  
   \widetilde{m}_2[\Phi^*, \rho_2^*](r_i, r_j, \varphi_s - \varphi_q) 
   e^{-\ii n(\varphi_s - \varphi_q)} \\
   &\approx \sum_{\ell, \ell' = 0}^L 
   \breve{A}_{\ell}(r_i) O_\ell Q_\ell \mathcal{B}^n_{\ell,\ell'} 
   Q_{\ell'}^{\mathsf{H}} O_{\ell'}^\T (\breve{A}_{\ell'}(r_j))^{\mathsf{H}},
\end{split}
\end{equation}
for each $-L \le n \le L$, where the approximate equality arises from 
discretization and sampling errors.

Define the matrices $\widetilde{G}^n \in \C^{M_r \times M_r}$ and 
$\mathcal{M}^n_{\ell, \ell'} \in \C^{(2\ell+1)\times(2\ell'+1)}$ by
\begin{align}\label{eq:def_Gn_and_Mn}
(\widetilde{G}^n)_{i,j} = \widetilde{G}^n(r_i,r_j), \qquad
\mathcal{M}^n_{\ell,\ell'} = O_\ell Q_\ell \mathcal{B}^n_{\ell,\ell'} Q_{\ell'}^{\mathsf{H}} O_{\ell'}^\T.
\end{align}
Let $\mathcal{M}^n$ and $\mathcal{B}^n \in \C^{(L+1)^2 \times (L+1)^2}$ 
be the block matrices whose $(\ell,\ell')$-th blocks are $\mathcal{M}^n_{\ell,\ell'}$ 
and $\mathcal{B}^n_{\ell,\ell'}$, respectively, i.e.,
\begin{equation}\label{eq:def_Mn_and_Bn_blocks}
\mathcal{M}^n = 
\begin{bmatrix}
\mathcal{M}^n_{0,0} & \cdots & \mathcal{M}^n_{0,L} \\
\vdots & \ddots & \vdots \\
\mathcal{M}^n_{L,0} & \cdots & \mathcal{M}^n_{L,L}
\end{bmatrix}, \qquad
\mathcal{B}^n =
\begin{bmatrix}
\mathcal{B}^n_{0,0} & \cdots & \mathcal{B}^n_{0,L} \\
\vdots & \ddots & \vdots \\
\mathcal{B}^n_{L,0} & \cdots & \mathcal{B}^n_{L,L}
\end{bmatrix}.
\end{equation}
Further define the block matrices 
\[
\breve{\mathcal{A}} = \begin{bmatrix} \breve{A}_0 & \cdots & \breve{A}_L \end{bmatrix},
\qquad 
\widetilde{\mathcal{A}} = \begin{bmatrix} \widetilde{A}_0 & \cdots & \widetilde{A}_L \end{bmatrix}
\in \C^{M_r \times (L+1)^2},
\]
where $\breve{A}_\ell$ are the ideal factors and $\widetilde{A}_\ell$ their estimates obtained from 
$\widetilde{C}_\ell$. Then the approximate relation in~\eqref{eq:gn} can be written as
\begin{align}
\widetilde{G}^n \approx \breve{\mathcal{A}} \mathcal{M}^n \breve{\mathcal{A}}^{\mathsf{H}}
\approx \widetilde{\mathcal{A}} \mathcal{M}^n \widetilde{\mathcal{A}}^{\mathsf{H}}.
\end{align}
In the expression above, the sample moment information provides us with 
$\widetilde{G}^n$ and $\widetilde{\mathcal{A}}$, whereas our goal is to recover 
the unknown matrices $\{O_\ell\}$ and $\{\mathcal{B}^n_{\ell,\ell'}\}$ contained 
within $\mathcal{M}^n$.

Assuming $\widetilde{\mathcal{A}}$ has full rank (which in particular requires that $M_r \geq (L+1)^2)$, we can left- and right-multiply this equation by the pseudoinverse matrices $\widetilde{\mathcal{A}}^\dagger$ and $\widetilde{\mathcal{A}}^{\mathsf{H,\dagger}}$ to obtain approximate access to $\mathcal{M}^n$. Writing
$$\mathcal{O} = \text{blockdiag}_{\ell = 0 , \ldots , L}(O_\ell), \quad \text{ and } \quad \mathcal{Q} = \text{blockdiag}_{\ell = 0 , \ldots , L}(Q_\ell),$$
we obtain
\begin{align}
\widetilde{\mathcal{A}}^\dagger \widetilde{G}^n \widetilde{\mathcal{A}}^{\mathsf{H},\dagger}:=\widetilde{\mathcal{M}}^n \approx \mathcal{M}^n = \mathcal{O} \mathcal{Q} \mathcal{B}^n \mathcal{Q}^{\mathsf{H}} \mathcal{O}^\T, ~~~{\rm for}~~~ -L \leq n \leq L.\label{eq:2nd_moment_info}
\end{align}
This naturally leads to a least-squares problem, where we seek to find 
the matrices $\{O_\ell\}$ and parameters $\{B_{p,u}\}$ by minimizing
\begin{align}\label{eq:ls_problem}
\sum_{n=-L}^{L} 
\Big\| \widetilde{\mathcal{M}}^n
- \mathcal{O} \mathcal{Q} \mathcal{B}^n \mathcal{Q}^{\mathsf{H}} \mathcal{O}^\T \Big\|_F^2.
\end{align}
In practice, it is necessary for the  distribution $\rho$ to be sufficiently far from uniform for the least-squares problem \eqref{eq:ls_problem} to be reasonably conditioned.

\subsection{Step 3: Convex relaxation and alternating refinement}\label{sec:method_step_3}
We solve the problem~\eqref{eq:ls_problem} through an alternating procedure for the matrices $O_\ell$ and the parameters $B_{p, u}$, by initializing $O_\ell^{(0)}$ as arbitrary orthogonal matrices and updating the parameters $\{B_{p,u}^{(k)}\}$ and $\{O_\ell^{(k)}\}$ iteratively, for $k = 0, \ldots , K$.

\paragraph{Solving for $\{B_{p,u}\}$.} 
Given $\mathcal{O}^{(k)}=\text{blockdiag}_{\ell = 0 , \ldots , L}(O_\ell^{(k)})$, the update of $\{B_{p,u}\}$ proceeds by minimizing the expression
\begin{equation}\label{eq:minimizing_B_first_one}
     \sum_{n=-L}^L \Big\| \widetilde{\mathcal{M}}^n - \mathcal{O}^{(k)}\mathcal{Q} \mathcal{B}^n \mathcal{Q}^{\mathsf{H}} \mathcal{O}^{(k),\T} \Big\|_F^2,
\end{equation}
in the variables $\{B_{p,u}\}$ while imposing the constraints \eqref{eq:constraints_on_B}, i.e., by restricting $B$ to the set
\begin{equation} \mathcal{S_B} := \label{eq:constraint_set_for_B}
    \Big\{ \{B_{p,u}\}_{(p,u) \in \mathcal{I}} : B_{p,u} \in \C,  \quad \overline{B_{p,u}}=(-1)^u B_{p,-u}, \quad  B_{0,0} =1\Big\},
\end{equation}
where $\mathcal{I} := \{(p,u) \subseteq \mathbb{Z}_{\geq 0} \times \mathbb{Z} : -p \leq u \leq p,\, p\leq P\}$. Note that the constraint set $\mathcal{S_B}$ does not impose positivity of the resulting density in \eqref{eq:def_rho_in_plane_uniform}, for simplicity. This can however be incorporated by, for instance, requiring positivity at a given set of collocation points $R_i$ (see \cite[Eq.~(51)]{sharon2020method}). This yields the constraints $\rho(R_i) \geq 0$, which are linear constraints in the variables $\{B_{p,u}\}$ and therefore can be included when minimizing \eqref{eq:minimizing_B_first_one}, although at the expense of increased runtime.

\paragraph{Solving for $\{O_\ell\}$.}

Given the coefficients $B_{p,u}^{(k)}$, we obtain $\mathcal{B}_{\ell,\ell'}^{n,(k)}$ from \eqref{eq:def_calB} and form the block matrix $\mathcal{B}^{n,(k)}$ as in \eqref{eq:def_Mn_and_Bn_blocks}. The update of $\{O_\ell\}$ then follows a relaxed procedure. 

First, note that
\begin{equation}
    \big\| \widetilde{\mathcal{M}}^n - \mathcal{O}\,\mathcal{Q} \,\mathcal{B}^{n,(k)}\,\mathcal{Q}^{\mathsf{H}} \,\mathcal{O}^{\T}  \big\|_F 
    = \big\| \widetilde{\mathcal{M}}^n \mathcal{O} - \mathcal{O}\,\mathcal{Q} \,\mathcal{B}^{n,(k)}\,\mathcal{Q}^{\mathsf{H}} \big\|_F.
\end{equation}
By relaxing the orthogonality constraint on $\mathcal{O}$, we introduce a relaxed variable 
\begin{align*}
\mathcal{X}=\text{blockdiag}_{\ell=0, \ldots , L}(X_\ell),
\end{align*}
where each $X_\ell$ has the same matrix size as $O_\ell$. We then solve for $\mathcal{X}$ in a least-squares sense and subsequently orthogonalize the result to recover $\mathcal{O}$. 
Specifically, we restrict the search to the following set:
\begin{equation}
\begin{split}
    \mathcal{S_X} = \Big\{ \{X_\ell\}_{\ell=0,\ldots, L} : \; 
    X_\ell \in \mathbb{R}^{(2\ell+1) \times (2\ell+1)}, \; X_0 = 1, \; X_1 = I_3 \Big\}.
\end{split}
\end{equation}
This restriction is without loss of generality, because for any choice of $X_0$ and $X_1$, there exists a rotation of the underlying structure $\Phi^*$ that maps $X_0$ to $1$ and $X_1$ to $I_3$ simultaneously (see \cite[pg.~324]{chirikjian2016harmonic}). 

Given a solution $\mathcal{X}$ in $\mathcal{S_X}$, we orthogonalize it by projecting onto
\begin{equation}
\begin{split}
    \mathcal{S_O} = \Big\{ \{O_\ell\}_{\ell=0, \ldots , L}  : \; O_\ell \in \mathsf{O}(2\ell+1) \Big\},
\end{split}
\end{equation}
which is equivalent to solving the constrained least-squares problem
\begin{equation}
     \min_{\{O_\ell\} \in \mathcal{S_O}}\|\mathcal{O}- \mathcal{X} \|_F^2 
    = \min_{\{O_\ell\} \in \mathcal{S_O}} \sum_{\ell = 0}^L \| O_\ell - X_\ell \|_F^2,
\end{equation}
where the equality follows from the block-diagonal structures of $\mathcal{X}$ and $\mathcal{O}$. Each term in the sum can be minimized efficiently using the orthogonal Procrustes procedure, 
which admits a closed-form solution by computing the singular value decomposition (SVD) of each diagonal block $X_\ell$~\cite{schonemann1966generalized}.

\paragraph{Summary.} Taken together, the update equations for $\{B_{p,u}\}$ and $\{O_\ell\}$ can be written as
\begin{align}
    \{B_{p,u}^{(k+1)}\}_{(p,u)\in\mathcal{I}} &= \argmin_{\{B_{p,u}\} \in \mathcal{S_B}} \sum_{n=-L}^L \| \widetilde{\mathcal{M}}^n - \mathcal{O}^{(k)}\mathcal{Q} \mathcal{B}^n\mathcal{Q}^{\mathsf{H}}\mathcal{O}^{(k),\T} \|_F^2,\label{eq:updateB} \\
    \{X_\ell^{(k+1)}\}_{\ell=0,\ldots,L} &= \argmin_{\{X_\ell\}\in \mathcal{S_X}}  \sum_{n=-L}^L \|\widetilde{\mathcal{M}}^n \mathcal{X} - \mathcal{X} \mathcal{Q} \mathcal{B}^{n,(k)}\mathcal{Q}^{\mathsf{H}}\|_F^2, \label{eq:updateX}\\
    \{O_\ell^{(k+1)}\}_{\ell=0,\ldots,L} &= \argmin_{ \{O_\ell\} \in \mathcal{S_O}} \sum_{\ell=0}^L \| O_\ell - X_\ell^{(k+1)}\|_F^2 .\label{eq:updateO}
    \end{align}
Equations~\eqref{eq:updateB} and \eqref{eq:updateX} are structured least squares-problems in the variables $B_{p,u}$ and $X_\ell$, respectively, and can be solved efficiently. The orthogonalization procedure in equation~\eqref{eq:updateO} is a sequence of $L+1$ orthogonal Procrustes problems.

\subsection{Computational complexity}
Estimation of the second order moments from $N$ images of size $M\times M$ takes $\bigO(NM^3 + M^4)$ operations \cite{marshall2023fast}. Step 1 of Algorithm~\ref{alg:modm} takes $\bigO(LM_r^2M_\varphi^2)$ operations and step 2 has complexity $\bigO(M_r^2L^3)$. Step 3 takes $\bigO(LM_r^2M_\varphi^2)$ operations. Forming the matrices $\mathcal{B}^n$ and $\breve{\mathcal{A}}$ takes time $\bigO(L^2)$ and $\bigO(M_rL^2)$, respectively. Forming the matrices $\mathcal{O}^{(k)}\mathcal{Q} \mathcal{B}^n \mathcal{Q}^{\mathsf{H}} \mathcal{O}^{(k),\T}$ and $\mathcal{O}\mathcal{Q} \mathcal{B}^{n,(k)} \mathcal{Q}^{\mathsf{H}} \mathcal{O}^{\T}$ in each iteration takes time $\bigO(L\sum_{\ell, \ell'=0}^L \ell^2 \ell' + \ell\ell'^2) = \bigO(L^6)$, for a total of $\bigO(KL^6)$ operations when running $K$ iterations. Solving the update equation for the parameters $\{B_{p,u}\}$ takes time $\bigO(L^6 + L^5P^4 + P^3) = \bigO(L^9)$, when using all $\bigO(L^5)$ available equations for the $\bigO(L^2)$ variables. Solving the update equation for $X$ takes time $\bigO(L^6 + L^5L^6 + L^9) = \bigO(L^{11})$, when using all $\bigO(L^5)$ available equations for the $\bigO(L^3)$ variables. Solving the update equation for the $O_\ell$ takes time $\bigO(\sum_{\ell=0}^L \ell^3)  = \bigO(L^4)$.

The above complexities do not take several structural features into account. For instance, it is possible to use fewer than the $\bigO(L^5)$ equations to solve for the $\bigO(L^3)$ variables when updating $X$. Moreover, the linear systems exhibit sparsity and iterative solvers can therefore be used for lower complexity (and this is an option in the accompanying code). Lastly, the linear systems have additional structure and are in fact coupled Sylvester-type equation for which there are specialized linear solvers with improved complexity.

By comparison, the complexity of one iteration in the EM algorithm is $\bigO(NM^4\log M)$ (ignoring translational shifts). 
For $L = \bigO(1)$, the method-of-double-moments therefore costs less than EM by a factor of $\bigO(N \log M)$ per iteration.  Moreover, the upfront cost of moment formation is less than the complexity of one EM iteration by a factor of $\bigO(M \log M)$.

\subsection{Numerical experiments}\label{sec:numerics}
This section demonstrates the Algorithm of Section~\ref{sec:algorithm} using the two simulated datasets EMD-2660 \cite{wong2014cryo} and EMD-32743 \cite{guo2022structures} from the online electron microscopy data bank \cite{lawson2016emdatabank}. 

For varying $M$, we generate $100,000$ projection images of size $M\times M$, with signal-to-noise-ratio $0.1$ and affected by convolution with point spread functions with $20,000$ defocus groups. The projection images were generated with voltage 300 $kV$, defocus values ranging between $1$ $\mu m$ and $3$ $\mu m$, amplitude contrast $0.1$ and spherical aberration $2$ mm. The ground truth distribution of viewing angles was taken to be a mixture of $8$ von Mises-Fisher-Langevin distributions.  These mixtures provide a natural mathematical model for clustered preferred-orientation patterns that have been observed in experimental cryo-EM studies; see, e.g., \cite[Figure~2]{li2021effect}, \cite[Figure~6]{aiyer2024overcoming}, and \cite[Figure~2]{jenkins2025overcoming}.

A representative result of running Algorithm~\ref{alg:modm} on bandlimited structures is shown in Figure~\ref{fig:numerical_results1} and Figure~\ref{fig:numerical_results2}, where the ground truth structures are taken to be bandlimited versions of EMD-2660 and EMD-32743, projected to a maximum of bandlimit $L$. Throughout this section, we refer to the original untruncated volumes simply as original structures EMD-2660 and EMD-32743.  The structures used to generate images are their bandlimited versions.

In the experiments, we observe that the algorithm typically successfully reconstructs bandlimited structures only up to a maximal bandlimit \(L(M)\), which depends on the number of pixels in the projection images. The figures display the largest empirically successful bandlimit \(L(M)\) for image sizes \(M\in\{64,128,256\}\), together with the reconstruction obtained at the next bandlimit \(L(M)+1\), where the method fails. That is, given an image resolution \(M\), when the ground-truth volume is truncated at the next bandlimit level \(L(M)+1\), the reconstruction produced by our method exhibits a clear degradation in quality, indicating unsuccessful recovery beyond the empirically achievable bandlimit. Understanding the precise dependence between the achievable reconstruction resolution and the resolution of the projection images is an important direction for future work. Successful reconstruction also depends on using sufficiently many projection images to accurately estimate the moments, as well as on the viewing-angle distribution being sufficiently non-uniform. In practice, certain viewing distributions can lead to unsuccessful reconstructions even for bandlimits below \(L(M)\). This behavior is related to the inversion of the linear system in Eq.~\eqref{eq:2nd_moment_info}, which is only well-posed for a limited range of bandlimits \(L\) for a fixed image resolution \(M\). In addition, the optimization landscape depends on the viewing-angle distribution coefficients \(\{B_{p,u}\}\). Intuitively, successful reconstruction requires a viewing-angle distribution that is sufficiently distinct from the uniform distribution, while still avoiding extreme concentration or regions with near-zero probability mass.

\begin{figure}[!h]
    \centering
    \includegraphics[width=\linewidth,
    trim={0cm 5cm 0cm 5cm},
    clip]{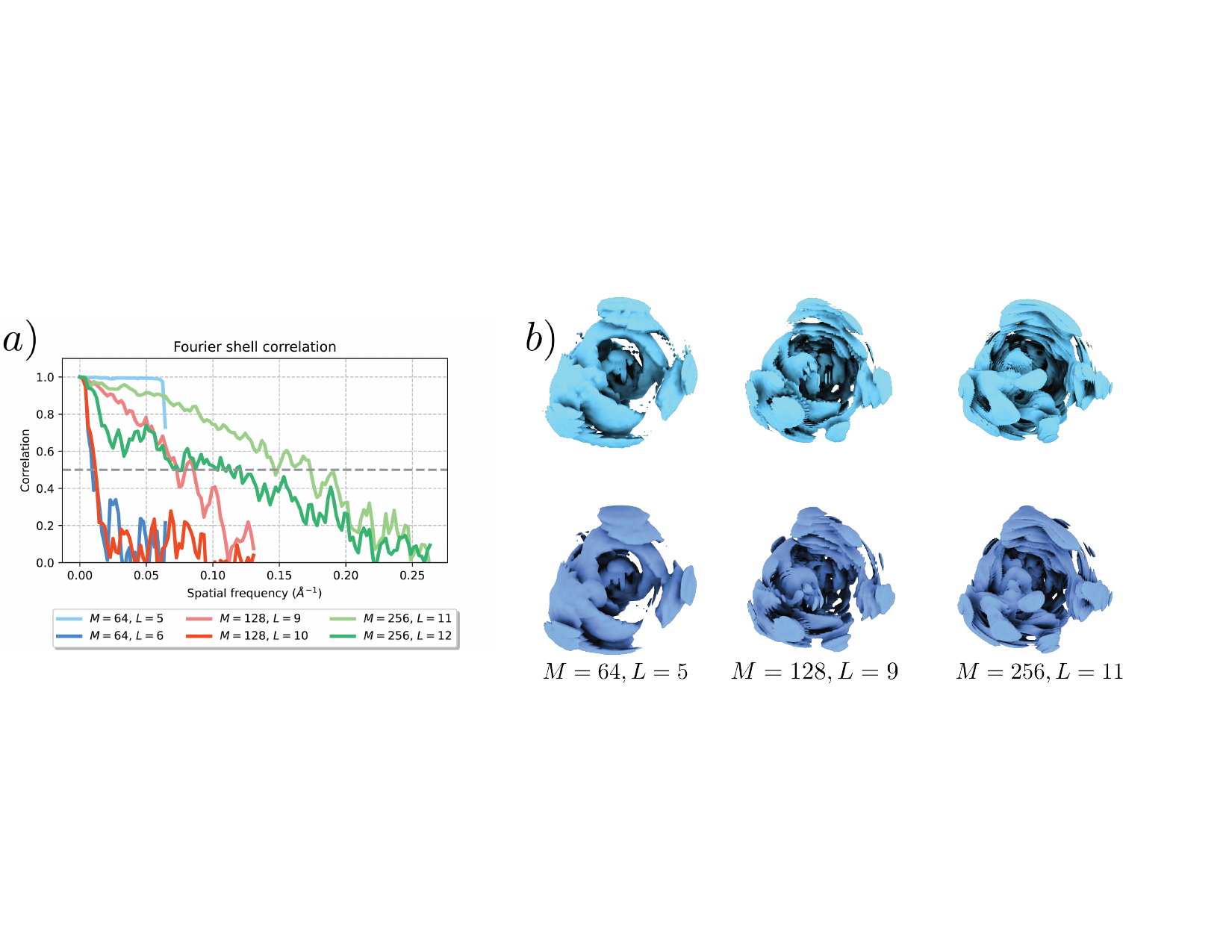}
    \caption{Result of running Algorithm~\ref{alg:modm} using as ground truth structures, bandlimited versions of the dataset EMD-2660 \cite{wong2014cryo} from the online Electron Microscopy Data Bank \cite{lawson2016emdatabank}. (a) FSC curves between the reconstructions and the corresponding ground-truth structures, for different values of bandlimit \(L\) and image size \(M\). (b) Reconstructions (top) and the corresponding ground-truth structures (bottom).} 
\label{fig:numerical_results1}
\end{figure}

\begin{figure}[!h]
    \centering
    \includegraphics[width=\linewidth,
    trim={0cm 5cm 0cm 5cm},
    clip]{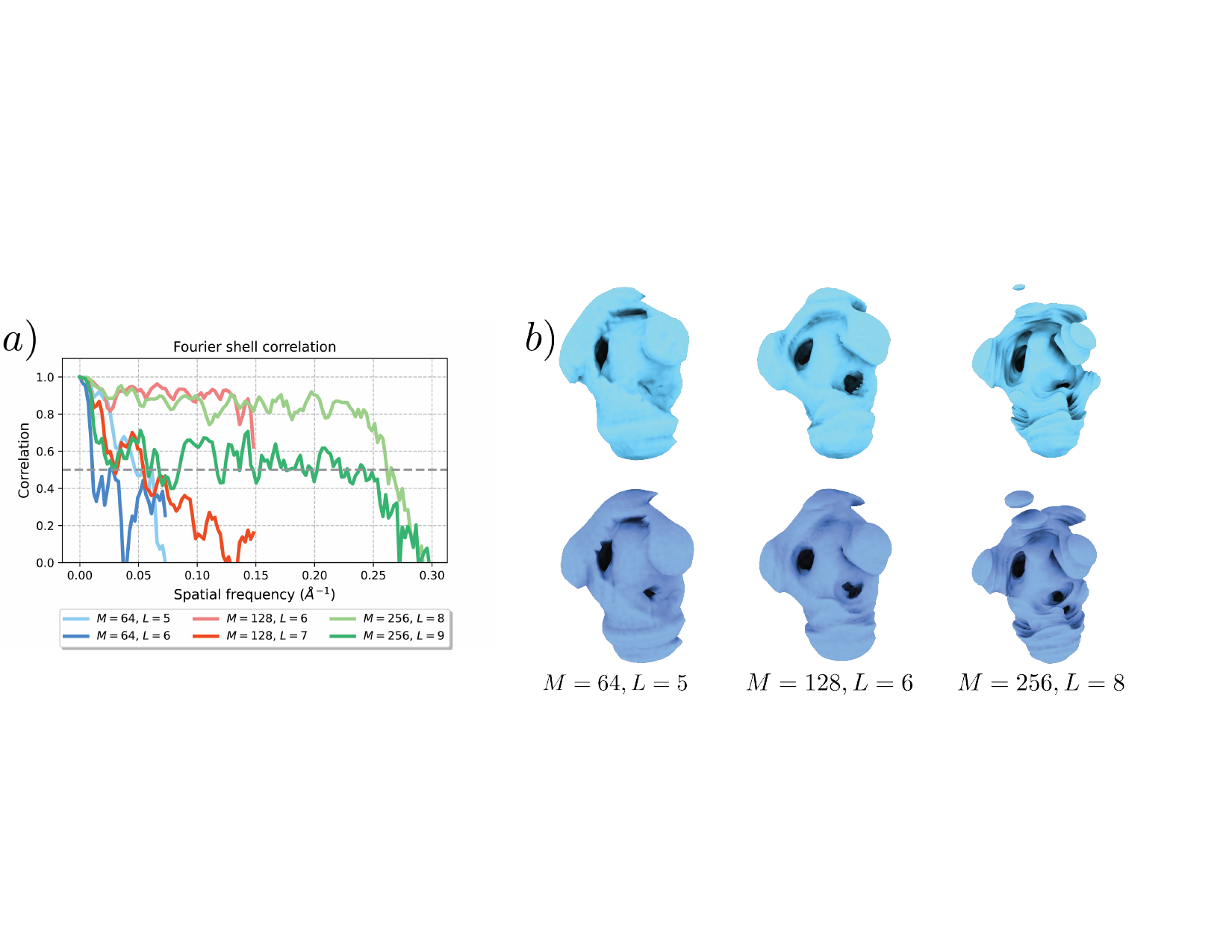}
    \caption{Result of running Algorithm~\ref{alg:modm} using as ground truth structures, bandlimited versions of the dataset EMD-32743 \cite{guo2022structures}  from the online Electron Microscopy Data Bank   \cite{lawson2016emdatabank}. (a) FSC curves between the reconstructions and the corresponding ground-truth structures, for different values of bandlimit \(L\) and image size \(M\). (b) Reconstructions (top) and the corresponding ground-truth structures (bottom).}
\label{fig:numerical_results2}

\end{figure}

As an additional comparison, Figure~\ref{fig:res_vs_not_trunc} reports the FSC between the reconstruction and the original structures EMD-2660 and EMD-32743. Similarly, Figure~\ref{fig:trunc_vs_not_trunc} reports the FSC between the ground-truths used in the experiments and the original structures EMD-2660 and EMD-32743. Each figure includes results for both datasets. The two sets of curves exhibit similar resolution behavior. This suggests that the dominant source of discrepancy from the original structures arises from the spherical-harmonic truncation itself, rather than from reconstruction error. To further study the discrepancy introduced by spherical-harmonic truncation at different bandlimits, we perform an additional numerical experiment in Figure~\ref{fig:truncation_error}.  This test shows that increasing the bandlimit \(L\) leads to smaller truncation error.

\begin{figure}[!ht]
    \centering
\includegraphics[width=1\linewidth]{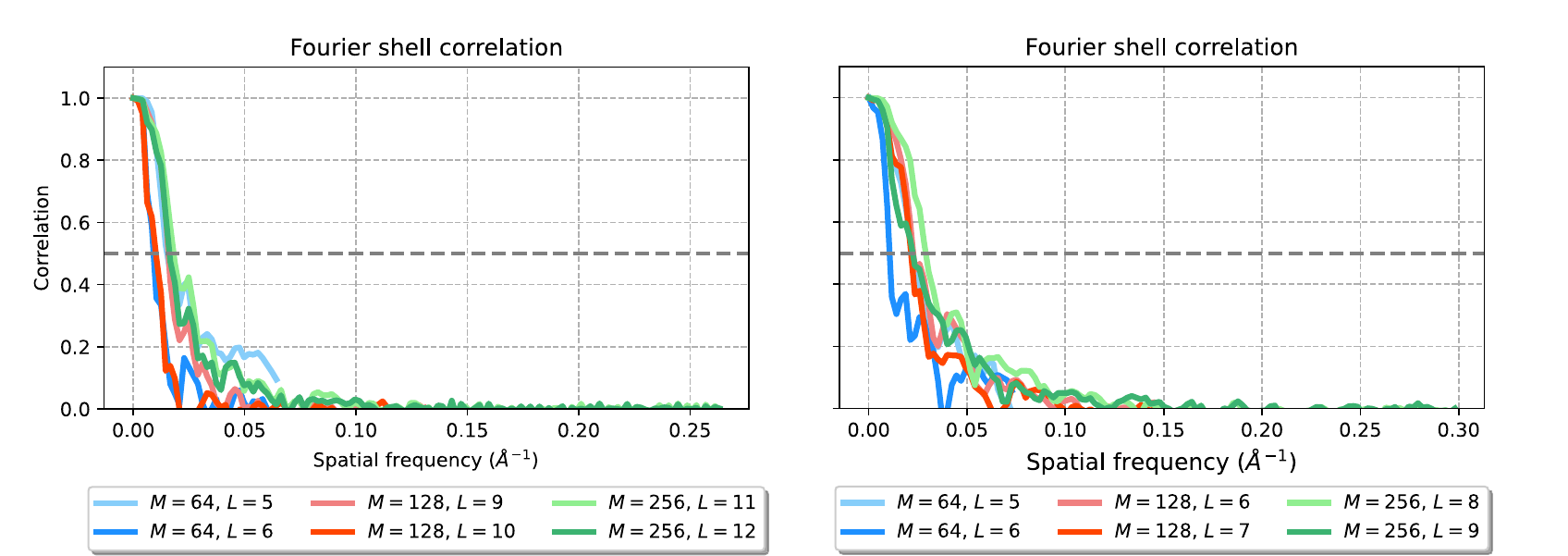}
    \caption{FSC curves between reconstruction results and the original structures EMDB-2660 (left) and EMDB-32743 (right).}
    \label{fig:res_vs_not_trunc}
\end{figure}

\begin{figure}[!ht]
    \centering
\includegraphics[width=1\linewidth, trim=0 2cm 0 1.5cm,
clip]{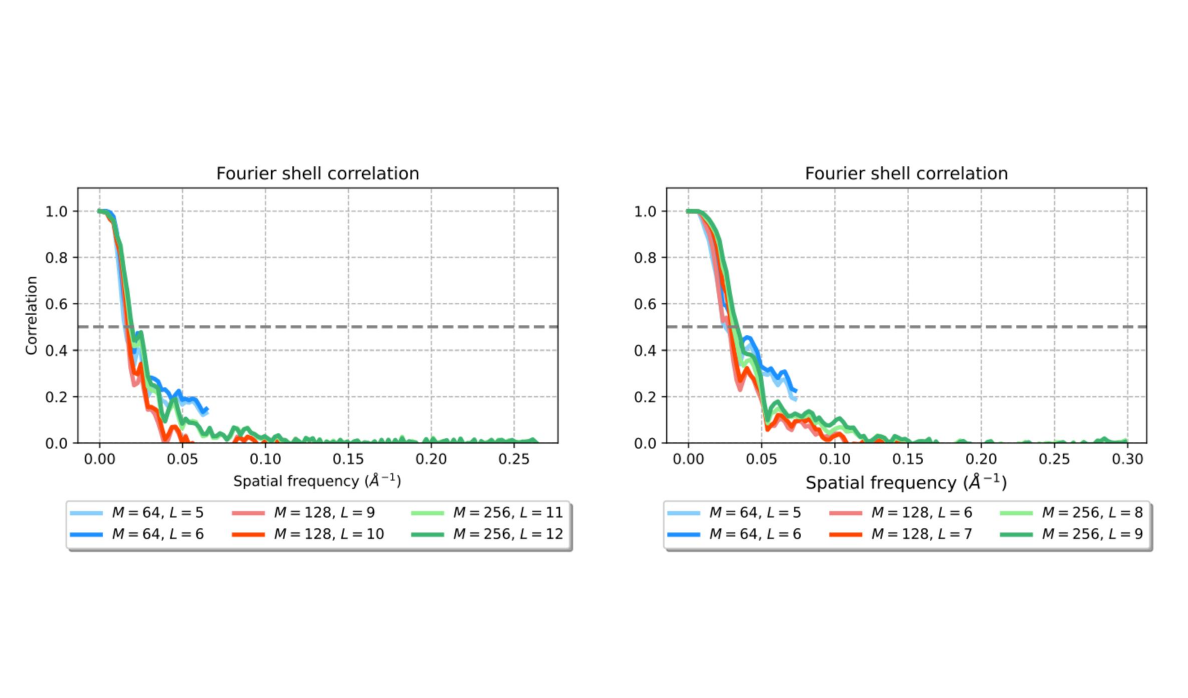}
    \caption{FSC curves between the ground-truth structures and the original structures EMDB-2660 (left) and EMDB-32743 (right).}
    \label{fig:trunc_vs_not_trunc}
\end{figure}

\begin{figure}[!ht]
    \centering
    \includegraphics[width=0.9\linewidth]{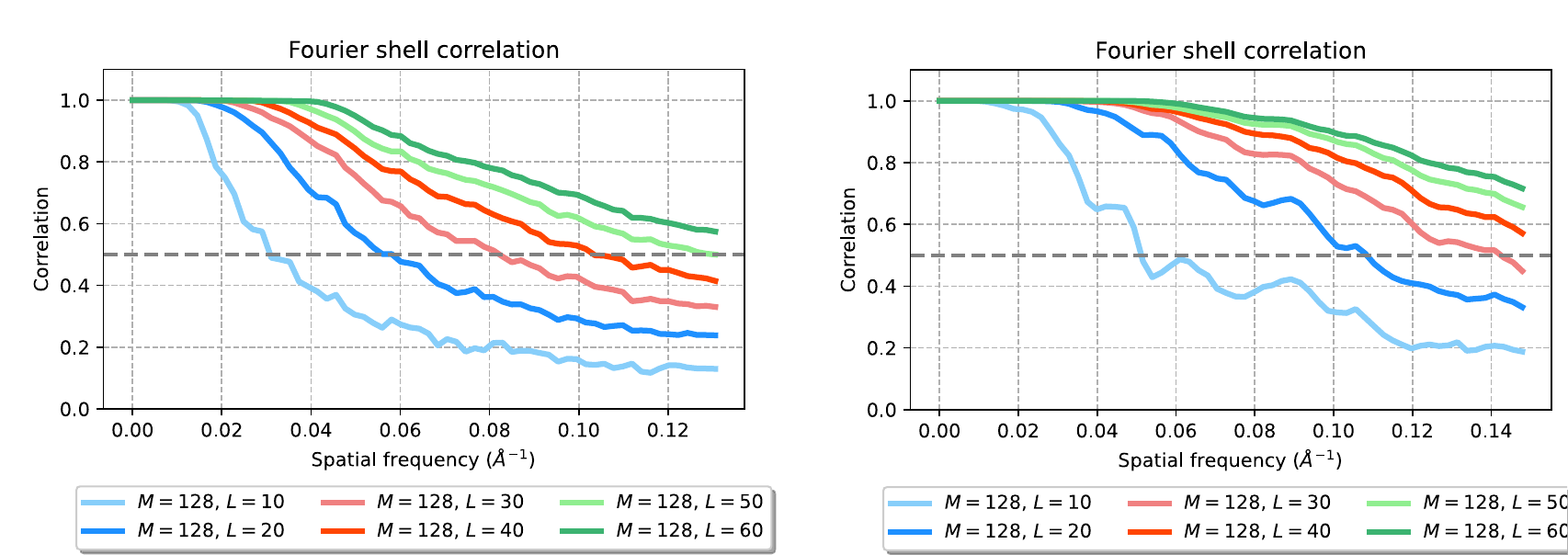}
    \caption{FSC curves between the ground-truth structures and the original structures EMDB-2660 (left) and EMDB-32743 (right), for image size \(M=128\) and varying bandlimit \(L\).}
    \label{fig:truncation_error}
\end{figure}

We emphasize that the current framework is primarily designed to recover the corresponding low-passed structure associated with a prescribed bandlimit. Although increasing \(L\) improves the approximation to the original structures, our current method empirically exhibits a maximal bandlimit that can be stably reconstructed, and this maximal achievable bandlimit depends on the number \(M\) of pixels in the projection images. As  illustrated in the figures above and further shown in Figure~\ref{fig:numerical_results3} below, the achievable bandlimit increases with the image resolution. 
Understanding this dependence theoretically and developing methods that can reliably and efficiently operate at higher bandlimits are important directions for future work.

\begin{figure}[!h]
    \centering
    \includegraphics[width=\linewidth,
    trim={0cm 5cm 0cm 5cm},
    clip]{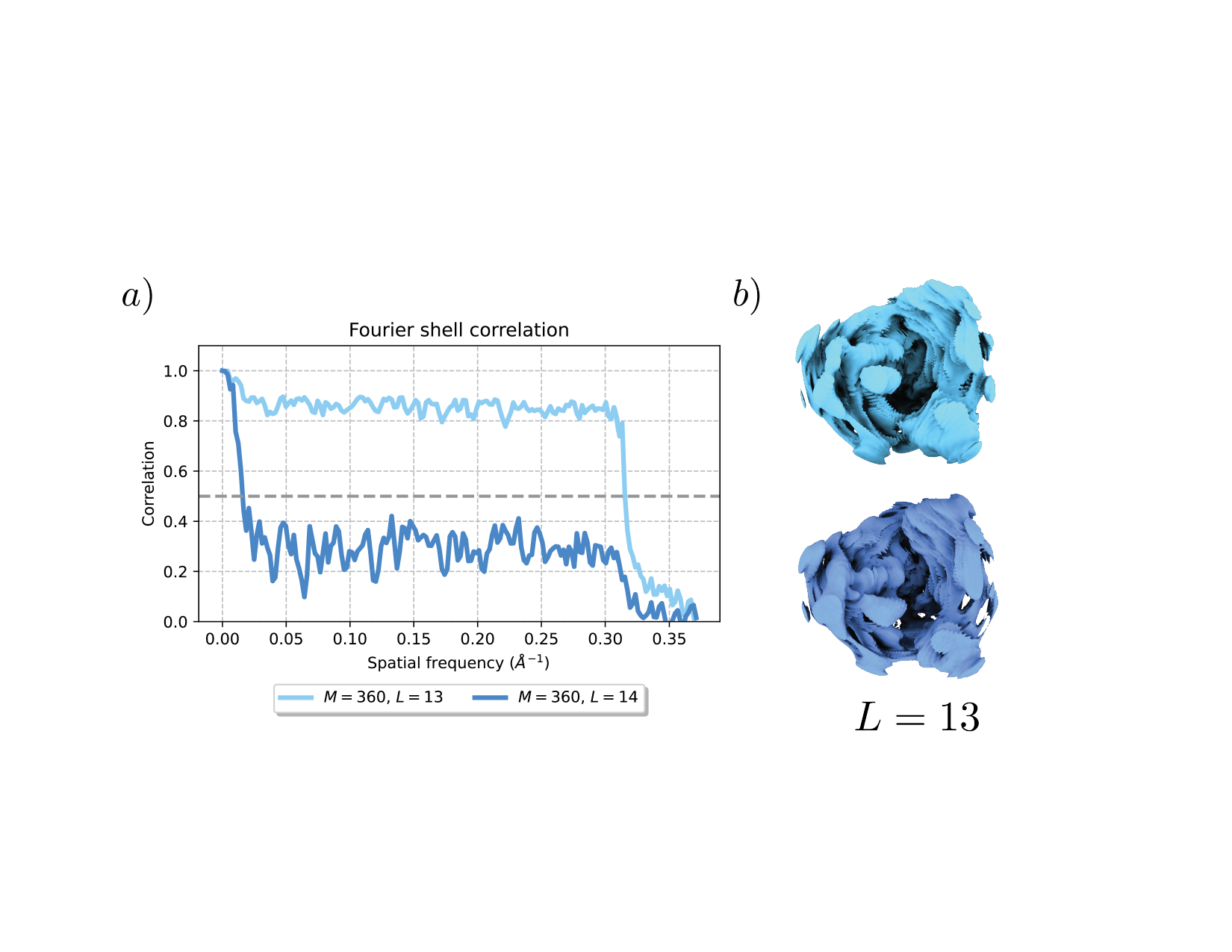}
    \caption{Result of running Algorithm~\ref{alg:modm} using as ground truth structure a bandlimited version of the dataset EMD-2660 \cite{wong2014cryo} from the online Electron Microscopy Data Bank  \cite{lawson2016emdatabank}. (a) FSC curves between the reconstructions and the corresponding ground-truth, for different values of bandlimit \(L\) and image size \(M\). (b) Reconstruction (top) and the corresponding ground-truth structure (bottom).} 
\label{fig:numerical_results3}
\end{figure}

\section{Uniqueness Theorem}\label{sec:uniqueness}

We give the main theoretical result of this paper. 
Theorem~\ref{thm:main} ensures that the first and second population moments of the 2-D images \textit{generically} \textit{uniquely} identify the structure $\Phi$ and the non-uniform distribution $\rho_2$ (i.e., the fourth item of Assumption~\ref{assumption:distributions}), when $\rho_2$ is suitably low-passed. Here, \textit{generic} refers to a generic condition on the distribution $\rho_2$, and \textit{unique} emphasizes the unique recovery of the structure $\Phi$ (in contrast to list-recovery formulations; see, e.g., \cite{sharon2020method, bandeira2023estimation}). 
This theorem provides a fundamental guarantee for the computational problem stated in Problem~\ref{prob:main}. In the statement below, we denote by $\rho_2^{\downarrow 2L}$ the low-pass of $\rho_2$ to degree $2L$,  i.e., \eqref{eq:def_rho_in_plane_uniform} truncated to $p \leq 2L$.

\begin{theorem}\label{thm:main}
   Assume $\Phi$, $\rho_1$, and $\rho_2$ satisfy Assumption~\ref{assumption:distributions}.
    Then population moments $m_1[\Phi, \rho_1]$, $m_2[\Phi,\rho_1]$ and $m_2[\Phi,\rho_2]$ uniquely identify $\Phi$ and $\rho_2^{\downarrow 2L}$ up to the action of $\SO(3)$ on $(\Phi, \rho_2^{\downarrow 2L})$ and up to chirality.
\end{theorem}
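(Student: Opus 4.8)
The plan is to leverage the three-step reduction already set up in Section~\ref{sec:algorithm}, reading it as a sequence of exact population-level recovery statements, and then to resolve the one genuinely algebraic step by a genericity argument. First I would argue that the uniform second moment $m_2[\Phi,\rho_1]$ determines, via Kam's method \eqref{eq:def_Kam}, the Gram matrices $C_\ell = A_\ell A_\ell^{\mathsf{H}}$ for each $\ell = 0,\ldots,L$. Assumption~\ref{assumption:distributions}(2) (linear independence of the radial functions) guarantees that each $C_\ell$ has full rank $2\ell+1$, so that $A_\ell$ is pinned down exactly up to a right factor $O_\ell \in \mathsf{O}(2\ell+1)$; equivalently, $\breve{A}_\ell = A_\ell Q_\ell^{\mathsf{H}} O_\ell^\T$ is recovered and only the block-orthogonal matrix $\mathcal{O} = \mathrm{blockdiag}_\ell(O_\ell)$ remains unknown. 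The first moment $m_1[\Phi,\rho_1]$ then determines the $\ell=0$ radial component, resolving the sign of the $\ell=0$ Cholesky factor so that $O_0 = 1$. At this point the problem reduces entirely to recovering $\mathcal{O}$ and the coefficients $\{B_{p,u}\}$ from the population identity $\mathcal{M}^n = \mathcal{O}\mathcal{Q}\mathcal{B}^n\mathcal{Q}^{\mathsf{H}}\mathcal{O}^\T$ in \eqref{eq:2nd_moment_info}, which holds exactly (no sampling error) for $-L \le n \le L$.

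Next I would make the residual symmetries explicit and verify they exhaust the allowable ambiguity. A global rotation $g \in \SO(3)$ acts on the blocks by right multiplication of $O_\ell$ by the (real) Wigner matrix of $g$, together with a compensating transformation of the $\{B_{p,u}\}$ that leaves every $\mathcal{M}^n$ invariant; chirality acts by the reflection $J$, which by the chirality invariance of $\rho_2$ (so that $B_{p,u}=0$ for odd $p$) is likewise moment-preserving. Since $\SO(3)$ acts on the $\ell=1$ block exactly as the defining representation, I would use this freedom, as in the constraint set $\mathcal{S_X}$, to normalize $O_1 = I_3$, with the determinant resolved by chirality. It then remains to show that, after fixing $O_0 = 1$ and $O_1 = I_3$, the solution $(\{O_\ell\}_{\ell\ge 2}, \{B_{p,u}\})$ of $\mathcal{M}^n = \mathcal{O}\mathcal{Q}\mathcal{B}^n\mathcal{Q}^{\mathsf{H}}\mathcal{O}^\T$ is unique for Zariski-generic $B$.

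The crux — and the step I expect to be the main obstacle — is this final uniqueness claim, since it is precisely the improvement from the finite-list recovery of prior work \cite{sharon2020method} to genuine uniqueness. I would attack it by a bootstrapping argument across the block index $\ell$, exploiting the parity decoupling $\mathcal{B}^n_{\ell,\ell'} = 0$ for $\ell \not\equiv \ell' \pmod 2$ from Proposition~\ref{prop:2nd_moment}. The gauge-fixed blocks $\mathcal{M}^n_{0,0}$ and $\mathcal{M}^n_{1,1}$ read off the lowest-degree $B_{p,u}$ directly; then, proceeding to the coupling blocks $\mathcal{M}^n_{1,\ell'}$ (odd $\ell'$) and $\mathcal{M}^n_{0,\ell'}$, $\mathcal{M}^n_{2,\ell'}$ (even $\ell'$), each newly encountered $O_{\ell'}$ enters linearly once the relevant Clebsch--Gordan-weighted combinations of $B_{p,u}$ are known, while the still-unknown top-degree coefficients are constrained simultaneously. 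The genericity of $\{B_{p,u}\}$ is used exactly to ensure that the linear systems arising at each stage have maximal rank, so that both the orthogonal block and the new coefficients are determined without ambiguity; the non-uniqueness locus is then a proper Zariski-closed subset of $B$-space, cut out by the vanishing of the relevant minors.

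I anticipate two delicate points. First, one must verify that the coupling blocks supply enough independent equations to overdetermine each $O_\ell \in \mathsf{O}(2\ell+1)$; this is where $L \ge 3$ (ensuring a usable $\ell=1$ anchor and sufficiently many coupled blocks) and $P \ge 2L$ (ensuring all $\mathcal{B}^n_{\ell,\ell'}$ are populated) should enter. Second, one must confirm that the only block-orthogonal $W = \mathcal{O}'^\T\mathcal{O}$ relating two solutions and preserving the structured family $\{\mathcal{Q}\mathcal{B}^n\mathcal{Q}^{\mathsf{H}}\}$ together with all constraints \eqref{eq:constraints_on_B} is the identity, i.e.\ that the generic stabilizer collapses to the rotation/reflection orbit already quotiented out. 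Should the direct elimination prove unwieldy, the fallback is to establish genericity abstractly: show the parameter-to-moment map is generically finite via a Jacobian computation at a single explicit point, and then upgrade finiteness to uniqueness by checking that at that point the fiber coincides with the symmetry orbit, whence the same holds on a Zariski-dense open set.
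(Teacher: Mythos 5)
Your proposal follows the same skeleton as the paper's proof: Kam's method to reduce to unknown block-orthogonal matrices $O_\ell$, the first moment to fix $O_0=1$, gauge-fixing $O_1=I_3$ via the $\SO(3)$/chirality action, and then a block-by-block elimination over $\ell$ using genericity of $\{B_{p,u}\}$. The paper's induction step for $L\ge 4$ is essentially your bootstrapping: once $O_{\ell'}=I$ and $B_p=\tilde B_p$ for $p\le 2L-2$ are known, the coupling blocks $\mathcal{B}^n_{L,\ell'}$ with $\ell'<L$ are known and the equation for $O_L$ becomes linear; Lemma~\ref{lem:technical2} supplies the full-column-rank condition you anticipate, and Lemma~\ref{lem:technical1} then recovers $B_{2L}$.

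However, there is a genuine gap at the start of the bootstrap, and it is exactly your first ``delicate point.'' The claim that ``each newly encountered $O_{\ell'}$ enters linearly once the relevant Clebsch--Gordan-weighted combinations of $B_{p,u}$ are known'' fails for the low-degree blocks. By parity decoupling, $O_2$ is coupled to already-determined blocks only through $\mathcal{B}^n_{2,0}$, a single $5\times 1$ vector; an orthogonal matrix in $\mathsf{O}(5)$ fixing one vector is determined only up to an $\mathsf{O}(4)$ factor. Likewise $O_3$ couples to the anchor $\ell'=1$ via a $7\times 3$ matrix, leaving an $\mathsf{O}(4)$ ambiguity. To resolve these one is forced into the diagonal blocks $\mathcal{B}^n_{2,2}$ and $\mathcal{B}^n_{3,3}$, which are \emph{quadratic} in the residual orthogonal unknowns and simultaneously entangled with the still-unknown top coefficients $\tilde B_4$, $\tilde B_6$. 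This is why the paper's argument is an induction on $L$ whose base case $L=3$ is verified by computer: after eliminating $\tilde B_4$ linearly, a quadratic system in the residual $\mathsf{O}(4,\mathbb{C})$ unknown is solved by homotopy continuation, yielding \emph{four} isolated solutions, three of which must then be discarded using an additional consistency equation from $\mathcal{B}^1_{3,1}$. No purely linear rank argument is known to handle this stage, and the genericity-of-minors reasoning you invoke does not apply to a quadratic system with multiple isolated solutions. Your fallback (generic finiteness via a Jacobian, then upgrading to uniqueness by computing the fiber at one point) is closer to what actually happens, but note that computing the full fiber at a point \emph{is} the computer-assisted polynomial-system solve, not a Jacobian computation, and the transfer from one point to a Zariski-open set requires the constancy of the decomposition type of parameterized systems (the paper cites Theorem~A.14.10 of Sommese--Wampler), not merely semicontinuity of rank.
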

The proof of Theorem~\ref{thm:main} is at the end of this section.

\begin{remark}
Theorem~\ref{thm:main} identifies the structure up to an overall global rotation and reflection.
This is an unavoidable ambiguity in cryo-EM, and therefore not a drawback of this particular result.  For clarity, we include Lemma~\ref{lem:symmetry} below which formalizes the ambiguity. 
\end{remark}

\begin{lemma}\label{lem:symmetry}
Let $\Phi, \rho_1, \rho_2$ and $\tilde{\Phi}, \rho_1, \tilde{\rho}_2$ both satisfy satisfy Assumption~\ref{assumption:distributions}.  Assume they differ from each other by a global rotation and possibly chirality,  
that is, there exist $S \in \SO(3)$ and $\epsilon \in \{0,1\}$ such that for all $\x \in \mathbb{R}^3$ and $R \in \SO(3)$ it holds 
\begin{equation}
\tilde{\Phi}(\x) = \Phi(J^{\epsilon} S \x) \quad \quad \text{and} \quad \quad \tilde{\rho}_2(R) = \rho_2(J^{\epsilon}SRJ^{\epsilon}),
\end{equation}
where $J = \diag(1,1,-1)$.  Then we have 
\begin{equation} \label{eq:lemma-want}
m_1[\Phi, \rho_1] = m_1[\tilde{\Phi}, {\rho_1}], \quad  m_2[\Phi, \rho_1] = m_2[\tilde{\Phi}, {\rho_1}], \quad  \text{and}  \quad m_2[\Phi, \rho_2] = m_2[\tilde{\Phi}, \tilde{\rho}_2].
\end{equation}
\end{lemma}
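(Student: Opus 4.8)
The plan is to verify the three equalities in \eqref{eq:lemma-want} directly, by showing that each moment integral is invariant under one measure-preserving reparametrization of $\SO(3)$. The conceptual point to isolate first is that the reflection $J = \diag(1,1,-1)$ fixes every central slice plane: since $(\omega_1,\omega_2,0)^\T$ has vanishing third coordinate, $J^\epsilon(\omega_1,\omega_2,0)^\T = (\omega_1,\omega_2,0)^\T$ for either $\epsilon \in \{0,1\}$. This is the identity that will let a chirality flip of the volume be absorbed into the orientation variable.

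I would begin by transporting the hypothesis $\tilde\Phi(\x) = \Phi(J^\epsilon S\x)$ to slice space. For any $A \in \O(3)$ the convention of Section~\ref{sec:notation} gives $\widehat{\Phi(A\,\cdot)}(\boldsymbol\omega) = \widehat\Phi(A\boldsymbol\omega)$, so $\widehat{\tilde\Phi}(\boldsymbol\omega) = \widehat\Phi(J^\epsilon S\boldsymbol\omega)$. Combined with the Fourier slice theorem \eqref{eq:Fourier_slice}, the clean projections of $\tilde\Phi$ satisfy $\widehat{\tilde I_R}(\omega_1,\omega_2) = \widehat\Phi(J^\epsilon S R(\omega_1,\omega_2,0)^\T)$. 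The additive noise and the debias term in \eqref{eq:def_pop_moment} are identical for $\Phi$ and $\tilde\Phi$, since they depend only on $\sigma$ and the frequencies, so they cancel and may be ignored.

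Next I would introduce the reparametrization $\phi : \SO(3) \to \SO(3)$, $\phi(R) = J^\epsilon S R J^\epsilon$, and check it is a measure-preserving bijection: $\det\phi(R) = \det(S)\det(R) = 1$ (the two factors $\det J^\epsilon$ cancel), its inverse is $R' \mapsto S^\T J^\epsilon R' J^\epsilon$, and Haar measure on $\O(3)$ — hence its restriction to $\SO(3)$ — is invariant under two-sided multiplication by fixed orthogonal matrices, so $\der R = \der(\phi(R))$. By the hypothesis on the distributions, $\rho_2(\phi(R)) = \tilde\rho_2(R)$, while the uniform density $\rho_1 \equiv 1$ is $\phi$-invariant.

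The computation then closes in one step. Using the slice-fixing identity, the argument of $\widehat\Phi$ obeys $J^\epsilon S R(\omega_1,\omega_2,0)^\T = J^\epsilon S R J^\epsilon(\omega_1,\omega_2,0)^\T = \phi(R)(\omega_1,\omega_2,0)^\T$. Substituting the slice expression for $\widehat{\tilde I_R}$ into the definition \eqref{eq:def_pop_moment} of $m_2[\tilde\Phi,\tilde\rho_2]$ and applying the change of variables $R' = \phi(R)$ reproduces verbatim the integral defining $m_2[\Phi,\rho_2]$; running the same argument with $\rho_2$ replaced by the $\phi$-invariant uniform density gives $m_2[\Phi,\rho_1] = m_2[\tilde\Phi,\rho_1]$, and the degree-one analogue (a single copy of $\widehat{\tilde I_R}$) gives $m_1[\Phi,\rho_1] = m_1[\tilde\Phi,\rho_1]$. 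I expect no serious obstacle here; the one subtlety worth stating carefully is that when $\epsilon = 1$ the matrices $J^\epsilon$ individually lie outside $\SO(3)$, so the verification that $\phi$ both preserves $\SO(3)$ and its Haar measure rests on the determinant bookkeeping above, and the reason a chirality flip of $\Phi$ is compensated by conjugating $\rho_2$ (rather than by reflecting the images) is exactly the slice-fixing identity $J^\epsilon(\omega_1,\omega_2,0)^\T = (\omega_1,\omega_2,0)^\T$.
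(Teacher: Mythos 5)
Your proof is correct and is essentially the paper's argument transposed to the Fourier domain: where you absorb the trailing $J^{\epsilon}$ by noting it fixes the central slice plane $(\omega_1,\omega_2,0)^{\T}$, the paper absorbs it by the change of variable $x_3 \mapsto (-1)^{\epsilon} x_3$ in the line integral, and both proofs then use bi-invariance of Haar measure to reparametrize $R \mapsto J^{\epsilon} S R J^{\epsilon}$. The only substantive difference is that the paper establishes the stronger statement that the full distributions of noiseless projection images coincide (hence moments of \emph{every} order agree), whereas you verify the three required moment identities directly.
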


See Appendix~\ref{sec:appendixB} for a proof of Lemma~\ref{lem:symmetry}.

\begin{remark}\label{rem:caveat}
   Our proof strategy for Theorem~\ref{thm:main} goes by induction on $L$.  
   The verification of the base case $L=3$ requires computer assistance.
   In particular, we use floating-point arithmetic and pseudo-random numbers for the base case.  
   Although such calculations are standard practice in the field of computational algebra, they fall short of providing a completely rigorous proof; thus the proof of Theorem~\ref{thm:main} is labeled as a ``Computational Proof" below.  All other steps in the argument are rigorous.
\end{remark}

Our induction step for Theorem~\ref{thm:main} relies on the following lemmas.  The lemmas imply that certain linear systems arising in the proof generically have a unique solution.
For convenience, we denote by $B_p = (B_{p,u})_{-p \le u \le p}$ the vector in $\mathbb{C}^{2p+1}$.

\begin{lemma}
\label{lem:technical1}
Let $P \geq 2L$.  If the expansion coefficients $\left(B_{p} : 2 \leq p \leq 2L-2, p \text{ is even} \right)$ are fixed, then the affine-linear map 
\begin{equation} \label{eq:allBnLL}
B_{2L} \mapsto \left( \mathcal{B}^n_{L,L} : 0 \leq n \leq L, n \equiv L \,\, (\textup{mod } 2) \right)
\end{equation}
is injective.
\end{lemma}

\begin{lemma} 
\label{lem:technical2}
Let $L \geq 4$.   If the expansion coefficients $\left(B_{p} : 2 \leq p \leq 2L-2, p \text{ is even} \right)$ are Zariski-generic, then 
the horizontal concatenation of the matrices
\begin{equation} \label{eq:big-matrix}
\left( \mathcal{B}^{n}_{L, \ell'} : 0 \leq \ell' < L, n \equiv L \equiv \ell' \,\, (\textup{mod }2) \right)
\end{equation}
has full column rank.
\end{lemma}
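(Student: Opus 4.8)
The plan is to use that each entry of $\mathcal{B}^n_{L,\ell'}$ is a linear form in the coefficients $\left(B_p : 2 \le p \le 2L-2,\ p \text{ even}\right)$. Indeed, for $\ell' < L$ the inner index in \eqref{eq:def_calB} runs over $\max\{|m-m'|,L-\ell'\} \le \ell'' \le \min\{L+\ell',P\} = L+\ell'$, so, since $\ell' \le L-2$, only coefficients $B_{\ell'',u}$ with even $2 \le \ell'' \le 2L-2$ appear, and they appear homogeneously of degree one (the value $\ell''=0$ is excluded, so $B_{0,0}$ never enters). Hence the concatenation in \eqref{eq:big-matrix}, call it $\mathcal{K}(B)$, is a matrix of linear forms in the generic parameters, and the locus where its rank drops below the maximal value is Zariski-closed. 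The statement therefore reduces to exhibiting a single $B^\star$ at which $\mathcal{K}(B^\star)$ attains rank $2L+1$, i.e.\ its rows (indexed by $m\in\{-L,\dots,L\}$) are independent. A dimension count explains the hypothesis $L\ge 4$: the block $\ell'=L-2$ alone contributes $2\ell'+1 = 2L-3$ columns for each admissible $n$, so for $L\ge 4$ there are at least $2L+1$ columns available, whereas for $L\le 3$ there are too few for full rank to even be possible.

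I would flag one subtlety before constructing the witness. It is tempting to treat $B$ as indeterminates, set $\mathbf{c}^{\T}\mathcal{K}(B)=0$, and extract the coefficient of each monomial $B_{\ell'',u}$; since $m'-m=u$ pins $m=m'-u$, each such coefficient is a single term $c_{m'-u}$ times a product of Clebsch--Gordan and $\mathcal{N}^n_\ell$ factors, which would force $c_{m}=0$ wherever those factors are nonzero. However, this only shows that the rows are $\C$-linearly independent \emph{as linear forms}, which is strictly weaker than generic full rank: an odd skew-symmetric matrix of linear forms has $\C$-independent rows yet identically vanishing determinant. A genuine witness is thus unavoidable.

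To build the witness I would specialize $B$ to a sparse, structured vector and arrange a square $(2L+1)\times(2L+1)$ submatrix of $\mathcal{K}$ to be (generalized-)triangular. Concretely, the plan is to fix a system of distinct representatives assigning to each row $m$ a distinct column $(\ell',n,m')$ together with a single active coefficient $B_{\ell'',\,m'-m}$, ordered so that the chosen generalized diagonal is the \emph{unique} nonzero term of the expanded determinant; the matched diagonal entry is then $(-1)^{m+n}\,\mathcal{N}^n_L\,\mathcal{N}^n_{\ell'}\,C(L,m;\ell',-m'|\ell'',m-m')\,C(L,n;\ell',-n|\ell'',0)/(2\ell''+1)$, and the determinant specializes to one nonzero monomial. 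Interior rows are easy: their representatives can be drawn from the $\ell'=L-2$ block with small $|n|$ and $|m'-m|$.

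The main obstacle is the extreme rows $m=\pm L$ and $m=\pm(L-1)$, which are not reachable by zonal ($u=0$) couplings and force representatives with $|u|=|m'-m|\ge 2$. For these I must simultaneously satisfy the selection rules $m-m'=-u$, $|u|\le \ell''$, and $|L-\ell'|\le \ell''\le L+\ell'$, while keeping both $C(L,m;\ell',-m'|\ell'',m-m')$ and the projection-type coefficient $C(L,n;\ell',-n|\ell'',0)$ nonzero (the latter has accidental zeros that the choice of $\ell''$ and $n$ must dodge). The crux is therefore twofold: verifying non-vanishing of these Clebsch--Gordan coefficients for the boundary representatives, using explicit stretched/near-stretched formulas and parity, and proving uniqueness of the chosen generalized diagonal so that no cancellation occurs. (An alternative streamlining is to reinterpret $\mathcal{B}^n_{L,\ell'}$ as blocks of a convolution operator attached to $\rho_2$ on $\SO(3)$ and invoke nondegeneracy of such operators for generic $\rho_2$, but I would prefer the concrete minor computation.) Once such a $B^\star$ is exhibited, the associated maximal minor is a nonzero polynomial, the degeneracy locus is a proper Zariski-closed set, and full rank $2L+1$ follows for all Zariski-generic $(B_p)$.
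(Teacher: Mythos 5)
Your reduction to exhibiting a single witness $B^\star$ is exactly the paper's first step (full column rank is Zariski-open, and the reality/normalization constraints on $B$ can be dropped), and your warning that $\C$-independence of the rows \emph{as linear forms} does not imply generic full rank is well taken. But the proposal stops at the point where the actual work begins: the witness is never constructed. You correctly isolate the crux --- choosing representatives for the boundary rows $m=\pm L,\pm(L-1)$, verifying non-vanishing of the associated Clebsch--Gordan factors, and ruling out cancellation in the determinant --- and then leave all three unresolved, so what remains is a plan rather than a proof.

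For comparison, the paper's witness (for $L\ge 6$) is simpler than what you envision: it sets exactly two coefficients nonzero, $B_{2L-2,2}$ (large in magnitude) and $B_{2L-4,-2L+7}$ (small), and works inside just the two blocks $\mathcal{B}^{\underline{n}}_{L,L-2}$ and $\mathcal{B}^{\underline{n}}_{L,L-4}$ at a single $\underline{n}\in\{0,1\}$ with $\underline{n}\equiv L\ (\mathrm{mod}\ 2)$. Each block is then supported on at most two generalized diagonals, $m'-m\in\{2,\,-2L+7\}$; the designated entries hit every row $m\in\{-L,\dots,L\}$ exactly once across a $(2L+1)\times(2L+1)$ submatrix, and the magnitude separation makes that submatrix columnwise diagonally dominant, hence nonsingular --- so no uniqueness of a generalized diagonal is needed. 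Moreover, because the active $\ell''$ always sits at the top of its admissible range ($\ell''=\ell+\ell'$, the ``stretched'' case), the required non-vanishing is supplied once and for all by Lemma~\ref{lem:nonvanish-CG} (where the Clebsch--Gordan sum collapses to the single term $k=0$), which sidesteps the ``accidental zeros'' you worry about for interior values of $\ell''$. Finally, this construction needs $4L-10\ge 2L+1$, i.e.\ $L\ge 6$; the paper disposes of $L=4,5$ by a direct exact-arithmetic rank computation on a random instance, a case split your plan does not address even though your own dimension count shows these cases are the tightest.
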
 

See Appendix~\ref{sec:appendixB} for proofs of Lemmas~\ref{lem:technical1} and \ref{lem:technical2}.  
We are now ready to prove our main theoretical result.

\begin{proof}[Computational Proof of Theorem~\ref{thm:main}]
Let $\Phi$, $\rho_1$, and $\rho_2$ satisfy Assumption~\ref{assumption:distributions}.  Suppose $\tilde{\Phi}$ and $\tilde{\rho}_2$ are another structure and distribution with expansions
\begin{equation}
\widehat{\tilde{\Phi}}(r,\theta,\varphi) = \sum_{\ell=0}^L\sum_{m=-\ell}^\ell \tilde{A}_{\ell}^m(r) Y_\ell^m(\theta, \varphi), \quad r \in [0, r_\text{max}], \quad \theta \in [0, \pi], \quad \varphi \in [0, 2\pi),
\end{equation}
and 
\begin{equation}
\tilde{\rho}_2(R) \mathrm{d}R =\sum_{\substack{p=0\\ p \text{ even}}}^{P} \sum_{u=-p}^p \tilde{B}_{p,u} U_{u0}^p(R)\mathrm{d}R,
\end{equation}
such that there is a matching of population moments:
\begin{equation}
m_1[\Phi, \rho_1] = m_1[\tilde{\Phi}, \rho_1], \quad m_2[\Phi, \rho_1] = m_2[\tilde{\Phi}, \rho_1], \quad m_2[\Phi, \rho_2] = m_2[\tilde{\Phi}, \tilde{\rho}_2].
\end{equation}
Our goal is to prove that $(\tilde{\Phi}, \tilde{\rho}_2^{\downarrow 2L})$ equals $(\Phi, \rho_2^{\downarrow 2L})$ up to rotation and possibly chirality. 
As in Lemma~\ref{lem:symmetry}, this precisely means there exist $S \in \SO(3)$ and $\epsilon \in \{0,1\}$ such that $\tilde{\Phi}(\x) = \Phi(J^{\epsilon} S \x)$ and $\tilde{\rho}_2^{\downarrow 2L}(R) = \rho_2^{\downarrow 2L}(J^{\epsilon}SRJ^{\epsilon})$. 
Notice that $\tilde{\Phi}(\x) = \Phi(J^{\epsilon} S \x)$ is equivalent to 
\begin{align}\label{eq:group-action}
\tilde{A}_{\ell}(r) = A_{\ell}(r) U^{\ell}(J^{\epsilon}S),  \quad \text{for all } 0\leq\ell \leq L,
\end{align}
because, writing $\x = (r,\theta,\varphi)$ in spherical coordinates, we have
\[
\widehat{\Phi}(J^{\epsilon}S\x) = \sum_{\ell=0}^L \sum_{m=-\ell}^{\ell} A_{\ell m}(r) \sum_{n=-\ell}^\ell U^{\ell}_{m n}(J^{\epsilon} S) Y^{n}_{\ell}(\theta,\varphi) 
= \sum_{\ell=0}^L \sum_{m=-\ell}^{\ell} [A_{\ell}(r) U^{\ell}(J^{\epsilon}S)]_{m} Y^{m}_{\ell}(\theta, \varphi),
\]
where $A_{\ell}(r)$ denotes the row-vector-valued function evaluated at the radial frequency $r$, as defined in \eqref{eq:def_Al}.

Toward this goal, we first use the condition $m_2[\Phi, \rho_1] = m_2[\tilde{\Phi}, \rho_1]$.  By \cite{kam1980reconstruction},  it implies 
\begin{equation}\label{eq:kam-O}
\tilde{A}_{\ell}(r) =  A_{\ell}(r) Q_{\ell}^{\mathsf{H}} {O}_{\ell}^{\T}Q_{\ell}, \quad \text{for all } 0 \leq \ell \leq L,
\end{equation}
where ${O}_{\ell} \in \mathbb{R}^{(2 \ell+1) \times (2\ell+1)}$ are some unknown real-valued orthogonal matrices and $Q_{\ell} \in \mathbb{C}^{(2 \ell +1) \times (2 \ell +1)}$ are the complex-valued unitary matrices defined in \eqref{eq:Q-def}, which represent the unitary transforms from the complex to the real spherical harmonics basis (cf.~\eqref{eq:Aring_relation}).
Next, we use $m_1[\Phi, \rho_1] = m_1[\tilde{\Phi}, \rho_1]$.  This implies $\tilde{A}_0(r) = A_0(r)$,  again by \cite{kam1980reconstruction}.  Since  $Q_{0}^{\mathsf{H}} =1$ it follows that 
\begin{equation} \label{eq:O0}
O_0=1,
\end{equation}
where $O_0 \in \mathbb{R}$ is the $1 \times 1$ orthogonal matrix in \eqref{eq:kam-O}.

Next, we claim that by the symmetry in \eqref{eq:group-action}, it is without loss of generality to assume  
\begin{equation} \label{eq:O1}
O_1 = I \in \mathbb{R}^{3 \times 3}.
\end{equation}
Indeed, if $O_1 \neq I$ there exist  $S \in \SO(3)$ and $\epsilon \in \{0,1\}$ such that $Q_1^{\mathsf{H}}U^{1}(J^{\epsilon}S)Q_1 = O_1$ (see \cite[pg.~324]{chirikjian2016harmonic}). 
Then the replacement $\Phi(\x) \leftarrow \Phi(J^{\epsilon}S\x)$ reduces us to $O_1 = I$ (cf. \eqref{eq:kam-O}).  Note that assuming $O_1 = I$ kills rotational and chiral ambiguities of the problem, i.e., the goal becomes to prove that these equalities hold exactly, $\tilde{\Phi} = \Phi$ and $\tilde{\rho}_2 = \rho_2$. Thus with \eqref{eq:O1}, we want

\begin{align}
&O_{\ell}=I\in\R^{(2\ell+1)\times(2\ell+1)}, \quad \text{ for all } 0 \leq \ell \leq L, \text{ and}  \label{eq:want-Ol}\\  &\tilde{B}_{p,u} = B_{p,u}, \quad \text{ for all } 0 \leq p \leq 2L \text{ with }p \text{ even and}-p \leq u \leq p. \label{eq:want-Btilde}
\end{align}

To show \eqref{eq:want-Ol} and \eqref{eq:want-Btilde}, we turn to the condition $m_2[\Phi, \rho_2] = m_2[\tilde{\Phi}, \tilde{\rho}_2]$. 
First by \eqref{eq:2nd_moment_matrix},
\begin{align} \label{eq:m2Phi}
&m_2[\Phi, \rho_2](r,\varphi,r',\varphi') = \sum_{n=-L}^{L} e^{\ii n(\varphi-\varphi')} \sum_{\ell = 0}^L \sum_{\ell'=0 }^L   A_{\ell}(r) \mathcal{B}^n_{\ell,\ell'} (A_{\ell'}(r'))^{\mathsf{H}}.
\end{align}
Combining with \eqref{eq:kam-O},
\begin{align}\label{eq:m2Phi'}
&m_2[\tilde{\Phi}, \tilde{\rho}_2](r,\varphi,r',\varphi') = \sum_{n=-L}^{L} e^{\ii n(\varphi-\varphi')} \sum_{\ell=0}^L  \sum_{\ell'=0 }^L   A_{\ell}(r) Q_{\ell}^{\mathsf{H}} O_{\ell}^{\T} Q_{\ell} \tilde{\mathcal{B}}^n_{\ell,\ell'} Q_{\ell'}^{\mathsf{H}} O_{\ell'} Q_{\ell'} (A_{\ell'}(r'))^{\mathsf{H}}.
\end{align}
Equating \eqref{eq:m2Phi} and \eqref{eq:m2Phi'} and using orthonormality of the Fourier modes deduce
\begin{align} \label{eq:equate1}
\sum_{\ell = 0}^L \sum_{\ell'=0 }^L   A_{\ell}(r) \mathcal{B}^n_{\ell,\ell'} (A_{\ell'}(r'))^{\mathsf{H}} = \sum_{\ell=0}^L  \sum_{\ell'=0 }^L   A_{\ell}(r) Q_{\ell}^{\mathsf{H}} O_{\ell}^{\T} Q_{\ell} \tilde{\mathcal{B}}^n_{\ell,\ell'} Q_{\ell'}^{\mathsf{H}} O_{\ell'} Q_{\ell'} (A_{\ell'}(r'))^{\mathsf{H}}
\end{align}
for each $n$ satisfying $-L \leq n \leq L$.  By the assumed linear independence of the radial functions $A_{\ell}^m$ (i.e., the second item of Assumption~\ref{assumption:distributions}), this implies 
\begin{align}\label{eq:key}
\mathcal{B}^n_{\ell,\ell'} = Q_{\ell}^{\mathsf{H}} O_{\ell}^{\T} Q_{\ell} \tilde{\mathcal{B}}^n_{\ell,\ell'} Q_{\ell'}^{\mathsf{H}} O_{\ell'} Q_{\ell'}
\end{align}
for each $(n, \ell, \ell')$ with $-L \leq n \leq L$ and $0 \leq \ell, \ell' \leq L$.  Here $\tilde{\mathcal{B}}^n_{\ell, \ell'}$ depends on $\tilde{B}_p$ as ${\mathcal{B}}^n_{\ell, \ell'}$ does on $B_p$.

In the rest of the proof, our strategy is to leverage \eqref{eq:key} for $(n, \ell, \ell')$ in an appropriate order to establish \eqref{eq:want-Ol} and \eqref{eq:want-Btilde} by induction on $L$.  Precisely, we induct on the following:
\begin{align}
&\textit{Take Assumption~\ref{assumption:distributions}.  Then \eqref{eq:key} for}  -L \leq n \leq L \textup{ and } 0 \leq \ell, \ell' \leq L, \textit{ together with} \nonumber \\ &\textit{\eqref{eq:O0} and \eqref{eq:O1}, imply  } O_{\ell} = I \textit{ for } 0 \leq \ell \leq L \textit{ and } \tilde{B}_{p} = B_p \textit{ for } p = 0, 2, \ldots, 2L. \label{eq:induction}
\end{align}
The induction amounts to showing that a polynomial system in  $O_{\ell}$ and $\tilde{B}_{p}$ has a unique solution.

The base case of the induction is $L=3$.  
By \eqref{eq:constraints_on_B}, \eqref{eq:O0} and \eqref{eq:O1}, we know $B_0 = \tilde{B}_0$, $O_0 = 1$ and $O_1 = I$.  We wish to show $\tilde{B}_2 = B_2$, $\tilde{B}_{4} = B_4$, $\tilde{B}_{6} = B_6$, $O_2 = I$ and $O_3 = I$.  
We will use equations \eqref{eq:key} suitably rearranged, together with orthogonality constraints:
\begin{equation} \label{eq:big-polynomial}
\begin{cases}
& Q_{\ell}^{\mathsf{H}} O_{\ell} Q_{\ell} \mathcal{B}^n_{\ell,\ell'} Q_{\ell'}^{\mathsf{H}} O_{\ell'}^{\T} Q_{\ell'} =  \tilde{\mathcal{B}}^n_{\ell,\ell'}  \\
& O_2 O_2^{\T} = O_2^{\T} O_2 = I \\ 
& O_3 O_3^{\T} = O_3^{\T} O_3 = I,
\end{cases}
\end{equation}
where in the first line $(n, \ell, \ell') \in \{(1,1,1), (0,2,0), (0,2,2), (1,3,1), (1,3,3), (3,3,3)\}$.
Dropping reality constraints on $O_{\ell}$ and $\tilde{B}_p$, we view \eqref{eq:big-polynomial} as a parameterized polynomial system over $\mathbb{C}$: the variables are $(\tilde{B}_2, \tilde{B}_4, \tilde{B}_6, O_2, O_3) \in \mathbb{C}^{101}$, the parameters are $(B_2, B_4, B_6) \in \mathbb{C}^{27}$, and there are $(3 \times 3) + (5 \times 1) + (5 \times 5) + (5 \times 5) + (7 \times 3) + (7 \times 7) + (7 \times 7) + (5 \times 5) + (5 \times 5) + (7 \times 7) + (7 \times 7) = 331$ equations.  Note the equations are affine-quadratic or linear in the variables and affine-linear in the parameters.
By general properties of parameterized polynomial systems over $\mathbb{C}$, there exists a nonempty Zariski-open subset $\mathcal{U} \subseteq \mathbb{C}^{27}$ such that for $(B_2, B_4, B_6) \in \mathcal{U}$ the solution set to \eqref{eq:big-polynomial} has the same ``type" of irreducible decomposition in the sense of \cite[Theorem~A.14.10]{sommese2005numerical}. 
Therefore, if we show on a randomly-generated instance of $(B_2, B_4, B_6)$ that the polynomial system \eqref{eq:big-polynomial} has a unique solution over $\mathbb{C}$, then with probability 1 the system Zariski-generically has a unique solution over $\mathbb{C}$, which then must be the trivial solution $\tilde{B}_p=B_p$ for $p=2,4,6$ and $O_{\ell} = I$ for $\ell = 2, 3$.  Checking polynomial systems on random instances is a standard approach in computational algebra; that said, Remark~\ref{rem:caveat} applies.  

Here we perform the check using the numerical homotopy continuation and computer algebra software \cite{breiding2018homotopycontinuation, M2}.
We generate $(B_2, B_4, B_6)$ using a random number generator.  
The system \eqref{eq:big-polynomial} is too big to directly input into the software, so we break up the computation.  
First, using the top line of \eqref{eq:big-polynomial} when $(n, \ell, \ell') = (1,1,1)$ it follows by a linear solve or by Lemma~\ref{lem:technical1} that $\tilde{B}_{2} = {B}_2$.  Secondly,  we take $(n, \ell, \ell') = (0,2,0)$, which gives the equation $Q_{2}^{\mathsf{H}} O_{2} Q_{2} \mathcal{B}^{0}_{2,0} = \mathcal{B}^{0}_{2,0}$ or 
\begin{equation} \label{eq:reduceto4}
O_{2} Q_{2} \mathcal{B}^{0}_{2,0} = Q_2 \mathcal{B}^{0}_{2,0},
\end{equation}
i.e., $O_{2}$ fixes a known vector.  We find that the vector is non-isotropic, i.e., $(Q_2 \mathcal{B}^{0}_{2,0})^{\T} (Q_2 \mathcal{B}^{0}_{2,0}) \neq 0$, so we can extend the vector suitably scaled to a complex orthogonal matrix, i.e., find $\tilde{O}_2^{\T} \in \O(5, \mathbb{C})$ and $\lambda \in \mathbb{C}$ such that the leftmost column of $\tilde{O}_2^{\T}$ is $\lambda Q_2 \mathcal{B}^{0}_{2,0}$.  Then $\tilde{O}_2^{\T}e_1=\lambda Q_2 \mathcal{B}^{0}_{2,0}$ or $\lambda \tilde{O}_2 Q_2 \mathcal{B}^{0}_{2,0} = e_1$, where $e_1$ is the first standard basis, and \eqref{eq:reduceto4} can be rewritten as $(\tilde{O}_2 O_{2} \tilde{O}_2^{\T}) \lambda \tilde{O}_2 Q_{2} \mathcal{B}^{0}_{2,0} = \lambda \tilde{O}_2 Q_2 \mathcal{B}^{0}_{2,0}$ or 
\begin{equation*}
(\tilde{O}_2 O_{2} \tilde{O}_2^{\T}) e_1 = e_1.
\end{equation*}
Using $(\tilde{O}_2 O_{2} \tilde{O}_2^{\T})^{\T}(\tilde{O}_2 O_{2} \tilde{O}_2^{\T}) = I$, it follows that $\tilde{O}_2 O_{2} \tilde{O}_2^{\T} = 1 \oplus \tilde{\tilde{o}}_2$ for some $\tilde{\tilde{o}}_2 \in \O(4,\mathbb{C})$, i.e., $\tilde{O}_2 O_{2} \tilde{O}_2^{\T}$ is block-diagonal.  Next, consider $(n, \ell, \ell') = (0,2,2), (2,2,2)$ in \eqref{eq:big-polynomial}:
\begin{equation} \label{eq:lineelim}
Q_{2}^{\mathsf{H}} (1 \oplus \tilde{\tilde{o}}_2) Q_{2} \mathcal{B}^n_{2,2} Q_{2}^{\mathsf{H}} (1 \oplus \tilde{\tilde{o}}_2)^{\T} Q_{2} =  \tilde{\mathcal{B}}^n_{2,2}, 
\end{equation}
for $n=0,2$.  The right-hand side depends affine-linearly on $\tilde{B}_4$ and on no other unknowns.  Therefore, we can linearly eliminate $\tilde{B}_4$ from \eqref{eq:lineelim}.  We find $5 \times 5 - 9 = 16$ affine-quadratic equations in $\tilde{\tilde{o}}_2$ for $n=0,2$.  We solve the resulting polynomial system of these 36 equations together with $\tilde{\tilde{o}}_2 \tilde{\tilde{o}}_2^{\T} = \tilde{\tilde{o}}_2^{\T} \tilde{\tilde{o}}_2 = I$
in the variables $\tilde{\tilde{o}}_2 \in \mathbb{C}^{16}$ using the software \cite{breiding2018homotopycontinuation}.  Four isolated multiplicity-1 solutions are computed: $\tilde{\tilde{o}}_2^{(1)}, \tilde{\tilde{o}}_2^{(2)}, \tilde{\tilde{o}}_2^{(3)}, \tilde{\tilde{o}}_2^{(4)} \in \mathbb{C}^{16}$.  We  return to \eqref{eq:lineelim}, substitute in these solutions for $\tilde{\tilde{o}}_2$ and linearly solve for $\tilde{B}_4 \in \mathbb{C}^9$.  Corresponding to each $\tilde{\tilde{o}}_2^{(i)}$ we find a unique solution $\tilde{B}_{4}^{(i)} \in \mathbb{C}^{9}$.  Next up,  use \eqref{eq:big-polynomial} with $(n,\ell,\ell') = (1,3,1)$:
\begin{equation}\label{eq:OKAY}
O_{3} Q_{3} \mathcal{B}^1_{3,1}  = Q_3 \tilde{\mathcal{B}}^1_{3,1}. 
\end{equation}
Left-multiplying each side by its transpose yields
\begin{equation}\label{eq:closing-in}
(\mathcal{B}^1_{3,1})^{\T} Q_3^{\T} Q_3 \mathcal{B}^1_{3,1} =  (\tilde{\mathcal{B}}^1_{3,1})^{\T} Q_3^{\T} Q_3  \tilde{\mathcal{B}}^1_{3,1}. 
\end{equation}
To evaluate $\tilde{\mathcal{B}}^{1}_{3,1}$ we plug in the possible values for $\tilde{B}_{4}$, namely  $\tilde{B}_{4}^{(1)}, \tilde{B}_{4}^{(2)}, \tilde{B}_{4}^{(3)}, \tilde{B}_{4}^{(4)}$, and find that only one of these satisfies \eqref{eq:closing-in}.  It is the value that equals $B_4$, and corresponds to $\tilde{\tilde{o}}_2 = I$ or $\tilde{O}_2 O_2 \tilde{O}_2^{\T} = I$.
We conclude $\tilde{B}_4 = B_4$ and $O_2 = I$.  Next, we reuse \eqref{eq:OKAY}: since $\tilde{\mathcal{B}}^1_{3, 1} = \mathcal{B}^1_{3,1}$, it says that $O_3$ fixes a known $7 \times 3$ matrix. 
Similarly to how we utilized \eqref{eq:reduceto4}, here we find $\tilde{O}_3 \in \O(7, \mathbb{C})$ such that the first three columns of $\tilde{O}_3^{\T}$ span the column space of $Q_1 \tilde{\mathcal{B}}^1_{3,1}$ and $\tilde{O}_3 O_3 \tilde{O}_3^{\T} = I_3 \oplus \tilde{\tilde{o}}_3$ for some $\tilde{\tilde{o}}_3 \in \O(4, \mathbb{C})$ where $I_3$ denotes the $3 \times 3$ identity matrix.  Then, consider \eqref{eq:big-polynomial} with $(n, \ell, \ell') = (1,3,3), (3,3,3)$:
\begin{equation} \label{eq:lineelim2}
Q_{3}^{\mathsf{H}} (I_3 \oplus \tilde{\tilde{o}}_3) Q_{3} \mathcal{B}^n_{3,3} Q_{3}^{\mathsf{H}} (I_3 \oplus \tilde{\tilde{o}}_3)^{\T} Q_{3} =  \tilde{\mathcal{B}}^n_{3,3}, 
\end{equation}
for $n=1,3$.  Similarly to how we dealt with \eqref{eq:lineelim}, we linearly eliminate $\tilde{B}_6$ from \eqref{eq:lineelim2}. 
We find $7 \times 7 - 13 = 36$ affine-quadratic equations in $\tilde{\tilde{o}}_3$ for $n=1,3$.  The polynomial system of these $72$ equations with $\tilde{\tilde{o}}_3 \tilde{\tilde{o}}_3^{\T} = \tilde{\tilde{o}}_3^{\T} \tilde{\tilde{o}}_3 = I$ in variables $\tilde{\tilde{o}}_3 \in \mathbb{C}^{16}$ is solved using \cite{breiding2018homotopycontinuation}.  The software computes a unique solution, which is multiplicity-1: $\tilde{\tilde{o}}_3 = I$.  
Hence   $\tilde{O}_3 O_3 \tilde{O}_3^{\T} = I$, or $O_3 = I$.  We return to \eqref{eq:lineelim2} which now reads $\mathcal{B}^{n}_{3,3} = \tilde{\mathcal{B}}^n_{3,3}$ for $n=1,3$.  Linearly solving for $\tilde{B}_6$ or using Lemma~\ref{lem:technical1}, deduce $\tilde{B}_6 = B_6$.  This wraps up the base case of \eqref{eq:induction}. 

Let us now turn to the induction step.  
Thus assume $L \geq 4$, and that \eqref{eq:induction} has been shown for $L-1$.  
We wish to show \eqref{eq:induction} for $L$.  
By the inductive hypothesis, we know $O_{\ell}=I$ for $0 \leq \ell \leq L-1$ and $B_p = \tilde{B}_p$ for $p = 0, 2, \ldots, 2(L-1)$.   We want  $O_L = I$ and $\tilde{B}_{2L} = B_{2L}$.
We will rely on two lemmas concerning the dependence of $\mathcal{B}^n_{\ell, \ell'}$ on $B_{p}$, Lemmas~\ref{lem:technical1} and \ref{lem:technical2}.

Consider \eqref{eq:key} for  $(n, L, \ell')$  satisfying $0 \leq \ell' < L, n \equiv L \equiv \ell' \,\, (\textup{mod }2)$.  For these triples $\mathcal{B}^{n}_{L,\ell'} = \tilde{\mathcal{B}}^{n}_{L,\ell'}$, because $(\mathcal{B}^{n}_{L,\ell'})$ depend only on $(B_p)$ with $p \leq 2L-2$, $(\tilde{\mathcal{B}}^{n}_{L,\ell'})$ depend only on $(\tilde{B}_p)$ with $p \leq 2L-2$ in the same way, and  $B_{p} = \tilde{B}_p$ for $p \leq 2L-2$.  Therefore $\eqref{eq:key}$ reads
\begin{equation*}
\mathcal{B}^{n}_{L,\ell'} = Q_{L}^{\mathsf{H}} O_{L}^{\T} Q_{L} \mathcal{B}^n_{L,\ell'}, 
\end{equation*}
where we used $O_{\ell'}=I$.  So left multiplication by $Q_{L}^{\mathsf{H}} O_L^{\T} Q_{L}$ fixes the concatenation of matrices:
\begin{equation*} \label{eq:my-concat}
\left( \mathcal{B}^{n}_{L, \ell'} : 0 \leq \ell' < L, n \equiv L \equiv \ell' \,\, (\textup{mod }2) \right)\!.
\end{equation*}
By Lemma \ref{lem:technical2}, the concatenation has full column rank.  Thus $Q_{L}^{\mathsf{H}} O_L^{\T} Q_{L} = I$, or $O_{L} = I$. 

Next, consider \eqref{eq:key} for $(n,L,L)$ where $0 \leq n \leq L$ and $n \equiv L \,\, (\textup{mod 2})$.  These read:
\begin{equation*}\label{eq:B-Btilde}
\mathcal{B}^{n}_{L,L} = \tilde{\mathcal{B}}^n_{L,L}, 
\end{equation*}
using $O_{L} = I$.  
As $B_p = \tilde{B}_p$ for $p \leq 2L-2$,  Lemma~\ref{lem:technical1} gives $\tilde{B}_{2L} = B_{2L}$.  This finishes the induction step for \eqref{eq:induction}. 

Putting everything together, we conclude that under Assumption~\ref{assumption:distributions}, the moments $m_1[\Phi, \rho_1]$,  $m_2[\Phi,\rho_1]$, $m_2[\Phi,\rho_2]$ uniquely identify 
$\Phi$ and $\rho_2^{\downarrow 2L}$, 
up to rotation and chirality.
\end{proof}

\begin{remark}
We stress that although the details of Theorem~\ref{thm:main} are tailored to the image formation model of cryo-EM, we envision that similar results will hold for related imaging techniques like XFEL. Similarly, the Algorithm~\ref{alg:modm} is likely adaptable to other settings.
\end{remark}

\section{Stability Analysis}\label{sec:stability}

The algorithm and theory above assume that the first dataset has uniformly distributed viewing directions. In practice, this distribution may only be approximately uniform. This section quantifies the population-level effect of such a model mismatch on the first step of Algorithm~\ref{alg:modm}, namely the Kam's method in \eqref{eq:def_Kam}--\eqref{eq:Aring_relation}. The goal here is not to provide an end-to-end perturbation analysis of the full alternating procedure, but rather to isolate how a deviation from uniformity changes the matrices whose Cholesky factors are used to initialize the recovery of the orthogonal matrices. The proofs are given in Appendix~\ref{sec:appendixC}.

Recall that, under the uniform distribution, the Legendre projection of the second-order moment produces the matrix-valued kernel
\(C_\ell(r,r')\), which equals \(A_\ell A_\ell^{\mathsf{H}}\) after discretization in the radial variable. For a general viewing distribution \(\rho\), we define the analogous quantity by applying the same Legendre projection:
\begin{align}\label{eq:def_Kam_general}
C_\ell[\Phi,\rho](r,r') \!&:=\! 2\pi(2\ell+1)\!\cdot\!\int_0^{\pi} m_2[\Phi, \rho](r, r', \psi) P_\ell(\cos \psi) \sin (\psi )\der \psi,\qquad \ell=0,\ldots,L.
\end{align}
For the uniform distribution $\rho_1^*$, this recovers the Kam matrix
\begin{align*}
C_{\ell}(r,r') =C_\ell[\Phi,\rho_1^*](r,r').
\end{align*}
Given radial sampling points \(\{r_1,\ldots,r_{M_r}\}\), we write \(C_\ell[\Phi,\rho]\in\mathbb{C}^{M_r\times M_r}\) for the sampled matrix with entries
\((C_\ell[\Phi,\rho])_{i,j}=C_\ell[\Phi,\rho](r_i,r_j)\). We measure the bias induced by non-uniformity through
\[
\Delta_\ell(\rho):=C_\ell[\Phi,\rho]-C_\ell[\Phi,\rho_1^\ast].
\]
The following proposition makes explicit two important facts: the bias is linear in the non-uniform coefficients of \(\rho\), and only the coefficients up to degree \(2L\) can affect the second-order moment used in the algorithm. 

To state the result, we introduce a response matrix for each distribution coefficient. Following the notation in Section~2.3, for \(0\leq a,b\leq L\), \(0\leq p\leq \min\{P,2L\}\), \(-p\leq u\leq p\), and \(-L\leq n\leq L\), define \(E^n_{a,b;\, p,u}\in\mathbb{C}^{(2a+1)\times(2b+1)}\) by
\begin{align}\label{eq:def_E}
(E_{a,b;\, p,u}^{n})_{m,m'}
:=
\1_{\{u=m'-m\}}
\1_{\{|a-b|\leq p\leq a+b\}}
\cdot
(-1)^{m+n}
\mathcal{N}^n_{a}
\mathcal{N}^n_{b}
\frac{
\mathcal{C}_{p}
(a,b,m,-m',n,-n)
}{2p+1}.
\end{align}
For $\ell=0,\ldots,L$ and $-L\leq n\leq L$, set
\begin{align}\label{eq:def_alpha}
\alpha_{\ell}^n:=2\pi(2\ell+1)\!\cdot\! \int_0^{\pi}e^{\ii n\psi}P_\ell(\cos \psi) \sin (\psi )d\psi.
\end{align}

\begin{proposition}\label{prop:Kam_mat_stability}
Assume $\Phi$ is bandlimted with respect to its angular variables, i.e., $\Phi$ can be expressed as in \eqref{eq:expand_phi_hat_sph_bessel} for some $L \geq 3$. Let $\rho$ satisfy Assumption~\ref{assumption:distributions}(4); in particular, assume that $\rho$ is bandlimited as in \eqref{eq:def_rho_in_plane_uniform} with cutoff $P\geq 2L$. Then for each $0\leq \ell\leq L$,
\begin{align*}
C_\ell[\Phi,\rho]=\sum_{p=0}^{2L}\sum_{u=-p}^p B_{p,u} H_{\ell;p,u}[\Phi],
\end{align*}
where \(H_{\ell;\, p,u}[\Phi]\in\C^{M_r\times M_r}\) is the response matrix of the $\ell$-th Kam block to the distribution coefficient $B_{p,u}$, whose \((i,j)\)-th entry is
\begin{align}\label{eq:def_Kam_response}
(H_{\ell;\, p,u}[\Phi])_{i,j}\equiv H_{\ell;\, p,u}[\Phi](r_i,r_j):= \sum_{n=-L}^{L} \alpha_{\ell}^n \sum_{a=0}^L  \sum_{b=0}^L   A_{a}(r_i) E_{a,b;p,u}^{n} (A_{b}(r_j))^{\mathsf{H}}.
\end{align}
Consequently, with
\begin{align*}
\beta_{2L}(\rho):=& \|\rho-\rho_1^*\|_{L_2(\SO(3))}=\Big(\sum_{p=1}^{2L}\sum_{u=-p}^p \frac{|B_{p,u}|^2}{2p+1} \Big)^{1/2},\\
\Gamma_{\ell}(\Phi):=& \Big(\sum_{p=1}^{2L}\sum_{u=-p}^p (2p+1)\cdot \|H_{\ell;\,p,u}(\Phi)\|_F^2 \Big)^{1/2},
\end{align*}
we have
\begin{align}\label{eq:perturbation_Kam}
\|\Delta_\ell(\rho)\|_F\leq \Gamma_{\ell}(\Phi)\beta_{2L}(\rho).
\end{align}
\end{proposition}
Proposition~\ref{prop:Kam_mat_stability} shows that the population-level bias in the Kam matrix is controlled linearly by the deviation of the viewing distribution from uniformity. The constant \(\Gamma_\ell(\Phi)\) depends on the structure and the radial grid, while \(\beta_{2L}(\rho)\) measures precisely the part of the distribution that is visible to the second-order moment at bandlimit \(2L\). 

We next translate this perturbation of the Kam matrix into a perturbation of its factor. For the uniform distribution,
\[
C_\ell[\Phi,\rho_1^\ast]=A_\ell A_\ell^{\mathsf{H}}.
\]
For non-uniform \(\rho\), however, \(C_\ell[\Phi,\rho]\) need not be positive semidefinite; see Remark~\ref{rmk:counterexample} below. We therefore first project \(C_\ell[\Phi,\rho]\) onto the cone of positive semidefinite matrices of rank at most \(2\ell+1\), and then take a Cholesky factorization.  Since right unitary transformations give equivalent
factorizations, we therefore measure the discrepancy between two factors $A_\ell,A_\ell'\in\C^{M_r\times (2\ell+1)}$ modulo this intrinsic ambiguity by defining
\begin{align}\label{eq:dist_unitary}
\operatorname{dist}_F(A_\ell',A_\ell)
:=
\min_{U\in\mathsf{U}(2\ell+1)}
\|A_\ell'-A_\ell U\|_F.
\end{align}
For a Hermitian matrix \(M\in\mathbb{C}^{M_r\times M_r}\) with eigendecomposition
\(M=V\diag(\lambda_1,\ldots,\lambda_{M_r})V^{\mathsf{H}}\), \(\lambda_1\geq\cdots\geq\lambda_{M_r}\), let
\[
\mathcal{P}^+_{2\ell+1}(M):=V\diag\bigl((\lambda_1)_+,\ldots,(\lambda_{2\ell+1})_+,0,\ldots,0\bigr)V^{\mathsf{H}},
\qquad (t)_+:=\max\{t,0\},
\]
denote the best positive semidefinite approximation of rank at most \(2\ell+1\) in Frobenius norm.
\begin{theorem}\label{thm:stability}
Fix $0 \le \ell \le L$ and assume that
$A_\ell \in \mathbb{C}^{M_r \times (2\ell+1)}$ has full column rank with
\[
s_\ell := \sigma_{\min}(A_\ell)>0.
\]
Let $A_\ell(\rho)\in \mathbb{C}^{M_r\times (2\ell+1)}$ be any factor satisfying
\[
A_\ell(\rho) (A_\ell(\rho))^{\mathsf H}
=
\mathcal P^+_{2\ell+1}\bigl(C_\ell[\Phi,\rho]\bigr),
\]
with zero columns appended if the rank is smaller than $2\ell+1$, where
$\mathcal P^+_{2\ell+1}\bigl(C_\ell[\Phi,\rho]\bigr)$ denotes the best
positive semidefinite approximation to
$C_\ell[\Phi,\rho]$ of rank at most $2\ell+1$ in Frobenius norm. Then
\[
\operatorname{dist}_F(A_\ell(\rho),A_\ell)
\le \frac{2c_0}{s_\ell}\|\Delta_\ell(\rho)\|_F,
\qquad
c_0 := \frac{1}{\sqrt{2(\sqrt2-1)}}.
\]
Under the conditions of Proposition~\ref{prop:Kam_mat_stability}, we further have
\[
\operatorname{dist}_F(A_\ell(\rho),A_\ell)
\le
\frac{2c_0}{s_\ell}\,
\Gamma_\ell(\Phi)\,
\beta_{2L}(\rho).
\]
\end{theorem}

Theorem~\ref{thm:stability} gives a population-level stability estimate for the Cholesky/Kam factors used in the first step of the algorithm. The bound has two natural components. First, the error scales linearly with the degree-\(2L\) deviation of the viewing distribution from uniformity. Second, it is inversely related to \(s_\ell=\sigma_{\min}(A_\ell)\), so poorly conditioned Kam blocks amplify the effect of any distributional mismatch. This provides a quantitative interpretation of the requirement that the primary distribution be close to uniform: it should be close enough that the induced bias is small relative to the conditioning of the corresponding Kam factor.

\begin{remark}\label{rmk:counterexample}
The positive semidefinite projection in Theorem~\ref{thm:stability} is not merely a technical convenience. For a non-uniform viewing distribution, the matrices \(C_\ell[\Phi,\rho]\) need not be positive semidefinite. To see this, consider $\ell=0$ and a single radial sampling point $r_\star$. Let
\[
\rho_\varepsilon(R)=1+\varepsilon U^2_{00}(R),
\]
so that $B_{0,0}=1$, $B_{2,0}=\varepsilon$, and all other $B_{p,u}$ vanish.
For sufficiently small $\varepsilon>0$, this is a nonnegative in-plane uniform and
chirality-invariant density. Choose
\[
A_0^0(r_\star)=\delta_1\neq 0,\qquad
A_2^2(r_\star)=A_2^{-2}(r_\star)=\frac{\delta_2}{\sqrt2}\neq 0,
\]
and set all other coefficients equal to zero at $r_\star$. Then the uniform part gives
\[
C_0[\Phi,\rho_1^\ast](r_\star,r_\star)=\delta_1^2.
\]
On the other hand, direct substitution into the definition of $E_{a,b;\, p,u}^n$ gives
\begin{align*}
\sum_{n=-2}^n \alpha_0^n (-1)^{m+n} (\mathcal{N}^n_{2})^2\frac{
\mathcal{C}_{p}
(2,2,m,-m,n,-n)
}{5}=-\frac{1}{7},\quad\text{for } m=2,-2.
\end{align*}
Therefore
\[
H_{0;2,0}[\Phi](r_\star,r_\star)
=
-\frac{\delta_2^2}{7},
\]
and hence
\[
C_0[\Phi,\rho_\varepsilon](r_\star,r_\star)
=
\delta_1^2-\varepsilon\frac{\delta_2^2}{7}.
\]
Choosing $\delta_2>\sqrt{7\delta_1^2/\varepsilon}$ makes this quantity negative. 

Although the above choice of $\rho^\star$ is not Zariski-generic, this is immaterial.
Indeed, after choosing $\delta_2$ so that 
$C_0[\Phi,\rho_\epsilon](r_\star,r_\star)<-2\gamma$, an arbitrarily small perturbation of the coefficients $B_{p,u}$, within the real-valuedness, normalization, and chirality constraints and with cutoff $P=2L$, can be chosen Zariski-generic; since $C_0[\Phi,\rho](r_\star,r_\star)$ depends continuously on $B$, the perturbed density remains nonnegative and still satisfies $C_0[\Phi,\rho](r_\star,r_\star)<-\gamma<0$.
\end{remark}

To complement the theoretical stability analysis, we performed an additional numerical experiment to investigate the effect of non-uniform perturbations of the orientation distribution in the first dataset. Specifically, instead of sampling the primary poses from the uniform distribution, we used \(10^5\) poses sampled from the empirical pose distribution associated with the EMPIAR-10028 dataset used in \cite{zhong2021cryodrgn}, with pose files obtained from the accompanying public repository \cite{cryodrgn_empiar10028}. We then reran the experiments in the paper without modifying the reconstruction algorithm. A representative result is shown in Figure~\ref{fig:perturbed}. Overall, the reconstruction quality remains reasonably stable, suggesting that the proposed method is fairly robust to moderate deviations from uniformity in the orientation distribution of the primary dataset.

\begin{figure}[!ht]
    \centering
    \includegraphics[width=0.9\linewidth,
    trim={0cm 5cm 0cm 5cm},
    clip]{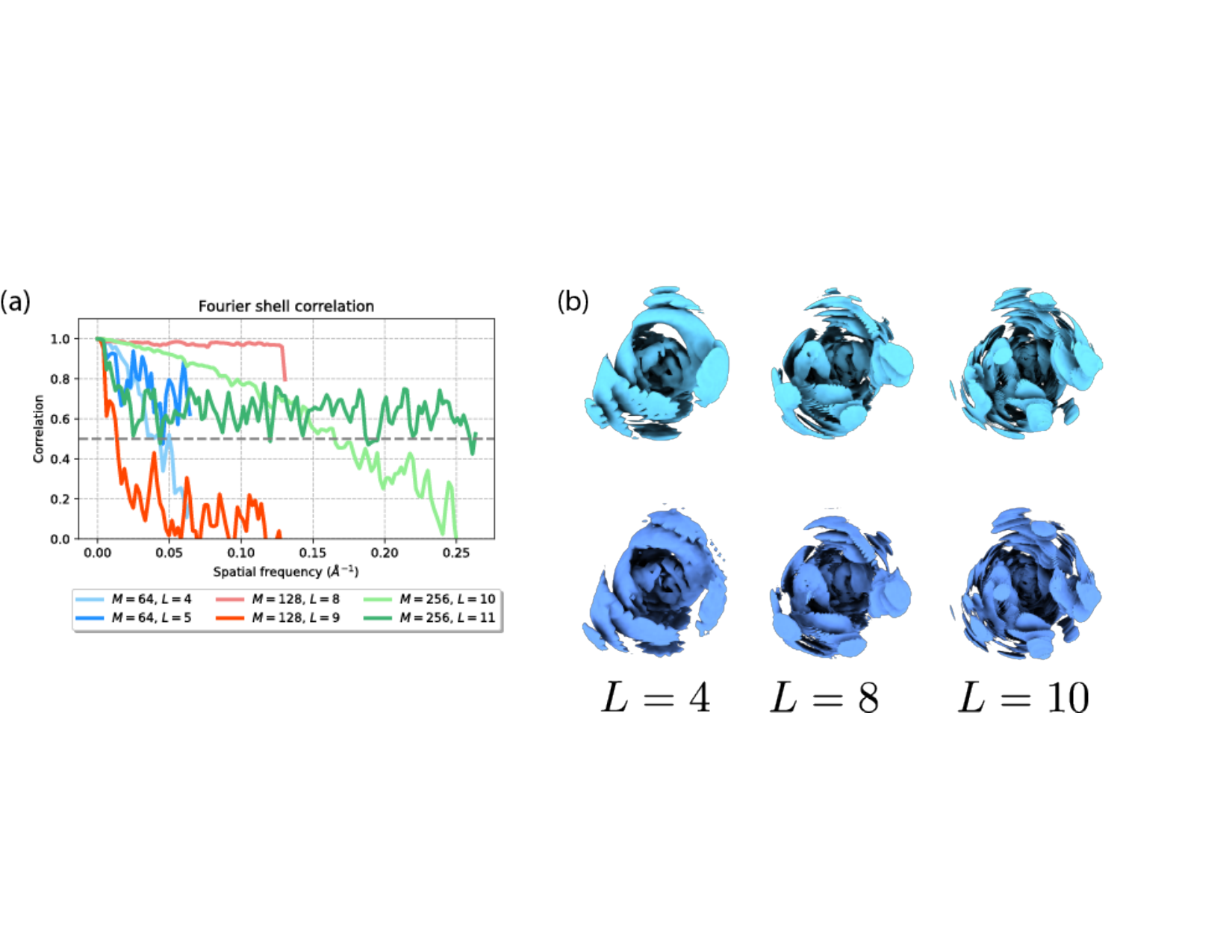}
    \caption{Result of the proposed algorithm when samples from the uniform distribution are replaced by \(10^5\) poses drawn from the empirical pose distribution associated with EMPIAR-10028. (a) Fourier-shell correlation between the reconstruction and the ground-truth structure. (b) Visualization of the reconstruction (top) and the ground-truth structure (bottom).}
    \label{fig:perturbed}
\end{figure}

\section{Conclusion}\label{sec:conclusion}

In this paper, we introduced a new moment-based algorithm for 3-D reconstruction in cryo-electron microscopy that fuses information from multiple datasets. 
By jointly using one dataset with uniformly distributed viewing angles and another with non-uniform orientations, we demonstrated that the 3-D reconstruction problem becomes more tractable when leveraging complementary orientation distributions. In particular, we established stronger uniqueness guarantees for recovery from first- and second-order moments than previously available. We further developed an efficient numerical algorithm based on alternating optimization, where each subproblem admits a closed-form update via convex relaxation. Numerical experiments confirm that our method achieves accurate molecular reconstructions up to prescribed bandlimits. Overall, this work highlights the potential of data fusion in cryo-EM, showing that combining datasets collected under different conditions can fundamentally improve identifiability and reconstruction accuracy.

Looking ahead, several promising research directions arise from this work. One important avenue is to develop statistical tests to assess whether experimentally collected datasets exhibit uniform distributions of viewing angles. Another is to extend our framework to handle higher bandlimits $L$ through suitable modifications of Algorithm~\ref{alg:modm}. Incorporating image shifts and optical aberration effects directly into the model, rather than relying on preprocessing, is another natural next step. In particular, random translational errors induce a frequency-dependent damping of the moments whose correction may become ill-conditioned, leading to additional computational challenges. Lastly but perhaps most importantly, the broader data fusion paradigm introduced here may offer substantial benefits within cryo-EM itself, particularly for maximum likelihood and Bayesian inference frameworks, and may also extend to other reconstruction algorithms and imaging modalities, offering a unified perspective on how complementary datasets can enhance structural inference.

\vspace{1em}

\paragraph{Acknowledgments.}  A.S. and S.X. (and O.M. for part of the work) were supported in part by AFOSR FA9550-23-1-0249,  NSF DMS 2510039, NIH/NIGMS R01GM13678001, and the Simons Foundation Math+X Investigator Award.  
J.K. was supported in part by NSF DMS 2309782, NSF DMS 2436499, NSF CISE-IIS 2312746, DE SC0025312, and the Sloan Foundation.  

\vspace{0.5em}

\paragraph{Data Availability.} 
The code implementing the proposed algorithm, the method of double moments, is available at 
\href{https://github.com/oscarmickelin/modm}{https://github.com/oscarmickelin/modm}. 

\vspace{0.5em}

\bibliographystyle{plain}

\bibliography{references}

\appendix
\section{Proofs for Section \ref{sec:setup}}
This appendix derives the basis expansion of the rotation distribution in Section~\ref{sec:basis_for_rotation_distribution} and the analytic moment expressions in Section~\ref{sec:moment_expressions}.

\subsection{Derivation of basis expansion of rotation distribution}\label{app:rotational_distribution}
We assume that the rotational distribution has density $\rho(R)$ with respect to the uniform distribution $\mathrm{d} R$. By Peter-Weyl theorem \cite[Thm. 8.13]{chirikjian2016harmonic}, any square integrable function $\rho(R)$ can be expanded as
\begin{align*}
\rho(R)=\sum_{p=0}^P \sum_{u,v=-p}^p B_{p,u,v} U_{uv}^p (R),
\end{align*}
as $P \rightarrow \infty$, where $R\in \mathsf{SO}(3)$ and $U_{uv}^p (R)$ is the $(u,v)$-th entry of the Wigner U-matrix $U^p(R)\in\C^{(2p+1)\times (2p+1)}$ \cite[Eq. 9.41]{chirikjian2016harmonic}. 

We next show how distributions invariant to in-plane rotations can be expressed as in \eqref{eq:def_rho_in_plane_uniform}. According to the definition \eqref{eq:def_in_plane_uniform}, we have
\begin{align*}
\sum_{p=0}^P \sum_{u,v=-p}^p \!\! B_{p,u,v} U_{uv}^p (R) &= \sum_{p=0}^P \sum_{u,v=-p}^p \!\! B_{p,u,v} U_{uv}^p (R z(\alpha)) = \sum_{p=0}^P \sum_{u,v=-p}^p \!\! B_{p,u,v} \Big( U^p(R) U^p(z(\alpha))   \Big)_{uv},
\end{align*}
where we use the homomorphism property of group representations \cite[Page 344]{chirikjian2016harmonic} for Wigner U-matrices in the last inequality and
\begin{align*}
U^p(z(\alpha))=\diag(e^{-\ii p\alpha}, e^{-\ii(p-1)\alpha},\ldots, e^{\ii p\alpha}).
\end{align*}
This gives
\begin{align*}
\sum_{p=0}^P \sum_{u,v=-p}^p B_{p,u,v} U_{uv}^p (R) = \sum_{p=0}^P \sum_{u,v=-p}^p B_{p,u,v} U_{uv}^p (R)e^{\ii v\alpha} ,
\end{align*}
which implies $B_{p,u,v}=0$ for $v\neq 0$. Thus, $\rho(R)$ admits the expansion
\begin{align*}
\rho(R)=\sum_{p=0}^P \sum_{u=-p}^p B_{p,u,0} U_{u0}^p(R).
\end{align*}
For simplicity, we drop the zero subscript and write
\begin{align}\label{eq:rho_expansion}
\rho(R) \mathrm{d}R =\sum_{p=0}^P \sum_{u=-p}^p B_{p,u} U_{u0}^p(R)\mathrm{d}R,
\end{align}
where $\mathrm{d}R$ denotes the uniform distribution and $U_{u0}^p(R)$ can explicitly be written as in \eqref{eq:explicit_U_expression}. 

To ensure that the density $\rho(R)$ is a real valued function, we impose the conjugate symmetry condition on its expansion coefficients. Specifically, we require
\begin{align*}
    \sum_{p=0}^P \sum_{u=-p}^p B_{p,u} U_{u0}^p(R)= \sum_{p=0}^P \sum_{u=-p}^p \overline{B_{p,u} U_{u0}^p(R)}.
\end{align*}
Using the identity $\overline{U_{u0}^p(R)}=(-1)^u U_{-u,0}^p(R)$ \cite[Page 68]{biedenharn1984angular}, we obtain the symmetry condition
\begin{align}\label{eq:B_symm}
    \overline{B_{p,u}}=(-1)^u B_{p,-u}.
\end{align}
Additionally, since $\rho(R)$ is a probability density over $\SO(3)$, it must integrate to one. Using the fact that $U_{00}^0(R)=1$ for any $R\in\mathsf{SO}(3)$, and applying the orthogonality of the matrix elements $\{U_{uv}^p(R):p\geq 0, -p\leq u,v\leq p\}$, we get
\begin{align*}
    B_{0,0}=\int_{\mathsf{SO}(3)}\sum_{p=0}^P \sum_{u=-p}^p B_{p,u} U_{u0}^p(R)\mathrm{d}R=\int_{\mathsf{SO}(3)} \rho(R) \mathrm{d}R =1.
\end{align*}

Finally, for chirality invariance, we deduce from the definition \eqref{eq:def_invariant_chirality} that
\begin{align*}
\sum_{p=0}^P \sum_{u=-p}^p B_{p,u} U_{u0}^p(R) = \sum_{p=0}^P \sum_{u=-p}^p B_{p,u} U_{u0}^p(JRJ).
\end{align*}
Using the identity
\begin{align*}
U_{u0}^p(JRJ)=(-1)^p U_{u0}^p(R),
\end{align*}
we obtain 
\begin{align*}
B_{p,u} = 0, \qquad \text{for all odd } p.
\end{align*}

\subsection{Derivation of moment expressions}\label{app:moment_expansions}
Using the Fourier slice theorem \cite{natterer2001mathematics} (c.f. its Cartesian version in \eqref{eq:Fourier_slice}) and the property of Wigner U-matrices \cite[Eqn. 9.49]{chirikjian2016harmonic}, we can write the Fourier transform of the projection images in \eqref{eq:def_proj_images} in spherical coordinates $(r,\theta,\varphi)$ as
\begin{align}\label{eq:expand_images}
\widehat{I}_R(r,\varphi)&= (R^\T\cdot \widehat\Phi) (r,\tfrac{\pi}{2},\varphi) = \sum_{\ell=0}^L \sum_{m=-\ell}^\ell  A_{\ell}^m (r) \sum_{n=-\ell}^\ell U_{mn}^{\ell} (R) Y_\ell^n(\tfrac{\pi}{2},\varphi)+\widehat{\varepsilon}(r,\varphi)\notag\\
&= \sum_{\ell=0}^L \sum_{m=-\ell}^\ell   \sum_{n=-\ell}^\ell A_{\ell}^m (r) U_{mn}^{\ell} (R)  e^{\ii n\varphi} N_\ell^n+\widehat{\varepsilon}(r,\varphi),
\end{align}
where $\widehat{\varepsilon}$ denotes the Fourier transform of the noise term, and in the last equality we use the explicit expressions for spherical harmonics \cite[Eqn. 4.36]{chirikjian2016harmonic} and define
\begin{align*}
N_\ell^n = \sqrt{\frac{2\ell +1}{4\pi}}\sqrt{\frac{(\ell-n)!}{(\ell+n)!}} P_\ell^n(0),
\end{align*}
where $P_l^n(x)$ denotes the associated Legendre polynomials. Notably, $N_\ell^n$ is always real-valued and, by the symmetry property of $P_l^n(x)$ \cite[Page 53]{chirikjian2016harmonic}, we have
\begin{align}\label{eq:N_symm}
    N_\ell^{-n} &= \sqrt{\frac{2\ell +1}{4\pi}}\sqrt{\frac{(\ell+n)!}{(\ell-n)!}} P_\ell^{-n}(0)\notag = \sqrt{\frac{2\ell +1}{4\pi}}\sqrt{\frac{(\ell+n)!}{(\ell-n)!}} \cdot (-1)^n \cdot \frac{(\ell-n)!}{(\ell+n)!}P_\ell^n(0)\notag\\
    &= (-1)^n N_\ell^n.
\end{align}

\subsubsection{First-order moment}\label{sec:1st_moment_derivation}
We first derive the formula for the first-order population moment. Using \eqref{eq:expand_images} and applying the zero mean property of Gaussian noise (see also \ref{eq:1st_moment_Cart}), we can write the first-order population moment as
\begin{equation}\label{eq:m1_expanded}
\begin{split}
m_1 (r,\varphi)&= \E_{\varepsilon}\Big[\int_{\mathsf{SO}(3)} \widehat{I_R}(r,\varphi) \rho(R)\mathrm{d}R\Big]\\
&=  \sum_{\ell=0}^L \sum_{m=-\ell}^\ell   \sum_{n=-\ell}^\ell A_{\ell}^m(r)   e^{\ii n\varphi} N_\ell^n \sum_{p=0}^P \sum_{u=-p}^p B_{p,u} \int_{\mathsf{SO}(3)}  U_{u0}^p(R)  U_{mn}^{\ell} (R)\mathrm{d}R.
\end{split}
\end{equation}
Note that Wigner U-matrices satisfy the orthogonality property \cite[Eq. 9.32]{chirikjian2016harmonic}
\begin{align*}
\int \overline{U_{mn}^\ell (R)} U_{uv}^p(R)\mathrm{d}R=\frac{1}{2\ell+1}\1_{\ell=p}\1_{u=m}\1_{v=n},
\end{align*}
and the symmetry property \cite[Page 68]{biedenharn1984angular}
\begin{align*}
\overline{U_{mn}^\ell (R)}=(-1)^{m+n} U_{-m,-n}^\ell (R) , 
\end{align*}
we can simplify \eqref{eq:m1_expanded} to
\begin{align}\label{eq:m1_expanded_2}
m_1 (r,\varphi)=\sum_{\ell=0}^{\min\{L,P\}} \sum_{m=-\ell}^\ell  (-1)^m\frac{1}{2\ell+1} N_\ell^0 \cdot A_{\ell }^m(r)  B_{\ell,-m},
\end{align}
where
\begin{align*}
N_\ell^0=\sqrt{\frac{2\ell+1}{4\pi}} P_\ell^0(0) 
\end{align*}
with
\begin{align*}
P_\ell^0(0) = \begin{cases}
    (-1)^{\ell/2}\cdot\frac{(\ell-1)!!}{\ell !!}, & \text{if}\ \ell \ \text{is even}, \\
    0, & \text{if}\ \ell \ \text{is odd}. 
\end{cases}
\end{align*}
The sum in \eqref{eq:m1_expanded_2} ranges over integer indices $(\ell,m)$ such that 
\begin{align*}
    0\leq \ell\leq \min\{L,P\},\quad
    \ell \ \text{is even}, \quad
    -\ell\leq m\leq\ell,
\end{align*}
which gives
\begin{align}\label{eq:1st_moment}
m_1 (r,\varphi)&=\sum_{\substack{\ell = 0,\\ \ell ~\text{even} }}^{ \min\{L,P\}} \sum_{m=-\ell}^\ell  (-1)^m\frac{N_\ell^0  A_{\ell}^m(r)}{2\ell+1}   B_{\ell,-m} =\sum_{\substack{\ell = 0,\\ \ell ~\text{even} }}^{ \min\{L,P\}}  \sum_{m=-\ell}^\ell  \frac{N_\ell^0  A_{\ell}^m(r)}{2\ell+1}   \overline{B_{\ell,m}}.
\end{align}
Thus, we can write the first-order moment in the following matrix form:
\begin{align}\label{eq:1st_moment_matrix}
m_1(r,\varphi)&=\sum_{\ell = 0}^{ \min\{L,P\}} \sum_{m=-\ell}^\ell  \frac{N_\ell^0 \1_{\{\ell \text{ is even}\}} }{2\ell+1}  A_{\ell m}(r)  \overline{B_{\ell,m}}=\sum_{0\leq \ell\leq \min\{L,P\}} A_\ell (r) (\mathfrak{B}_\ell)^\mathsf{H}  ,
\end{align}
where the terms
\begin{align*}
A_\ell(r) \in \C^{1\times (2\ell+1)}~~~{\rm and}~~~\mathfrak{B}_\ell\in\C^{1\times (2\ell+1)}
\end{align*}
for $0\leq \ell\leq \min\{L,P\}$ are defined by
\begin{align}
(A_{\ell}(r))_m&=A_{\ell}^m(r),\label{def:A_lr}\\
\mathcal{N}_\ell^0&= N^0_{\ell}\cdot \1_{\{\ell \equiv 0 ~(\text{mod 2})\}}\cdot \1_{\{\ell\geq 0\}},\label{def:cN_0}\\
(\mathfrak{B}_\ell)_m&= \frac{\mathcal{N}^0_{\ell} B_{\ell,m}}{2\ell+1} ,\label{def:frakB_l}
\end{align}
for $-\ell\leq m \leq \ell$. Here $A_\ell(r)$ and $\mathcal{N}_\ell^0$ are the same as defined in \eqref{eq:def_Al} and \eqref{eq:def_calN}, respectively.

\subsubsection{Second-order moment}\label{sec:2nd_moment_derivation}
We now derive the formula for the second-order population moment and prove Proposition \ref{prop:2nd_moment}. Using \eqref{eq:expand_images} and noting that the noise term $\widehat{\varepsilon}$ is independent from the signal term, we can write the second order population moment from \eqref{eq:2nd_moment_Cart} as
\begin{align*}
&m_2(r,\varphi,r',\varphi')\\
&= \int_{\mathsf{SO}(3)} \Big( \sum_{\ell=0}^L \sum_{m=-\ell}^\ell   \sum_{n=-\ell}^\ell A_{\ell}^m(r) U_{mn}^{\ell} (R)  e^{\ii n\varphi} N_\ell^n\Big)  \Big( \sum_{\ell'=0}^L \sum_{m'=-\ell'}^{\ell'}   \sum_{n'=-\ell'}^{\ell'} \overline{A_{\ell'}^{m'}(r')} \overline{U_{m'n'}^{\ell'} (R)}  e^{-\ii n'\varphi'} N_{\ell'}^{n'}\Big)\\
&\quad\quad \times \Big(\sum_{p=0}^P \sum_{u=-p}^p B_{p,u} U_{u0}^p(R)\Big)\mathrm{d}R.
\end{align*}
The product of two Wigner matrix entries can be expressed as a linear combination of Wigner matrix entries \cite[Eqn. 9.64]{chirikjian2016harmonic}, 
\begin{align}\label{eq:prod_Wigner}
U_{mn}^{\ell}(R) U_{m'n'}^{\ell'} (R) = \sum_{\ell''=|\ell-\ell'|}^{\ell+\ell'} \mathcal{C}_{\ell''} (\ell,\ell',m,m',n,n')U_{m+m',n+n'}^{\ell''} (R),
\end{align} 
where 
\begin{align}\label{eq:myCG}
\mathcal{C}_{\ell''} (\ell,\ell',m,m',n,n')=C(\ell,m;\ell',m'|\ell'',m+m') C(\ell,n;\ell',n'|\ell'',n+n'),
\end{align}
is the product of two Clebsch-Gordan coefficients. This product is nonzero only if $(\ell,\ell',\ell'')$ satisfy the triangle inequalities $|\ell-\ell'|\leq \ell''\leq \ell+\ell'$. Note that we also use the fact that the Clebsch-Gordan coefficient $C(\ell,m;\ell',m'|\ell'', m'')$ is nonzero only if $m''=m+m'$ to simplify the formula in \eqref{eq:prod_Wigner} \cite[Eqn 2.41]{bohm2013quantum}. Using the identity $\overline{U^a_{b,c}(R)} = U^{a}_{-b,-c}(R)$, we obtain
\begin{align*}
&\int_{\mathsf{SO}(3)} U_{mn}^\ell(R) \overline{U_{m'n'}^{\ell'}(R)} U_{u0}^p(R) \mathrm{d}R\\
&=\int_{\mathsf{SO}(3)} (-1)^{m'+n'} U_{mn}^\ell(R)  U_{-m',-n'}^{\ell'}(R) U_{u0}^p(R) \mathrm{d}R\\
&=\int_{\mathsf{SO}(3)} (-1)^{m'+n'}\sum_{\ell''=|\ell-\ell'|}^{\ell+\ell'} \mathcal{C}_{\ell''} (\ell,\ell',m,-m',n,-n') U_{m-m',n-n'}^{\ell''} (R) U_{u0}^p(R) \mathrm{d}R\\
&=\int_{\mathsf{SO}(3)} (-1)^{m+n}\sum_{\ell''=|\ell-\ell'|}^{\ell+\ell'} \mathcal{C}_{\ell''} (\ell,\ell',m,-m',n,-n')  \overline{U_{-m+m',-n+n'}^{\ell''} (R)} U_{u0}^p(R) \mathrm{d}R\\
&= (-1)^{m+n}\sum_{\ell''=|\ell-\ell'|}^{\ell+\ell'} \mathcal{C}_{\ell''} (\ell,\ell',m,-m',n,-n')  \frac{1}{2\ell''+1} \1_{\ell''=p}\1_{m'-m=u}\1_{n=n'}.
\end{align*}
It follows that
\begin{align*}
&m_2(r,\varphi,r',\varphi')\\
&= \sum_{\ell=0}^L  \sum_{\ell'=0}^L \sum_{m=-\ell}^{\ell}\sum_{m'=-\ell'}^{\ell'}\sum_{n=-\ell}^{\ell} \sum_{n'=-\ell'}^{\ell'} e^{\ii n\varphi} e^{-\ii n'\varphi'} N_\ell^n N_{\ell'}^{n'} A_{\ell}^m(r) \overline{A_{\ell'}^{m'}(r')} \\
&  \times \sum_{p=0}^P\sum_{u=-p}^p \sum_{\ell''=|\ell-\ell'|}^{\ell+\ell'} B_{p,u} (-1)^{m+n}  \frac{\mathcal{C}_{\ell''} (\ell,\ell',m,-m',n,-n')}{2\ell''+1} \1_{p=\ell''}\1_{u=m'-m}\1_{n=n'}.
\end{align*}
Rearranging the order of summations gives
\begin{align*}
&m_2(r,\varphi,r',\varphi')\\
&=\sum_{n=-L}^{L} \sum_{n'=-L}^{L} \sum_{\ell=|n|}^L  \sum_{\ell'=|n'|}^L \sum_{m=-\ell}^{\ell}\sum_{m'=-\ell'}^{\ell'} e^{\ii n\varphi} e^{-\ii n'\varphi'} N_\ell^n N_{\ell'}^{n'} A_{\ell}^m(r) \overline{A_{\ell'}^{m'}(r')} \\
&  \times \sum_{p=0}^P\sum_{u=-p}^p \sum_{\ell''=|\ell-\ell'|}^{\ell+\ell'} B_{p,u} (-1)^{m+n}  \frac{\mathcal{C}_{\ell''} (\ell,\ell',m,-m',n,-n')}{2\ell''+1} \1_{p=\ell''}\1_{u=m'-m}\1_{n=n'}.
\end{align*}
This further gives
\begin{align*}
&m_2(r,\varphi,r',\varphi')\\
&= \sum_{n=-L}^{L} \sum_{\ell=|n|}^L  \sum_{\ell'=|n|}^L e^{\ii n(\varphi-\varphi')} N_\ell^n N_{\ell'}^{n}\\
&\times \sum_{m=-\ell}^{\ell}\sum_{m'=-\ell'}^{\ell'} \sum_{p=0}^P\sum_{u=-p}^p \sum_{\ell''=|\ell-\ell'|}^{\ell+\ell'} A_{\ell}^m(r) \overline{A_{\ell'}^{m'}(r')}  B_{p,u} (-1)^{m+n} \mathcal{C}_{\ell''} (\ell,\ell',m,-m',n,-n) \frac{\1_{p=\ell''}\1_{u=m'-m}}{2\ell''+1} \\
&= \sum_{n=-L}^{L} \sum_{\ell=|n|}^L  \sum_{\ell'=|n|}^L e^{\ii n(\varphi-\varphi')} N_\ell^n N_{\ell'}^{n} \\
& \quad \quad  \times \sum_{\ell''=|\ell-\ell'|}^{\min\{\ell+\ell',P\}} \sum_{\substack{-\ell\leq m\leq \ell \\ -\ell'\leq m'\leq \ell' \\ |m'-m|\leq \ell''}}  (-1)^{m+n} \mathcal{C}_{\ell''} (\ell,\ell',m,-m',n,-n) \frac{A_{\ell}^m(r) \overline{A_{\ell'}^{m'}(r')}}{2\ell''+1} B_{\ell'',m'-m}.
\end{align*}
According to \cite[Lemma D.1]{fan2024maximum}, we get $N_{\ell}^n=0$ for odd $\ell+n$, which further implies
\begin{align}\label{eq:2nd_moment}
&m_2(r,\varphi,r',\varphi')\notag\\
&= \sum_{n=-L}^{L} e^{\ii n(\varphi-\varphi')} \sum_{\substack{|n|\leq \ell\leq L \\ \ell\equiv n ~ (\text{mod} \ 2)}}  \sum_{\substack{|n|\leq \ell'\leq L \\ \ell'\equiv n ~ (\text{mod} \ 2)}}  N_\ell^n N_{\ell'}^n \notag\\
& \quad \quad  \sum_{\ell''=|\ell-\ell'|}^{\min\{\ell+\ell',P\}} \sum_{\substack{-\ell\leq m\leq \ell \\ -\ell'\leq m'\leq \ell' \\ |m'-m|\leq \ell''}}  (-1)^{m+n}\mathcal{C}_{\ell''} (\ell,\ell',m,-m',n,-n) \frac{A_{\ell}^m(r) \overline{A_{\ell'}^{m'}(r')}}{2\ell''+1} B_{\ell'',m'-m}.
\end{align}
As anticipated by the fact that the rotational distribution $\rho$ is in-plane uniform, the second-order moment depends only on $\varphi-\varphi'$ as shown above. Rearranging the sum and using the notation $\mathcal{N}_{\ell}^n$ as defined in \eqref{eq:def_calN} gives
\begin{align*}
&m_2(r,\varphi,r',\varphi')\notag\\
&= \sum_{n=-L}^{L} e^{\ii n(\varphi-\varphi')} \sum_{\ell=0}^L\sum_{\ell'=0}^L  \mathcal{N}_\ell^n \mathcal{N}_{\ell'}^n \notag\\
&   \sum_{\substack{-\ell\leq m\leq \ell \\ -\ell'\leq m'\leq \ell' }}  
\sum_{\ell''=\max\{|m-m'|, |\ell-\ell'|\}}^{\min\{\ell+\ell',P\}} 
(-1)^{m+n} \frac{\mathcal{C}_{\ell''} (\ell,\ell',m,-m',n,-n)}{2\ell''+1}A_{\ell}^m(r) B_{\ell'',m'-m} \overline{A_{\ell'}^{m'}(r')}.
\end{align*}
Thus, we can write the second-order moment more concisely in the following matrix form:
\begin{align}\label{eq:2nd_moment_matrix}
&m_2(r,\varphi,r',\varphi') = \sum_{n=-L}^{L} e^{\ii n(\varphi-\varphi')} \sum_{0 \leq \ell\leq L }  \sum_{0 \leq \ell' \leq L }   A_{\ell}(r) \mathcal{B}^n_{\ell,\ell'} (A_{\ell'}(r'))^{\mathsf{H}},
\end{align}
where the terms
\begin{align*}
    A_{\ell}(r)\in\C^{1\times (2\ell+1)}, \quad \mathcal{B}^n_{\ell,\ell'}\in\C^{(2\ell+1)\times (2\ell'+1)},
\end{align*}
for $ \ell, \ell' \in \{0,\ldots , L\}$, $n\in\{-L,\ldots,L\}$ are defined as \eqref{eq:def_Al} and \eqref{eq:def_calB}. 

In addition, we show the Hermitian property of $(\mathcal{B}^n_{\ell',\ell})_{m', m}$ for potential application in the paper. By the symmetry property of Clebsh-Gordan coefficients \cite[Eqn. 2.47]{bohm2013quantum},
\begin{equation}\label{eq:cC_symm}
    \begin{split}
    \mathcal{C}_{\ell''} (\ell',\ell,m',-m,n,-n)&= C(\ell',m';\ell,-m|\ell'',-m+m') \cdot C(\ell',n;\ell,-n|\ell'',0)\\
    &= C(\ell,-m;\ell',m'|\ell'',-m+m') \cdot C(\ell,-n;\ell',n|\ell'',0)\\
    &= C(\ell,m;\ell',-m'|\ell'',m-m') \cdot C(\ell,n;\ell',-n|\ell'',0)\\
    &= \mathcal{C}_{\ell''} (\ell,\ell',m,-m',n,-n).
    \end{split}
\end{equation}
Applying \eqref{eq:B_symm} and \eqref{eq:N_symm}, we obtain
\begin{align*}
\overline{(\mathcal{B}^n_{\ell',\ell})_{m', m}} = &\sum_{\ell''=\max\{|m-m'|, |\ell-\ell'|\}}^{\min\{\ell+\ell',P\}}  (-1)^{m'+n}
\mathcal{N}^n_{\ell'} \mathcal{N}^{n}_{\ell} \frac{\mathcal{C}_{\ell''} (\ell',\ell,m',-m,n,-n)}{2\ell''+1} \overline{B_{\ell'',m-m'}}\notag\\
&= \sum_{\ell''=\max\{|m-m'|, |\ell-\ell'|\}}^{\min\{\ell+\ell',P\}} (-1)^{m'+n}
\mathcal{N}^n_{\ell'} \mathcal{N}^{n}_{\ell} \frac{\mathcal{C}_{\ell''} (\ell',\ell,m',-m,n,-n)}{2\ell''+1} (-1)^{m-m'}B_{\ell'',-m+m'}\notag\\
&= \sum_{\ell''=\max\{|m-m'|, |\ell-\ell'|\}}^{\min\{\ell+\ell',P\}} (-1)^{m+n}
\mathcal{N}^n_{\ell} \mathcal{N}^{n}_{\ell'} \frac{\mathcal{C}_{\ell''} (\ell,\ell',m,-m',n,-n)}{2\ell''+1} B_{\ell'',-m+m'}\notag\\
&=(\mathcal{B}^n_{\ell,\ell'})_{m,m'}.
\end{align*}
Hence, we get
\begin{align*}
    \mathcal{B}^n_{\ell,\ell'} = (\mathcal{B}^n_{\ell',\ell})^{\mathsf{H}}.
\end{align*}
Also, according to the definition \eqref{eq:def_calN}, we know $\mathcal{N}_\ell^n\mathcal{N}_{\ell'}^n =0$ whenever $\ell\not\equiv \ell'~(\mathrm{mod}~2)$, which implies for any $-L\leq n\leq L$,
\begin{align*}
    \mathcal{B}^n_{\ell,\ell'}=0,
\end{align*}
whenever $\ell\not\equiv \ell'~(\mathrm{mod}~2)$.

\section{Proofs for Section~\ref{sec:uniqueness}}\label{sec:appendixB}
This appendix provides proofs of auxiliary results needed for the uniqueness theorem. 

\subsection{Proof of Lemma~\ref{lem:symmetry}}
In fact, we claim that the distributions of tomographic projection images $I_R$ with noise removed (that is, \eqref{eq:def_proj_images} without the $\varepsilon(x,y)$ term) as generated by $(\Phi, \rho_1)$ and $(\Phi, \rho_2)$ match the distributions generated by $(\tilde{\Phi}, {\rho_1})$ and $(\tilde{\Phi},\tilde{\rho}_2)$, respectively.  
Consequently, 
$m_k[\Phi, \rho_1] = m_k[\tilde{\Phi}, {\rho_1}]$ and 
$m_k[\Phi, \rho_2] = m_k[\tilde{\Phi}, \tilde{\rho}_2]$ 
for all $k \geq 1$.  In particular,  \eqref{eq:lemma-want} holds as wanted.

Let us justify this claim.  
First compare $(\Phi, \rho_2)$ and $(\tilde{\Phi}, \tilde{\rho}_2)$.
Under $(\tilde{\Phi}, \tilde{\rho}_2)$, the distribution of noiseless tomographic projection images draws with density $\tilde{\rho}_2(R) = \rho_2(J^{\epsilon}SRJ^{\epsilon})$,
\begin{equation}\label{eq:lemma-proof}
\int_{-\infty}^{\infty} (R^{\top} \cdot \tilde{\Phi})(\x) \der x_3 \, = \, \int_{-\infty}^{\infty} {\Phi}(J^{\epsilon} S R\x) \der x_3 \, = \, \int_{-\infty}^{\infty} \Phi(J^{\epsilon} S R J^{\epsilon} J^{\epsilon} \x) \der x_3 \, = \,  \int_{-\infty}^{\infty} \Phi(J^{\epsilon} S R J^{\epsilon}\x) \der x_3.
\end{equation}
Here the last equality in \eqref{eq:lemma-proof} is by a change of variable replacing $x_3$ by $(-1)^{\epsilon}x_3$.
Meanwhile, under $(\Phi, \rho_2)$ the distribution of noiseless tomographic projection images draws 
\begin{equation}
\int_{-\infty}^{\infty} \Phi(J^{\epsilon} S R J^{\epsilon}\x) \der x_3,
\end{equation}
with density $\rho_2(J^{\epsilon}SRJ^{\epsilon})$.  The distributions are identical, as claimed. 
Comparing $(\Phi, \rho_1)$ and $(\tilde{\Phi}, {\rho_1})$, the image distributions likewise are the same, because ${\rho_1}(R) = \rho_1(J^{\epsilon}SRJ^{\epsilon})$ since  $\rho_1$ is the uniform distribution.  This justifies our claim and finishes the proof of the lemma. \qedhere

\subsection{Proof of Lemma~\ref{lem:technical1}}
In fact, we will show that the affine-linear map 
\begin{equation} \label{eq:map-nunderline}
B_{2L} \mapsto \mathcal{B}^{\underline{n}}_{L,L}
\end{equation}
is injective, where $\underline{n}$ is the element of $\{0, 1\}$ satisfying $\underline{n} \equiv L \,\, (\textup{mod } 2)$. 
By \eqref{eq:def_calB}, the linear part of \eqref{eq:map-nunderline} (i.e., dropping an additive constant depending only $B_{p}$ with $p \leq 2L-2$) reads:
\begin{equation*}
B_{2L} \,\,  \mapsto \,\, (\mathcal{B}^{\underline{n}}_{L,L})_{m,m'}= (-1)^{m+\underline{n}}\mathcal{N}^{\underline{n}}_{L} \mathcal{N}^{\underline{n}}_{L}  \frac{\mathcal{C}_{2L} (L,L,m,-m',\underline{n},-\underline{n})} {4L+1}B_{2L,m'-m}.
\end{equation*}
By Lemmas~\ref{lem:nonvanish-CG} and \ref{lem:nonvanish-N}, the coefficient of $B_{2L, m'-m}$ is nonzero for all $-L \leq m,m' \leq L$.  Clearly, \eqref{eq:map-nunderline} is injective and so is \eqref{eq:allBnLL}. \qedhere

\subsection{Proof of Lemma~\ref{lem:technical2}}
Since $\left(\mathcal{B}^{n}_{L, \ell'}\right)$ depend polynomially on $\left(B_p\right)$ and full column rank of \eqref{eq:big-matrix} is a Zariski-open condition \cite{sommese2005numerical}, it is enough to exhibit a single instance of $B_p$ such that \eqref{eq:big-matrix} has full column rank.  Furthermore, in producing such an instance we can temporarily drop the conjugate symmetry condition in \eqref{eq:constraints_on_B} on $\left(B_p\right)$, because this condition defines a dense subset with respect to the complex Zariski topology.  Moreover, we can also temporarily drop the normalization condition in \eqref{eq:constraints_on_B}, because $\left(\mathcal{B}^{n}_{L, \ell'}\right)$ depend homogeneously on $\left(B_p\right)$.

Now, let us produce a desired instance.  If $L=4$ or $L=5$,  randomly generate $(B_p)$ and verify that \eqref{eq:big-matrix} has rank $2L+1$ (to be fully rigorous, in exact arithmetic in a finite extension of the rational numbers $\mathbb{Q}$ using the formulas for $\mathcal{C}_{\ell''}(\ell, \ell', m, m',n,n')$ and  $\mathcal{N}^n_{\ell}$ in the proofs of Lemmas~\ref{lem:nonvanish-CG} and \ref{lem:nonvanish-N}).  
Meanwhile, for $L \geq 6$ choose $\left(B_p\right)$ as follows:
\begin{align*}
B_{p,u} = \begin{cases}  \textup{nonzero and sufficiently large in magnitude if } (p,u) = (2L-2, 2) \\
\textup{nonzero and sufficiently small in magnitude if }  (p,u) = (2L-4, -2L+7) \\
0 \textup{ for all other } (p, u).\\ \end{cases}
\end{align*}
In \eqref{eq:big-matrix}, consider the submatrix $[\mathcal{B}^{\underline{n}}_{L, L-2} | \mathcal{B}^{\underline{n}}_{L,L-4}] \in \mathbb{C}^{(2L+1) \times (4L-10)}$ where $\underline{n} \in \{0,1\}$ such that $\underline{n} \equiv L \,\, (\textup{mod } 2)$.  The rows of the submatrix are indexed by $m \in \{-L, -L +1, \ldots, L\}$.  The columns of its first and second block are indexed by $m' \in \{-(L-2), -(L-2)+1, \ldots, L-2\}$ and $m' \in \{-(L-4), -(L-4)+1, \ldots, L-4\}$ respectively.  With the above choice of $(B_p)$, the block $\mathcal{B}^{\underline{n}}_{L, L-2}$ has support contained in the main diagonal where $m'-m = -(L-2) - (-L) = 2$ and the off-diagonal where $m'-m = -(L-2)-(L-5) = -2L+7$ by \eqref{eq:def_calB}.  Moreover, the entries in the main diagonal are nonzero by Lemmas~\ref{lem:nonvanish-CG} and \ref{lem:nonvanish-N}, and larger in magnitude than the entries in the off-diagonal.  Meanwhile, $\mathcal{B}^{\underline{n}}_{L,L-4}$ is supported on its off-diagonal where $m'-m=-(L-4)-(L-3)=-2L+7$, and these entries are  nonzero by Lemmas~\ref{lem:nonvanish-CG} and \ref{lem:nonvanish-N}.  
It follows that the leftmost $(2L+1) \times (2L+1)$ submatrix of $[\mathcal{B}^{\underline{n}}_{L, L-2} | \mathcal{B}^{\underline{n}}_{L,L-4}]$ is columnwise diagonally dominant, and therefore of full rank.    This finishes the case $L \geq 6$.

\subsection{Nonvanishing of constants}

\begin{lemma} \label{lem:nonvanish-CG}
Let integers $\ell, \ell', m, m', n, n'$ satisfy $\ell \geq |m|,|n|$ and $\ell' \geq |m'|,|n'|$.  Then  the constant $\mathcal{C}_{\ell + \ell'}(\ell, \ell', m, m', n, n')$  \eqref{eq:myCG0} is nonzero.
\end{lemma}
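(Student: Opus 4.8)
The plan is to reduce the statement to the nonvanishing of two \emph{stretched} Clebsch--Gordan coefficients, i.e.\ coefficients in which the total angular momentum attains its maximal value $\ell'' = \ell + \ell'$. By the definition \eqref{eq:myCG0},
\[
\mathcal{C}_{\ell+\ell'}(\ell,\ell',m,m',n,n') = C(\ell,m;\ell',m'\,|\,\ell+\ell',m+m')\cdot C(\ell,n;\ell',n'\,|\,\ell+\ell',n+n'),
\]
so it suffices to prove that each of the two factors is nonzero separately; the result then follows since a product of nonzero scalars is nonzero.

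The key input I would invoke is the classical closed-form expression for the stretched coefficient (available in the references already cited for Clebsch--Gordan coefficients, \cite{chirikjian2016harmonic,bohm2013quantum}):
\[
C(\ell,m;\ell',m'\,|\,\ell+\ell',m+m') = \sqrt{\frac{(2\ell)!\,(2\ell')!}{(2\ell+2\ell')!}\cdot\frac{(\ell+\ell'+m+m')!\,(\ell+\ell'-m-m')!}{(\ell+m)!\,(\ell-m)!\,(\ell'+m')!\,(\ell'-m')!}}.
\]
The only thing to check is that the radicand is a well-defined strictly positive rational number. The hypotheses $\ell \ge |m|$ and $\ell' \ge |m'|$ ensure that $\ell\pm m$ and $\ell'\pm m'$ are nonnegative integers, and the bound $|m+m'|\le\ell+\ell'$ (automatic from these) ensures $\ell+\ell'\pm(m+m')\ge 0$. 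Hence the expression under the square root is a ratio of factorials of nonnegative integers and is strictly positive, so the coefficient is strictly positive and in particular nonzero. Repeating verbatim with $n,n'$ in place of $m,m'$, and using $\ell\ge|n|$, $\ell'\ge|n'|$, shows the second factor is also nonzero.

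I do not anticipate a genuine obstacle here; the content is entirely the observation that the top-weight Clebsch--Gordan coefficient never vanishes. The only point requiring mild care is to match the paper's index convention $C(\ell,m;\ell',m'\,|\,\ell'',m+m')$ to the closed form above and to confirm the nonnegativity of all factorial arguments. Should one prefer to avoid quoting the formula, the same conclusion follows by a short induction: the extremal state has coefficient $1$, and repeated application of the total lowering operator $J_- = J_-^{(1)}+J_-^{(2)}$ generates every stretched coefficient with a strictly positive rational prefactor, so no zero is ever introduced.
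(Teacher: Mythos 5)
Your proof is correct and takes essentially the same approach as the paper: both arguments reduce to showing that each stretched Clebsch--Gordan factor $C(\ell,m;\ell',m'\,|\,\ell+\ell',m+m')$ is nonzero via an explicit formula. The only cosmetic difference is that the paper starts from the general Racah sum and observes that for $\ell''=\ell+\ell'$ it collapses to the single $k=0$ term, whereas you quote the resulting closed form for the top-weight coefficient directly; the factorial-nonnegativity checks you perform are exactly the ones needed.
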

\begin{proof}
For all septuples of integers satisfying 
\begin{align*}
\ell, \ell', \ell'' \geq 0, \quad  |\ell - \ell'| \leq \ell'' \leq \ell + \ell', \quad \ell \geq |m|, |n| \quad \text{and} \quad \ell' \geq |m'|, |n'|,
\end{align*}
by \eqref{eq:myCG0} 
$\mathcal{C}_{\ell''} (\ell,\ell',m,m',n,n') = C(\ell,m;\ell',m'|\ell'',m+m') C(\ell,n;\ell',n'|\ell'',n+n')$; by \cite{bohm2013quantum}  
\begin{align*}
&C(\ell,m;\ell',m'|\ell'',m+m') \\[4pt] =& \,\, \sqrt{\frac{(2\ell'' + 1)(\ell + \ell' - \ell'')!(\ell + \ell'' - \ell')!(\ell' + \ell'' - \ell)!}{(\ell+\ell'+\ell''+1)!}} \\[2pt]
& \times \sqrt{(\ell-m)!(\ell+m)!(\ell'-m')!(\ell'+m')!(\ell''-m'')!(\ell''+m'')!} \\[2pt]
& \times \sum_k \frac{(-1)^k}{k!(\ell+\ell'-\ell''-k)!(\ell-m-k)!(\ell'+m'-k)!(\ell''-\ell'+m+k)!(\ell''-\ell-m'+k)!}
\end{align*}
where the summation is over all integers $k$ such that the argument of every factorial is nonnegative; and \cite{bohm2013quantum} the analogous formula for  $C(\ell, n; \ell', n' | \ell'', n+ n')$ holds.
In the special case $\ell'' = \ell + \ell'$, the summations over $k$ collapse to one term where $k=0$.  It follows $C(\ell,m;\ell',m'|\ell + \ell',m+m')  \neq 0$ and $C(\ell,n;\ell',n'|\ell + \ell',n+n') \neq 0$, hence $\mathcal{C}_{\ell + \ell'}(\ell, \ell', m, m', n, n') \neq 0$.  
\end{proof}

\begin{lemma} \label{lem:nonvanish-N}
Let integers $\ell$ and $n$ satisfy $\ell \geq |n|$ and $\ell \equiv n \,\, (\textup{mod } 2)$. Then the constant $\mathcal{N}^n_{\ell}$  \eqref{eq:def_calN} is nonzero.
\end{lemma}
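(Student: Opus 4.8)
The plan is to reduce the nonvanishing of $\mathcal{N}^n_{\ell}$ to the nonvanishing of the single factor $P_\ell^n(0)$, and then to settle the latter by a parity-and-coefficient argument. First I would note that under the two hypotheses $\ell \geq |n|$ and $\ell \equiv n \pmod 2$, both indicator functions in \eqref{eq:def_calN} evaluate to $1$, so that $\mathcal{N}^n_{\ell} = N^n_{\ell}$. In the formula \eqref{eq:formula_N} the prefactor $\sqrt{(2\ell+1)/(4\pi)}$ is manifestly positive, and the radical $\sqrt{(\ell-n)!/(\ell+n)!}$ is a well-defined positive number because $\ell \geq |n|$ guarantees $\ell - n \geq 0$ and $\ell + n \geq 0$. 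Hence the problem collapses to showing $P_\ell^n(0) \neq 0$.

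To handle $P_\ell^n(0)$ I would first reduce to the case $n \geq 0$: the relation $P_\ell^{-n}(x) = (-1)^n \tfrac{(\ell-n)!}{(\ell+n)!} P_\ell^n(x)$ shows that $P_\ell^{-n}(0)$ is a nonzero scalar multiple of $P_\ell^n(0)$, and since $|n|$ has the same parity as $n$ the hypotheses are preserved. For $n \geq 0$ I would invoke the standard relation expressing $P_\ell^n$ through $n$ derivatives of the Legendre polynomial $P_\ell$, namely $P_\ell^n(x) = (-1)^n (1-x^2)^{n/2} \frac{d^n}{dx^n} P_\ell(x)$ up to the chosen phase convention. Evaluating at $x = 0$ gives $P_\ell^n(0) = \pm\, n!\,[x^n] P_\ell(x)$, a nonzero constant times the coefficient of $x^n$ in $P_\ell$.

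It then remains to verify that this coefficient is nonzero, which is exactly where the parity hypothesis does the work. Using the explicit expansion $P_\ell(x) = 2^{-\ell} \sum_{k=0}^{\lfloor \ell/2 \rfloor} (-1)^k \binom{\ell}{k} \binom{2\ell-2k}{\ell} x^{\ell-2k}$, the monomials that appear are precisely the $x^{\ell - 2k}$ with $0 \leq k \leq \lfloor \ell/2 \rfloor$, that is, exponents of the same parity as $\ell$ lying in $[0,\ell]$. Since $\ell \equiv n \pmod 2$ and $0 \leq n \leq \ell$, the exponent $n$ is of this form with $k = (\ell-n)/2$, and the corresponding coefficient $2^{-\ell}(-1)^k \binom{\ell}{k}\binom{2\ell-2k}{\ell}$ is a product of strictly positive binomials times a sign, hence nonzero. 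Therefore $P_\ell^n(0) \neq 0$, and consequently $\mathcal{N}^n_{\ell} \neq 0$.

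The argument is essentially routine; the only point requiring genuine care is confirming that no cancellation is hidden in the Legendre coefficient, which the closed form settles at once (and which also explains why the parity hypothesis is exactly the right condition, as $P_\ell^n(0)$ vanishes when $\ell - n$ is odd). An alternative and even shorter route would cite the special value of the Ferrers/associated Legendre function at the origin \cite[\S14.5]{dlmf}, written as a ratio of Gamma functions, and observe that under $\ell \equiv n \pmod 2$ with $\ell \geq |n|$ the Gamma arguments in the denominator avoid the poles while the numerator is nonzero, so the value is finite and nonzero.
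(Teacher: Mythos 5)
Your proof is correct and follows essentially the same route as the paper's: both reduce the claim to $P_\ell^n(0)\neq 0$ and settle it by extracting an explicit nonzero coefficient (a sign times positive binomials) via a Rodrigues-type derivative formula. The only cosmetic difference is that the paper differentiates $(x^2-1)^\ell$ a total of $\ell+n$ times, which handles negative $n$ uniformly, whereas you differentiate $P_\ell$ itself $n$ times after first reducing to $n\geq 0$ via the reflection formula.
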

\begin{proof}
Under the stated conditions on $\ell$ and $n$, by \eqref{eq:def_calN}  
\begin{equation*}
    \mathcal{N}^n_{\ell} = \sqrt{\frac{2\ell+1}{4\pi}} \sqrt{\frac{(\ell-n)!}{(\ell+n)!}} P^{n}_{\ell}(0),
\end{equation*}  
where $P^{n}_{\ell}(x)$ denotes an associated Legendre polynomial.
By \cite{chirikjian2016harmonic}, for $-1 \leq x \leq 1$ it holds
\begin{equation*}
P^{n}_{\ell}(x) = \frac{(-1)^n}{2^{\ell} \ell!} (1-x^2)^{n/2} \frac{\der ^{\ell+n}}{\der x^{\ell+n}}(x^2-1)^{\ell}.
\end{equation*}
The coefficient of $x^{\ell+n}$ in $(x^2-1)^{\ell}$ is $\binom{\ell}{(\ell+n)/2} (-1)^{(\ell -n)/2}$, therefore $\frac{\der^{\ell+n}}{\der x^{\ell+n}}(x^2-1)^{\ell}|_{x=0} \, = (\ell+n)! \binom{\ell}{(\ell+n)/2} (-1)^{(\ell -n)/2}$.  Hence $P^{n}_{\ell}(0) = \frac{(-1)^{(\ell+n)/2}}{2^{\ell} \ell!} (\ell+n)! \binom{\ell}{(\ell+n)/2}$ and
\begin{equation}
\mathcal{N}^{n}_{\ell} \, = \, \frac{(-1)^{(\ell+n)/2}}{2^{\ell}} \, \sqrt{\frac{2\ell+1}{4\pi}} \, 
\frac{\sqrt{(\ell+n)!(\ell-n)!}}{(\frac{\ell+n}{2})! (\frac{\ell-n}{2})!}. 
\end{equation}
In particular, $\mathcal{N}^{n}_{\ell}$ is nonzero as claimed.
\end{proof}

\section{Proofs for Section~\ref{sec:stability}}\label{sec:appendixC}

\subsection{Proof of Proposition~\ref{prop:Kam_mat_stability}}\label{sec:Kam_mat_stability_proof}
\begin{proof}
Recall from Proposition~\ref{prop:2nd_moment} that
\begin{align*}
m_2[\Phi,\rho](r,r',\psi) &= \sum_{n=-L}^{L} e^{\ii n\psi} \sum_{a=0}^L  \sum_{b=0}^L   A_{a}(r) \mathcal{B}^n_{a,b} (A_{b}(r'))^{\mathsf{H}}.
\end{align*}
Plugging this into \eqref{eq:def_Kam_general} yields
\begin{align}
C_\ell[\Phi,\rho](r,r')=& 2\pi(2\ell+1)\!\cdot\!\int_0^{\pi} \sum_{n=-L}^{L} e^{\ii n\psi} \sum_{a=0}^L  \sum_{b=0}^L   A_{a}(r) \mathcal{B}^n_{a,b} (A_{b}(r'))^{\mathsf{H}}  P_\ell(\cos \psi) \sin (\psi )d\psi\notag\\
=& \sum_{n=-L}^{L} \alpha_{\ell}^n \sum_{a=0}^L  \sum_{b=0}^L   A_{a}(r) \mathcal{B}^n_{a,b} (A_{b}(r'))^{\mathsf{H}},\label{eq:Cl_1}  
\end{align}
where $\alpha_\ell^n$ is defined as \eqref{eq:def_alpha}. Recall from \eqref{eq:def_calB} that, for  $a,b \in \{0,\ldots , L\}$, $n \in \{-L, \ldots, L\}$, and $-\ell\leq m,m'\leq \ell$,
\begin{align*}
(\mathcal{B}^n_{a,b})_{m,m'}&= (-1)^{m+n}\mathcal{N}^n_{a} \mathcal{N}^n_{b} \sum_{\ell''=\max\{|m-m'|, |a-b|\}}^{\min\{a+b,P\}} 
\!\!\!\!\!\! \frac{\mathcal{C}_{\ell''} (a,b,m,-m',n,-n)} {2\ell''+1}B_{\ell'',m'-m}.
\end{align*}
Renaming $p=\ell''$ and $u=m'-m$ gives
\begin{align*}
\mathcal{B}^n_{a,b}=\sum_{p=0}^{\min\{2L,P\}}\sum_{u=-p}^p B_{p,u} E_{a,b;\, p,u}^{n},
\end{align*}
where $E_{a,b;\, p,u}^{n}$ is defined as \eqref{eq:def_E}. Plugging this into \eqref{eq:Cl_1} yields
\begin{align*}
C_\ell[\Phi,\rho](r,r')=&\sum_{p=0}^{\min\{2L,P\}}\sum_{u=-p}^p B_{p,u} \Big(\sum_{n=-L}^{L} \alpha_{\ell}^n \sum_{a=0}^L  \sum_{b=0}^L   A_{a}(r) E_{a,b;\, p,u}^{n} (A_{b}(r'))^{\mathsf{H}}\Big)\\
=&\sum_{p=0}^{\min\{2L,P\}}\sum_{u=-p}^p B_{p,u} H_{\ell;p,u}[\Phi](r,r')\\
=&\sum_{p=0}^{2L}\sum_{u=-p}^p B_{p,u} H_{\ell;p,u}[\Phi](r,r'),
\end{align*}
where $H_{\ell;p,u}[\Phi](r,r')$ is defined as \eqref{eq:def_Kam_response} and the last equality follows from $P\geq 2L$.

By definition,
\begin{align*}
\beta_{2L}(\rho)= \|\rho-\rho_1^*\|_{L_2(\SO(3)}=&\Big(\int_{\SO(3)}\big| \sum_{p=1}^{P} \sum_{u=-p}^p B_{p,u} U_{u0}^p(R) \big|^2\, \der R\Big)^{1/2}\\
=&\Big(\sum_{p=1}^{2L}\sum_{u=-p}^p \frac{|B_{p,u}|^2}{2p+1} \Big)^{1/2},
\end{align*}
where the last equality follows from the orthogonality property of Wigner U-matrices. Note that
\begin{align*}
H_{\ell;0,0}[\Phi](r,r')=C_\ell[\Phi,\rho_1^*](r,r'),
\end{align*}
which implies
\begin{align*}
\Delta_\ell(\rho)=\sum_{p=1}^{2L}\sum_{u=-p}^p B_{p,u} H_{\ell;p,u}[\Phi].
\end{align*}
Hence, the Cauchy-Schwarz inequality gives \eqref{eq:perturbation_Kam}.

\end{proof}

\subsection{Proof of Theorem~\ref{thm:stability}}\label{sec:stability_proof}

We will use the following standard factor perturbation lemma, which is the complex-valued analogue of \cite[Lemma~5.4]{tu2016low}. For completeness, we provide the proof.

\begin{lemma}[Factor perturbation]
Let $X,Y\in\mathbb C^{m\times q}$, and assume that $X$ has full column rank with
$s:=\sigma_{\min}(X)>0$. Define
\[
\operatorname{dist}_F(Y,X):=\min_{U\in\mathsf U(q)}\|Y-XU\|_F.
\]
Then
\[
\operatorname{dist}_F(Y,X)
\leq
c_0\,\frac{\|YY^{\mathsf{H}}-XX^{\mathsf{H}}\|_F}{s},
\qquad
c_0:=\frac{1}{\sqrt{2(\sqrt2-1)}}.
\]
\end{lemma}

\begin{proof}[Proof of the lemma]
Let $U_*$ solve the Procrustes problem
\[
U_*\in\arg\min_{U\in\mathsf U(q)}\|Y-XU\|_F.
\]
Replacing $X$ by $XU_*$ does not change $XX^{\mathsf{H}}$ or $\sigma_{\min}(X)$, so we may
assume that $U_*=I$.  Set
\[
H:=Y-X.
\]
The Procrustes optimality condition implies $Y^{\mathsf{H}} X$ is a positive semidefinite Hermitian matrix. Hence
\[
H^{\mathsf{H}} X=Y^{\mathsf{H}} X-X^{\mathsf{H}} X
\]
is Hermitian. Write
\[
B:=H^{\mathsf{H}} H,
\qquad
D:=X^{\mathsf{H}} X,
\qquad
S:=H^{\mathsf{H}} X.
\]
Then $B,D$ are positive semidefinite, $S$ is Hermitian, and
\[
YY^{\mathsf{H}}-XX^{\mathsf{H}}=XH^{\mathsf{H}}+HX^{\mathsf{H}}+HH^{\mathsf{H}}.
\]
A direct expansion gives
\[
\begin{aligned}
\|YY^{\mathsf{H}}-XX^{\mathsf{H}}\|_F^2
=
\Tr\Bigl(
&B^2+4BS+2S^2+2DB
\Bigr).
\end{aligned}
\]
Let
\[
\eta:=2(\sqrt2-1)s^2.
\]
Then
\[
\begin{aligned}
\|YY^{\mathsf{H}}-XX^{\mathsf{H}}\|_F^2-\eta\|H\|_F^2
=
\Tr\Bigl((B+\sqrt2 S)^2+B\bigl((4-2\sqrt2)S+2D-\eta I\bigr)
\Bigr).
\end{aligned}
\]
The first trace term is nonnegative because $B+\sqrt2 S$ is Hermitian.  For the
second term, use $S=Y^{\mathsf{H}} X-D$ to obtain
\[
(4-2\sqrt2)S+2D-\eta I
=
(4-2\sqrt2)Y^H X+2(\sqrt2-1)D-\eta I.
\]
Since $Y^{\mathsf{H}} X$ is positive semidefinite and $D=X^{\mathsf{H}} X\succeq s^2 I$, the right-hand side is positive
semidefinite.  Therefore the second trace term is also nonnegative, and
\[
\|YY^{\mathsf{H}}-XX^{\mathsf{H}}\|_F^2
\geq
2(\sqrt2-1)s^2\|H\|_F^2.
\]
Since $\|H\|_F=\operatorname{dist}_F(Y,X)$ under the chosen alignment, the lemma follows.
\end{proof}

We now prove the theorem. 
\begin{proof}

Apply the lemma above with
\[
X=A_\ell(\rho_1^*)\equiv A_\ell,
\qquad
Y=A_\ell(\rho).
\]
Since $A_\ell A_\ell^{\mathsf{H}}=C_\ell[\Phi,\rho_1^*]$, we obtain
\[
\operatorname{dist}_F( A_\ell(\rho),A_\ell)
\leq
c_0\,
\frac{
\|A_\ell(\rho)(A_\ell(\rho))^{\mathsf{H}}-C_\ell[\Phi,\rho_1^*]\|_F
}{s_\ell}.
\]
By construction,
\[
A_\ell(\rho)(A_\ell(\rho))^{\mathsf{H}}
=
\mathcal{P}_{2\ell+1}^+(C_\ell[\Phi,\rho]).
\]
Because
\[
C_\ell[\Phi,\rho_1^*]=A_\ell A_\ell^{\mathsf{H}}
\]
is positive semidefinite and has rank at most $2\ell+1$, it is an admissible competitor in
the definition of $\mathcal{P}_{2\ell+1}^+(C_\ell[\Phi,\rho])$. Therefore
\[
\big\|\mathcal{P}_{2\ell+1}^+(C_\ell[\Phi,\rho])-C_\ell[\Phi,\rho]\big\|_F
\leq
\|C_\ell[\Phi,\rho_1^*]-C_\ell[\Phi,\rho]\|_F
=
\|\Delta_\ell(\rho)\|_F.
\]
By the triangle inequality,
\[
\begin{aligned}
\big\|A_\ell(\rho)(A_\ell(\rho))^{\mathsf{H}}-C_\ell[\Phi,\rho_1^*]\big\|_F
&=
\big\|\mathcal{P}_{2\ell+1}^+(C_\ell[\Phi,\rho])-C_\ell[\Phi,\rho_1^*]\big\|_F \\
&\leq
\big\|\mathcal{P}_{2\ell+1}^+(C_\ell[\Phi,\rho])-C_\ell[\Phi,\rho]\big\|_F
+
\|C_\ell[\Phi,\rho]-C_\ell[\Phi,\rho_1^*]\|_F \\
&\leq
2\|\Delta_\ell(\rho)\|_F.
\end{aligned}
\]
Substituting this into the factor bound gives
\[
\operatorname{dist}_F(A_\ell(\rho),A_\ell)
\leq
2c_0\,\frac{\|\Delta_\ell(\rho)\|_F}{s_\ell}.
\]
Under the conditions of Proposition~\ref{prop:Kam_mat_stability}, we further have
\[
\operatorname{dist}_F(A_\ell(\rho),A_\ell)
\le
\frac{2c_0}{s_\ell}\,
\Gamma_\ell(\Phi)\,
\beta_{2L}(\rho).
\]
    
\end{proof}

\end{document}